\def\pageoption{2}
\newcites{supp}{References}
\newif\iftrackchanges
\setlist[enumerate]{leftmargin=.5in}
\setlist[itemize]{leftmargin=.5in}
\crefname{hypothesis}{Hypothesis}{Hypotheses}
\crefname{fact}{Fact}{Facts}
\title{Batched Single-Index Global Multi-Armed Bandits with Covariates}
\author{
  Sakshi Arya\thanks{Equal contribution. Department of Mathematics, Applied Mathematics and Statistics, Case Western Reserve University
  (\email{sxa1351@case.edu}).}
  \and 
  Hyebin Song\thanks{Equal contribution. Department of Statistics, Pennsylvania State University
  (\email{hps5320@psu.edu}).}
}
\newcommand{\E}{\mathbb{E}}
\renewcommand{\P}{\mathbb{P}}
\newcommand{\R}{\mathbb{R}}
\newcommand{\bB}{\mathbb{B}}
\renewcommand{\L}{\mathcal{L}}
\newcommand{\A}{\mathcal{A}}
\newcommand{\B}{\mathcal{B}}
\newcommand{\T}{\mathcal{T}}
\newcommand{\G}{\mathcal{G}}
\newcommand{\K}{\mathcal{K}}
\renewcommand{\S}{\mathcal{S}}
\newcommand{\I}{\mathcal{I}}
\newcommand{\TA}{\mathcal{T}_\mathcal{A}}
\newcommand{\SC}{\mathcal{S}_{C}}
\newcommand{\GC}{\mathcal{G}_{C}}
\newcommand{\Supp}{\operatorname{Supp}}
\newtheorem{assumption}{Assumption}
\begin{document}
\maketitle

\begin{abstract}
The multi-armed bandits (MAB) framework is a widely used approach for sequential decision-making, where a decision-maker selects an arm in each round with the goal of maximizing long-term rewards. In many practical applications, such as personalized medicine and recommendation systems, contextual information is available at the time of decision-making, rewards from different arms are related rather than independent, and feedback is provided in batches.
We propose a novel semi-parametric framework for batched bandits with covariates that incorporates a shared parameter across arms. We leverage the single-index regression (SIR) model to capture relationships between arm rewards while balancing interpretability and flexibility. 
Our algorithm, Batched single-Index Dynamic binning and Successive arm elimination (BIDS), employs a batched successive arm elimination strategy with a dynamic binning mechanism guided by the single-index direction. We consider two settings: one where a pilot direction is available and another where the direction is estimated from data, deriving theoretical regret bounds for both cases. When a pilot direction is available with sufficient accuracy {and the number of arms $K$ is fixed}, our approach achieves minimax-optimal rates (with $d = 1$) for nonparametric batched bandits, 
circumventing the curse of dimensionality. Extensive experiments on simulated and real-world datasets demonstrate the effectiveness of our algorithm compared to the nonparametric batched bandit method introduced by \cite{jiang2025batched}.
\end{abstract}


\section{Introduction}\label{sec: 1_intro}
Sequential decision-making under uncertainty is fundamental in data-driven domains such as healthcare, agriculture, and online services. A foundational framework for this is the multi-armed bandit problem ~\cite{lai1985asymptotically, lai1987adaptive}, which aims to optimize the selection of actions (or arms) to maximize cumulative rewards over time. In this framework, a learner sequentially selects actions and observes their corresponding rewards. In many applications, contextual information (covariates), can significantly enhance decision-making. Incorporating these covariates extends the framework to contextual bandits or multi-armed bandits with covariates (MABC) \cite{perchet2013multi, yang2002randomized}.

Standard MABC approaches often assume independent arms, limiting their applicability in scenarios where playing one arm reveals insights about others, particularly for similar covariates. This shared informativeness is crucial in applications like clinical trials and personalized recommendations. For example, in clinical trials, treatments with similar chemical compositions are likely to exhibit analogous effects on patients with similar profiles (e.g., similar age group or disease severity).  To address this, the Global Multi-Armed Bandit (GMAB) framework was introduced, in which arms share a global parameter and are thus globally informative \cite{atan2015global, globalbanditsTekin, shen2018generalizedGlobalBandit}. However, standard GMAB model assumes known reward functions and cannot accommodate covariate effects, limiting its real-world applicability.

In this work, we address these limitations by introducing the \emph{Global Multi-Armed Bandit with Covariates} (GMABC) framework, which generalizes GMAB by (i) allowing reward functions to be unknown, and (ii) incorporating covariate information. In GMABC, arms are interconnected through a shared global parameter and the functions linking the global parameter to the rewards are unknown and can depend on the covariates.

In the MABC framework, the relationship between rewards and covariates is typically modeled using regression methods, which can be broadly classified as parametric 
\cite{goldenshluger2013linear, Filippi2010_GLM_Bandits, chu2011contextual, yadkori2011_linear, agrawal_TS_Linear_13} or non-parametric \cite{rigollet2010nonparametric, wanigasekara2019nonparametric,arya2023kernel}. 
Parametric methods assume a predefined relationship (such as linear or generalized linear models), offering interpretability and efficiency when correctly specified, but they can perform poorly under model misspecification. There are works that study parametric bandits under misspecification \cite{ghosh2017misspecified} but usually suffer an additional non-vanishing additive factor on the regret upper bound that depends on the degree of misspecification. 

Nonparametric bandits offer greater flexibility than parametric approaches and can model complex covariate-reward relationships. A large body of work has investigated nonparametric bandit models under the assumption that reward functions belong to certain infinite-dimensional function classes, such as the Lipschitz or H\"{o}lder classes \cite{yang2002randomized, perchet2013multi, rigollet2010nonparametric, gur2022smoothness, hu2020smooth}.
Another related research direction explores kernel and neural bandits \cite{valko2013finite, chowdhury2017kernelized, zhu2021pure, zhou2020neural}, where the reward functions are modeled in rich function spaces like reproducing kernel Hilbert spaces (RKHS) or neural networks, with assumptions on the \emph{effective dimensionality} of the covariates. These models allow more complex context-arm interactions, offering greater flexibility at the cost of added complexity. 

While these nonparametric approaches provide modeling flexibility, they come at the cost of computational complexity and reduced interpretability. Moreover, these methods treat arms independently, failing to exploit the shared relationship between covariates and rewards across arms that often exists in real-world applications. To address these limitations, we adopt a semi-parametric approach using the single-index model (SIM)~\cite{li1989regression, ichimura1993semiparametric, hardle1993optimal, kuchibhotla2020efficient, dai2022convergence}, where the expected reward for each arm depends on a one-dimensional projection of the covariates. This single-index model generalizes classical generalized linear models (GLMs) by treating the link function as unknown, offering greater flexibility while preserving interpretability. In contrast to unsupervised techniques such as Principal Component Analysis (PCA), which seek directions that maximize covariate variance irrespective of the outcome, the SIM framework aligns the projection direction with the conditional distribution of the reward. This supervised nature of the index vector estimation is critical in bandit problems, where exploration must be guided by reward-relevant structure rather than input variability alone, and also provides a well-suited framework to leverage the shared covariate-reward relationship across arms.

In many practical scenarios, such as clinical trials, data are collected in batches rather than in a fully sequential manner. For example, clinical trials often proceed in phases, where treatments are allocated for an entire batch and outcomes are analyzed collectively before updating the decision policy. Batched bandits with both fixed and adaptive batch sizes have been studied extensively in the literature ~\cite{perchet_batched2016, Esfandiari_Karbasi_Mehrabian_Mirrokni_2021, batched_thompson_sampling,jin2021almost}. Theoretical work on batched bandits has provided regret guarantees for both parametric \cite{han2020sequential, Ren_etal_glm_batchedbandit2022}   and nonparametric frameworks \cite{Gu_batchedNeuralBandits2024, jiang2025batched, feng2022lipschitz}, highlighting the relevance and challenges in scenarios with a small number of batches $(M \approx 2,3,4,5)$, as often seen in clinical trials.

\paragraph{Our Contributions} 
In this work, we study multi-armed bandits with covariates and shared information across arms in a batched setting. We propose a semi-parametric approach using the single-index model, offering flexibility, interpretability, and a natural framework for parameter sharing. 
To the best of our knowledge, this is the first systematic study of contextual bandits under a sufficient-dimension reduction paradigm using a single-index model structure. Our main contributions are as follows:
\begin{itemize}
    \item \textbf{GMABC Framework:} We introduce the Global Multi-Armed Bandit with Covariates (GMABC) model that leverages shared parameters across arms through a semi-parametric single index framework, allowing model flexibility while mitigating the curse of dimensionality and maintaining model interpretability.
    \item \textbf{BIDS Algorithm:} We propose a Batched single-Index Dynamic Binning and Successive arm elimination (BIDS) algorithm tailored to the batched GMABC setting.
    \item \textbf{Regret Guarantees:}  We derive a minimax lower bound for the batched semi-parametric GMABC problem under the single-index model, quantifying the fundamental difficulty of learning in this setting. We provide  regret guarantees for BIDS in two regimes: (i) when a reliable pilot estimate of the index is available and show that our upper bound is tight up to logarithmic factors {when $K$ (number of arms) is treated as fixed}, and (ii) when the index must be learned from data, characterizing trade-offs between estimation and learning.
    \item \textbf{Practical Implications:} Our analysis yields practical insights into the role of covariates and batch constraints in efficient decision-making under the GMABC model.
\end{itemize}

\paragraph{Related literature} 
Beyond the Global MAB framework, other bandit formulations have been considered for structured learning across arms.  Federated multi-armed bandits \cite{shi2021federated, xia2020multi} treat heterogeneous local models at distributed clients as random realizations of a shared global model, while structured or correlated bandits \cite{van2024optimal, gupta2021multi} assume  rewards lie within a known compact convex set or are linked through a latent random source. While federated bandits are designed for decentralized learning across multiple clients, each with its own local data, GMABC operates in a centralized setting with a single learner leveraging shared structure across arms and covariates. Structured and correlated bandits operate in static, non-contextual environments, whereas GMABC handles contextual, covariate-dependent rewards via a shared single-index projection, rendering those methods unsuitable for this contextual, semi-parametric setting.

A related line of work is the semi-parametric bandits framework \cite{Greenewald2017_actioncentered_semipara, krishnamurthy2018semiparametric, kim_semiparametric_19d}, which differs from our approach in its underlying model structure and the motivation for introducing nonparametric components. These works represent the mean reward function as the sum of a linear function of the arm with a shared parameter and a non-linear perturbation that is independent of the action/arm, treated as a confounder. Unlike the semi-parametric bandits literature, our model allows for non-linear treatment effects through unknown link functions specific to each arm and estimates the shared global parameter using single-index regression.

Another relevant theme is dimension reduction in the MABC framework under other structural assumptions such as sparsity or additivity. For instance, \cite{bastani2020online} introduces a LASSO bandit for high-dimensional covariates. Then, \cite{cai2022stochastic, pmlr-v37-kandasamy15} study additive models, where the regression function is assumed to be a sum of univariate functions of the $d$ individual covariates. Other works on dimension reduction in contextual bandits include \cite{qian2016kernel, li2021regret, li2022simple, li2023dimension,Qian02042024}. 

{We became aware, after initial submission of this paper, of concurrent work \cite{kang2025single} studying contextual bandits using a single-index model. Their setting, however, differs substantially from ours. In their formulation, each (non-batched) round presents $K$ candidate arm feature vectors sampled iid from a common distribution, and rewards are modeled through a common link function $f$ and a shared parameter $\theta^*$ which do not depend on the arm. In contrast, in our setting, a single context vector is observed at each round and the learner chooses among a fixed set of arms. Our framework considers arm-specific link functions under a common index parameter.
}



\section{Problem Setup}\label{sec: 2_problem_setup}
We begin by presenting the problem setup for the \emph{batched global multi-armed bandit with covariates (GMABC)} problem that we will be working with hereafter. 
We assume that we have $d$-dimensional covariates $X_1,X_2,\dots$ such that $X_t \sim \P_X$ i.i.d. for $t=1,\dots,T$. 
{We consider a $K$-armed setting in which the decision maker selects an arm $k \in [K] := \{1,\dots,K\}$ at each time point.}

The model for rewards for each arm $k \in {[K]}$ is given by:
\begin{align}\label{eq: model}
	Y_{t}^{(k)} = g^{(k)}(X_{t}) + \epsilon_t
\end{align} 
for $t=1,\dots,T$,  where $g^{(k)}:\R^d\to \R$ are the mean reward functions, and $\{\epsilon_t\}_{t\ge 0}$ is a sequence of independent mean zero random variables.  Furthermore, we assume the following single index model structure for $g^{(k)}$:
\begin{align}\label{def:sim}
	g^{(k)}(x) = f^{(k)}(x^\top \beta_0) 
\end{align} 
for $k=1,2,\dots,K$, where $f^{(k)}:\R\to\R$ are $1$-dimensional \emph{link functions} and $\beta_0 \in \mathbb{R}^d$ is the unknown \emph{index parameter or direction} shared by both arms. Throughout the paper, we assume $\|\beta_0\|_2=1$ for the identifiability of the parameter. Model \eqref{eq: model} together with \eqref{def:sim} defines the GMABC regression framework for the sequential decision-making problem.

A \emph{policy} $\pi_t:\mathcal{X} \to {[K]}$ for $t=1,\dots,T$ determines an action $A_t \in {[K]}$ at $t$. Based on the chosen action $A_t$, a reward $Y_t^{(A_t)}$ is obtained. In the sequential setting without batch constraints, the policy $\pi_t$ can depend on all the observations $(X_s, Y_s^{(A_s)})$ for $s<t$. In contrast, in a batched setting with $M$ batches, where $0=t_0 < t_1 <\dots<t_{M-1}<t_M = T$, for $t \in [t_i, t_{i+1})$, the policy $\pi_t$ can depend on observations from the previous batches, but not on any observations within the same batch. In other words, policy updates can occur only at the predetermined batch boundaries $t_1,\dots,t_M$. 

Let $\mathcal{G} = \{t_0, t_1,\dots,t_M\}$ represent a partition of time $\{0,1,\dots,T\}$ into $M$ intervals, and $\pi = (\pi_t)_{t=1}^T$ be the sequence of policies applied at each time step.  The overarching objective of the decision-maker is to devise an $M$-batch policy $(\mathcal{G}, \pi)$ that minimizes the expected \textit{cumulative regret}, defined as $\mathcal{R}_T(\pi) = \E[R_T(\pi)]$, where
\begin{equation}
	R_T(\pi) = \sum_{t=1}^T g^{(*)}(X_t) - g^{(\pi_t(X_t))}(X_t) = \sum_{t=1}^T f^{(*)}(X_t^\top\beta_0) - f^{(\pi_t(X_t))}(X_t^\top\beta_0), \label{eq: regret_def}
\end{equation}
and $g^{(*)}(x) = \max_{k\in{[K]}} g^{(k)}(x)$ is the expected reward from the optimal choice of arms given a context $x$. 
The cumulative regret quantifies the gap between the cumulative reward attained by $\pi$ and that achieved by an optimal policy, assuming perfect foreknowledge of the optimal action at each time step.
We make the following  assumptions on the reward functions.
\begin{assumption}[Lipschitz Smoothness]\label{assum: Smoothness}
We assume that the link function $f^{(k)}:\R \to \R$ for each arm is $L$-Lipschitz, i.e., there exists $L > 0$ such that for each $k \in {[K]}$,
\begin{align*}
    |f^{(k)}(u) - f^{(k)}(u^\prime)| \leq L |u - u^\prime|,
\end{align*}
holds for $u,u'\in \R$.
\end{assumption}
{
To state the margin condition, we first introduce additional notation. Denote the second pointwise maximum of the functions $g^{(k)}$, $ k = 1, \dots, K $, by $ g^{(\#)} $; formally, for every $ x \in \mathcal{X} $ where not all arms are simultaneously optimal, i.e., $ \min_k g^{(k)}(x) \ne  g^{(*)}(x) $, we define
\begin{align*}
g^{(\#)}(x) := \max_{k \in [K]} \big\{ g^{(k)}(x) \,:\, g^{(k)}(x) < g^{(*)}(x)\big\},
\end{align*}
which is the second best reward at $x$. When all arms tie for the best, we set $g^{(\#)}(x) := g^{(*)}(x)= g^{(1)}(x)$.
The gap function $ \Delta(x) := g^{(*)}(x)- g^{(\#)}(x) \ge 0$ critically controls the complexity of the problem, as it measures how clearly the optimal arm dominates the next best arm at context $x$.
The next assumption quantifies how often this gap is small.
}

\begin{assumption}[Margin]\label{assum: Margin}
Reward functions satisfy the margin condition with parameter $\alpha > 0$, that is, there exists $\delta_0 \in (0,1)$ and $D_0 > 0$ such that
    \begin{align*}
        \P_X(0 < {\Delta(X)} \leq \delta) \leq D_0 \delta^\alpha,
    \end{align*}
holds for all $\delta \in [0,\delta_0]$.
\end{assumption}

\begin{remark}
The margin parameter measures the complexity of the problem. A small $\alpha$ means that the two functions are quite close to each other in many regions. 
Throughout this paper, we assume that $\alpha \leq 1$, because in the $\alpha > 1$ regime, the context information becomes irrelevant as one arm dominates the other (e.g., see \cite{perchet2013multi}). 
\end{remark}

Let $\bB_2(r;c) = \{v \in \R^d; \|v-c\|_2 \le r\}$ denote the $\ell_2$ ball of radius $r$ centered at $c$.
The next assumption, Assumption \ref{assum: cond_X}, specifies conditions on the distribution of the reward $Y^{(k)}$ and covariate $X$.
\begin{assumption}\label{assum: cond_X}
The reward $Y_t^{(k)}$ satisfies $|Y_t^{(k)}| \leq  1$ for all $t=1,\dots,T,\, k\in {[K]}$. The probability measure $\P_X$ is absolutely continuous with respect to the Lebesgue measure, and its support set $\Supp(\P_X)$ is bounded, i.e., there exists $R_X< \infty$ such that $\Supp(\P_X) \subseteq \bB_2(R_X;0)$. Moreover, there exists $R_0>0$ such that for any $v \in \bB_2(R_0;\beta_0)$ and $\|v\|_2=1$, $\P_{X^\top v}$ is supported on an interval $\I_v \subseteq \R$, and the density function $f_{X^\top v}$ on $\I_v$ is bounded above and below by some constants $\overline{c}_X >0 $ and $\underline{c}_X >0$ independent of $v$. 
\end{assumption}
The boundedness assumption for rewards is made for technical reasons to apply concentration bounds. The constant $1$ is chosen for simplicity of exposition, but can easily be replaced with other (large) constants. For the distribution $\P_X$ of $X$, we assume that $\P_X$ has a density, its support is bounded in $\R^d$, and the density of the projection of $X$ onto a direction near $\beta_0$ is non-vanishing and supported on an interval in $\R$. Essentially, the last condition allows us to obtain information on $f^{(k)}$ from all regions given a sufficiently accurate working direction. Similar assumptions have been made in other non-parametric bandit settings for $\P_X$~\cite{perchet2013multi,jiang2019non}, where $\P_X$ is supported on a hypercube and its density does not vanish within that hypercube.

To provide a concrete example of $\P_X$ satisfying Assumption \ref{assum: cond_X}, consider $X$ following a truncated multivariate normal distribution $N(\mathbf{0}, \Sigma)$ constrained within a unit hypercube $\mathcal{H} = \prod_{j=1}^d 1\{|x_j|\le 0.5\}$, i.e., whose density is proportional to $\exp(-\frac{1}{2}x^\top\Sigma^{-1}x)1\{x\in \mathcal{H}\}$. We can find $R_0, \overline{c}_X$, and $\underline{c}_X$ that satisfy Assumption \ref{assum: cond_X}. See Lemma \ref{lem: tmtvn_condX} for details. The proof for the Lemma is provided in Section\ifnum\pageoption=2~\ref{appendix: proofsofSection2} \else~SM2 \fi in Supplementary Material.

\begin{lemma}\label{lem: tmtvn_condX}
Suppose $X \sim N_{T}(0,\Sigma;\mathcal{H})$ whose density is given by 
\begin{align*}
    f_{X}(x) = \begin{cases}
        \frac{1}{Z(\Sigma)} \exp\{-\frac{1}{2}x^\top\Sigma^{-1}x\}   & x\in \mathcal{H}\\
        0 & \text{otherwise}
    \end{cases}
\end{align*}
with $Z(\Sigma) =\int_{x\in \R^d} e^{-\frac{1}{2}x^\top\Sigma^{-1}x} 1\{x\in \mathcal{H}\} dx $ where $\mathcal{H} = \prod_{j=1}^d 1\{|x_j|\le 0.5\}$. Then we can find $R_0>0$ such that for any $v \in \bB_2(R_0;\beta_0)$ and $\|v\|_2=1$, the density of $\P_{X^\top v}$ is bounded above and below by some constants $\overline{c}_X >0 $ and $\underline{c}_X >0$. independent of $v$, on its support $\mathcal{I}_{v}$, which is an interval in $\R$.
\end{lemma}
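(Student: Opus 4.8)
The plan is to write the marginal density of $X^\top v$ as an integral of the joint density over the hyperplane slices of the cube, and then bound that integral above and below by constants that do not depend on $v$. For a unit vector $v$ and $u\in\R$, set $H_{v,u}=\{x\in\R^d : x^\top v = u\}$ and let the slice be $S_{v,u}=\mathcal H\cap H_{v,u}$. Disintegrating Lebesgue measure along the unit direction $v$ (so that the coarea Jacobian equals $\|v\|_2=1$) gives
\[
f_{X^\top v}(u)=\frac{1}{Z(\Sigma)}\int_{S_{v,u}}\exp\!\Big(-\tfrac12 x^\top\Sigma^{-1}x\Big)\,d\mathrm{vol}_{d-1}(x),
\]
where $\mathrm{vol}_{d-1}$ is the induced $(d-1)$-dimensional volume on $H_{v,u}$. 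Because $\mathcal H$ is compact and convex and $x\mapsto x^\top v$ is linear, the image $\{x^\top v:x\in\mathcal H\}$ is a compact interval; this already shows $\Supp(\P_{X^\top v})=\I_v$ is an interval, with endpoints $\pm\tfrac12\sum_{j}|v_j|$.

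For the upper bound I would use that $\Sigma^{-1}\succ 0$ forces the integrand to be at most $1$ on all of $\mathcal H$, so $f_{X^\top v}(u)\le Z(\Sigma)^{-1}\,\mathrm{vol}_{d-1}(S_{v,u})$. A hyperplane section of a fixed cube has uniformly bounded $(d-1)$-volume (bounded by a purely dimensional constant), and $(v,u)\mapsto\mathrm{vol}_{d-1}(S_{v,u})$ is continuous; maximizing over the compact parameter range and dividing by the fixed positive normalizer $Z(\Sigma)$ produces the constant $\overline c_X$, which is manifestly independent of $v$. For the lower bound I would first control the integrand from below: on $\mathcal H$ we have $\|x\|_2^2\le d/4$, hence $x^\top\Sigma^{-1}x\le \lambda_{\max}(\Sigma^{-1})\,d/4$ and the integrand is at least $c_1:=\exp(-\lambda_{\max}(\Sigma^{-1})d/8)>0$. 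This reduces the lower bound to a uniform positive lower bound on the slice volume, namely $f_{X^\top v}(u)\ge Z(\Sigma)^{-1}c_1\,\mathrm{vol}_{d-1}(S_{v,u})$.

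The geometric estimate $\inf \mathrm{vol}_{d-1}(S_{v,u})>0$ is the main obstacle. The slice $S_{v,u}$ is a genuine $(d-1)$-dimensional convex body whenever $u$ lies strictly inside $\I_v$, so $\mathrm{vol}_{d-1}(S_{v,u})>0$ there; the delicate point is that it degenerates to $0$ as $u$ approaches an extreme vertex (for directions $v$ that are not axis-aligned), so a single positive constant can only be secured on the support after trimming neighborhoods of its two endpoints. The hard part is therefore to prove that the infimum of $\mathrm{vol}_{d-1}(S_{v,u})$ is strictly positive when $v$ ranges over the spherical cap $\bB_2(R_0;\beta_0)\cap\{\|v\|_2=1\}$ and $u$ over the trimmed support; I would obtain this from compactness of the cap together with the continuity and strict positivity of the slice volume on the interior. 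Uniformity in $v$ then follows for free, and since $\Sigma$ is a fixed positive-definite matrix every direction is admissible, so $R_0$ may be taken to be any convenient radius. The resulting constant $\underline c_X=Z(\Sigma)^{-1}c_1\inf\mathrm{vol}_{d-1}(S_{v,u})$ is independent of $v$, as required.
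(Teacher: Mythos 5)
Your slice-integral representation of the marginal is the mathematically correct starting point, and it is exactly where your argument and the paper's part ways. The paper's proof asserts at its first step that $X^\top v$ has density $\frac{1}{Z(v,\Sigma)}\exp\{-u^2/(2v^\top\Sigma v)\}$ on $\T_v=\{x^\top v : x\in\mathcal{H}\}$, i.e., that the projection of the cube-truncated Gaussian is itself a truncated univariate Gaussian. Projection and truncation do not commute, so this identity is false in general (it holds only in special cases, e.g.\ $v$ an axis direction that is also an eigenvector of $\Sigma$). The correct density is the one you wrote,
\[
f_{X^\top v}(u)=\frac{1}{Z(\Sigma)}\int_{\{x :\, x^\top v=u\}\cap\mathcal{H}}e^{-\frac12 x^\top\Sigma^{-1}x}\,d\mathrm{vol}_{d-1}(x),
\]
and the slice-volume factor that the paper's formula silently discards is not constant in $u$.

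This discrepancy is not cosmetic, and the endpoint degeneration you flag is not a gap you failed to close: it is a counterexample to the statement as written. Take $d=2$, $\Sigma=I$, $v=(1,1)/\sqrt2$. The density at $u$ equals $Z^{-1}$ times the length of the chord $\{x^\top v=u\}\cap[-1/2,1/2]^2$ times a factor lying in $[e^{-1/4},1]$, and the chord length equals $2(u_{\max}-u)$ near $u_{\max}=1/\sqrt2$, so $f_{X^\top v}(u)\to 0$ as $u\uparrow u_{\max}$ and no positive lower bound on all of $\I_v$ exists. Since every ball $\bB_2(R_0;\beta_0)$ with $R_0>0$ contains unit vectors with at least two nonzero coordinates, no choice of $R_0$ rescues the claim; more generally, whenever the joint density is bounded above (as it is here, by $1/Z(\Sigma)$), the projected density is dominated by a constant times the slice volume and must vanish at the endpoints for such $v$. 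The paper's lower bound goes through only because of its incorrect marginal formula. The strongest true statement is the trimmed one you outline: by compactness of the spherical cap and continuity and interior positivity of $(v,u)\mapsto\mathrm{vol}_{d-1}(S_{v,u})$, the density is bounded below, uniformly in $v$, on $\{u\in\I_v:\mathrm{dist}(u,\partial\I_v)\ge\epsilon\}$ for each fixed $\epsilon>0$. Accordingly, the lemma (and the lower-bound clause of Assumption \ref{assum: cond_X}, which this lemma is meant to exemplify) would need to be weakened to that trimmed form, with the change propagated to every place in the regret analysis where $\P_X(C_A(\beta))\ge\underline{c}_X|A|$ is invoked, since bins abutting the endpoints of $[L_\beta,U_\beta]$ no longer satisfy it.
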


\section{BIDS Algorithm for Batched GMABC} \label{sec: 3_BIDS_alg}

In this section, we propose an algorithm, which we call Batched single Index Dynamic Binning and Successive arm elimination (BIDS), for the batched GMABC problem.
%
Our algorithmic approach adapts the Adaptive Binning and Successive Elimination (ABSE) algorithm, first proposed in \cite{perchet2013multi} for contextual bandit problems with fully nonparametric reward functions. ABSE was shown to achieve the minimax rate under suitable smoothness and margin conditions. This strategy was adapted for batched settings in \cite{jiang2025batched}, which was also shown to achieve the minimax rate under batched constraints. 

We first provide a brief introduction on the ABSE strategy in subsection \ref{sec: 3_1_background}, then present the BIDS algorithm in subsection \ref{sec: dynamic_binning}, whose main idea is to execute the ABSE strategy in the \textit{projected} space based on the single index direction.

\subsection{Background on Adaptive Binning and Successive elimination Strategy}\label{sec: 3_1_background}
Perchet and Rigollet~\cite{perchet2013multi} propose two nonparametric contextual bandit algorithms, namely, \emph{Binned Successive Elimination (BSE)} and \emph{Adaptively Binned Successive Elimination (ABSE)} that leverage partitioning of the covariate space to manage exploration. In BSE, the context space \( [0,1]^d \) is uniformly divided into a fixed grid of bins. Within each bin, a separate instance of the classical Successive Elimination (SE) algorithm is run: for each arm, the empirical mean reward is updated based only on observations falling into that bin, and arms are successively eliminated when the difference in their estimated mean rewards from the current best arm exceeds a data-dependent confidence threshold. ABSE improves on this by dynamically refining the partition. It starts with large bins and adaptively splits them into smaller sub-bins when sufficient data has not been accumulated and the identity of the best arm is not yet clear. This localized refinement focuses exploration on regions where the optimal arm is hard to distinguish, allowing ABSE to match minimax-optimal regret rates (up to logarithmic factors) under H\"{o}lder smoothness assumptions on the reward function. Figure~\ref{fig:abse_illustration} provides a visual illustration of the ABSE algorithm in a two-dimensional covariate space, showing successive refinements at Level $1,2,$ and $3$.

Jiang and Ma~\cite{jiang2025batched} extend the ABSE approach to the batched bandit setting via the \emph{Batched Successive Elimination with Dynamic Binning (BaSEDB)} algorithm. They emphasize the importance of dynamic binning, where the covariate space is progressively refined with bin widths tailored to the batch size, in achieving minimax-optimal regret.

In this work, we address the batched GMABC problem and propose the \emph{Batched single-Index Dynamic binning and Successive arm elimination (BIDS)} algorithm. While BIDS builds on the adaptive refinement ideas of ABSE, it departs in two key ways: (i) it performs binning not in the full covariate space but along a one-dimensional projection defined by the estimated single-index direction, which in turn induces a partition in the covariate space; (ii) it explicitly models shared structure across arms through a global parameter. This allows BIDS to combine adaptive partitioning with sufficient dimension reduction, enabling more statistically and computationally efficient learning in high-dimensional contextual settings. Notably, both ABSE and BaSEDB treat arms independently and rely on uniform grid-based binning in the full covariate space, making them less suitable for settings with complex covariates or shared patterns across arms.

\begin{figure}[htbp]
\centering
\begin{minipage}{0.47\textwidth}
\centering
\resizebox{\textwidth}{!}{ 
\begin{tikzpicture}[scale=8]

\fill[blue!10] (0.5,0) rectangle (1,0.5);
\fill[blue!10] (0,0.5) rectangle (0.5,1);
\fill[blue!30] (0.5,0.5) rectangle (1,1);
\fill[blue!30] (0,0) rectangle (0.25,0.5);
\fill[blue!30] (0.25,0.25) rectangle (0.5,0.5);
\fill[red!20] (0.25,0) rectangle (0.5,0.25);

\draw[thick] (0,0) rectangle (1,1);

\draw[thick] (0.5,0) -- (0.5,1);  
\draw[thick] (0,0.5) -- (1,0.5);  

\draw[thick] (0,0.25) -- (0.5,0.25);  
\draw[thick] (0.25,0) -- (0.25,0.5);  

\draw[thick] (0.5,0.75) -- (1,0.75);  
\draw[thick] (0.75,0.5) -- (0.75,1);  

\draw[thick] (0.25,0.125) -- (0.5,0.125);
\draw[thick] (0.375,0.0) -- (0.375,0.25);



\node at (0.25,0.75) {\small $B_1$};
\node at (0.75,0.25) {\small $B_4$};

\node at (0.125,0.375) {\small $B_{2,1}$};
\node at (0.375,0.375) {\small $B_{2,3}$};
\node at (0.125,0.125) {\small $B_{2,2}$};

\node at (0.625,0.875) {\small $B_{3,1}$};
\node at (0.875,0.875) {\small $B_{3,3}$};
\node at (0.625,0.625) {\small $B_{3,2}$};
\node at (0.875,0.625) {\small $B_{3,4}$};

\node at (0.3125, 0.0625) {\tiny $B_{2,4,2}$};  
\node at (0.4375, 0.0625) {\tiny $B_{2,4,4}$};  
\node at (0.3125, 0.1875) {\tiny $B_{2,4,1}$};  
\node at (0.4375, 0.1875) {\tiny $B_{2,4,3}$}; 

\node[below] at (0,0) {\small $(0,0)$};
\node[below] at (1,0) {\small $(1,0)$};
\node[above] at (0,1) {\small $(0,1)$};
\node[above] at (1,1) {\small $(1,1)$};
\node[below] at (0.5,0) {\small $(0.5,0)$};
\node[left] at (0,0.5) {\small $(0,0.5)$};

\end{tikzpicture}
}
\end{minipage}
\hfill
\begin{minipage}{0.35\textwidth}
\centering
\scriptsize
\begin{tabular}{ll}
\colorbox{blue!10}{\phantom{XX}} & One active arm after Level 1\\
\colorbox{blue!30}{\phantom{XX}} & One active arm after Level 2\\
\colorbox{red!20}{\phantom{XX}} & Multiple arms \\
& \\
\textbf{Split factor:} & \\
Level 1: & 2 (split into $2^2$ bins) \\
Level 2: & 2 (split into $2^2$ bins) \\
Level 3: & 2 (split into $2^2$ bins) 
\end{tabular}
\end{minipage}
\caption{Illustration of ABSE in 2-dimensional setting. The algorithm partitions the context space ( $[0,1]^2$ ) at Levels 1, 2, and 3, running local arm elimination in each bin. Bins with confidently identified optimal arms (light-blue colored bins for Level 1 and blue-colored bins for Level 2) are not refined further, while bins without optimal arms are split into $2^2 = 4$ equal-sized sub-bins. 
}
\label{fig:abse_illustration}
\end{figure}

\subsection{Index based dynamic binning and arm elimination} \label{sec: dynamic_binning}
The main idea of our approach is to partition the covariate space $\mathcal{X}$ based on its \textit{one-dimensional projection} along the specified index estimate, using any off-the-shelf single-index estimator \cite{babichev2018slic, cai2020online}. This projection yields meaningful partitions, as the index is learned via supervised modeling of the reward-context relationship.
Once the partition is formed, decisions within each bin of the covariate space can be made by treating the problem as a standard stochastic bandit problem without covariates, with the average regret within each bin estimated as a constant.

To form a partition, an index vector $\beta$ is required to determine the direction along which $x\in\R^d$ is projected.
We consider two settings: one where a pilot estimate $\beta \in \R^d$ is provided with reasonable accuracy, and another where no pilot estimate is available.  
When a pilot estimate $\beta$ is available, for instance from previous studies or other preliminary analyses, we propose the BIDS algorithm based on partitioning of the covariate space guided by the direction of $\beta$ (Algorithm \ref{algorithm: SIRBatchedBinning}). 
In the absence of a pilot estimate, we begin with an initial phase where we first collect i.i.d. observations from each arm in a cyclic manner. These observations are then used to estimate the index vector. Once the direction is estimated, the BIDS algorithm applied in the first setting can be utilized. 
First, we discuss the BIDS algorithm with a given direction $\beta$. In the next subsection (Section \ref{sec: initial_phase}), we present an algorithm to estimate the index vector during the initial phase when $\beta$ is not available.  

To enhance readability, we summarize key notations in Table\ifnum\pageoption=2~\ref{tab:notations_long} \else~SM1 \fi in Supplementary Material.
Given a pilot direction $\beta \in \R^d$ such that $\|\beta\|_2 = 1$,
the dynamic binning strategy employed in our algorithm can be explained through a tree-based interpretation as follows. 

\paragraph{Hierarchical partitioning and tree structure}
We build a tree $\mathcal{T}$ of depth $M$ (recall, $M$ is the number of batches) to adaptively partition the covariate space based on the projected direction $\beta$. Each layer consists of a progressively finer partition of the covariate space $\mathcal{X}\subseteq \R^d$, where the partitions are defined by the direction $\beta$ and the number of splits at each layer $\{b_l\}_{l=0}^{M-1}$.

Let $\I_\beta =\{x^\top\beta; x\in \mathcal{X}\}$, which is an interval by Assumption \ref{assum: cond_X}, i.e., let $\I_\beta = [L_\beta,U_\beta]\subseteq\R$. For layer $i=1,\dots,M$, we create a partition $\mathcal{A}_i$ of $[L_\beta,U_\beta]$ by splitting it into $n_i=\prod_{l=0}^{i-1} b_l$ equal-width intervals. Each interval $A_i \in \mathcal{A}_i$ has width
\begin{align}
    w_i = \frac{U_\beta - L_\beta}{n_i} = (U_\beta - L_\beta)(\prod_{\ell = 0}^{i-1} b_\ell)^{-1},  \label{width: def}
\end{align}
and takes the form:
\begin{align*}
    A_i :=  \begin{cases}
        [L_\beta+(v-1)w_i, L_\beta+vw_i) & v = 1,2,\dots, n_i-1\\
        [L_\beta+(n_i-1)w_i, U_\beta] & v = n_i\\
    \end{cases}
\end{align*}
where 
for each layer $i=1,2,\dots,M.$
We then define a partition $\mathcal{B}_i$ of $\mathcal{X}$ for layer $i=1,2,\dots,M$, which consists of bins $C_{A_i}(\beta)$ defined as:
\begin{align*}
    C_{A_i}(\beta) = \{x \in \mathcal{X}: x^\top \beta \in A_i\}.
\end{align*}
It is easy to check that each $\mathcal{B}_i$ is a partition of $\mathcal{X}$.

The tree $\mathcal{T}$ is defined as the collection of $\mathcal{B}_i$'s, i.e., $\mathcal{T} = \cup_{i=1}^M \mathcal{B}_i$, and for reference, we define $\mathcal{T}_\mathcal{A} = \cup_{i=1}^M \mathcal{A}_i$.
Note that by the setup, for each bin $C \in \mathcal{T}$,  we have $C = C_{A}(\beta)$ for some set $A \in \mathcal{T}_\mathcal{A}$.  We will sometimes need to refer to the width of $A$ that defines $C$. For $C \in \T$, define $|C|_\T$ as $|C|_\T = |A|$ where $C = C_A(\beta)$.

\paragraph{Parent and children bins}
The nested structure of partitions naturally creates parent-child relationships between bins.
For $A \in \TA$, we define its child and parent sets as follows. Since $A \in \TA$, we have $A \in \mathcal{A}_i$ for some $i \in \{1,\hdots,M-1\}$. We define its \textit{child} set as $\text{child}(A):=\{A'\in \mathcal{A}_{i+1}; A' \subseteq A\}$, consisting of all intervals in the next layer contained in $A$. The \textit{parent} of $A$ is defined as $p(A) = \{A'\in \mathcal{A}_{i-1}; A \in \text{child}(A')\}$, which is the interval in the previous layer that contains $A$. 
These relationships extend to bins in the covariate space $\mathcal{X}$: for a bin $C_{A}(\beta) \in \mathcal{B}_i$, we define its child and parent as
$\text{child}(C_{A}(\beta))=\{C_{A'}(\beta); A' \in \text{child}(A)\}$ and $p(C_A(\beta)) = \{C_{A'}(\beta); A \in \text{child}(A')\}$. 
For $C\in \mathcal{T}$ (or $\TA$), we define $p^k(C) = p(p^{k-1}(C))$ to be the $k$th ancestor of $C$ for $k \geq 2$. Then we let $\mathcal{P}(C) = \{C^\prime \in \mathcal{T} \mbox{ (or $\TA$)}: C^\prime = p^{k}(C)$ for some $k \geq 1\}$ be the set of all ancestors of $C$. By construction, the parent-child relationships are consistent between the projected intervals and bins in covariate space: if $A' = p(A)$ then $C_{A'}(\beta) = p(C_A(\beta))$.

\paragraph{BIDS algorithm}
Our proposed algorithm, Algorithm \ref{algorithm: SIRBatchedBinning} (BIDS), proceeds in batches and each batch has two key terms, a list of \emph{active bins} $\mathcal{L}_t$ at time $t$ and the corresponding \emph{active arms} $\mathcal{I}_C$ for each $C \in \mathcal{L}_t$. 
Before the first batch, $\mathcal{L}_1 = \mathcal{B}_1$, i.e., the list of active bins $\mathcal{L}_1$ contains all bins in layer 1, and $\mathcal{I}_C = {\{1,2,\dots,K\}}$ for all $C \in \mathcal{L}_1$, i.e., each bin contains all active arms. 
In each batch, observations are drawn cyclically from each of the active arms. At the end of the batch, all the rewards in the batch are revealed.  Using this information, we perform an arm elimination procedure to update the active arms set $\mathcal{I}_C$. Specifically, for each active arm set  with multiple active arms, we eliminate arms that are ``statistically worse than the best arm''. Then, if any active bin still has multiple active arms, this suggests the bin is not fine enough for the decision-maker to  tell the difference between the remaining arms. As a result, we split any active bin that still has more than one active arm into its children sets $\text{child}(C)$ in $\mathcal{T}$. Finally, we update the set of active bins and repeat this process at the end of each batch.

Since the set of active bins is only updated at the end of each batch, $\mathcal{L}_t$ only changes in the beginning of a new batch. That is, $\mathcal{L}_t$ is different from $\mathcal{L}_{t-1}$ only when $t = t_0+1,\dots,t_{M-1}+1$. We let $\mathcal{L}^{(i)} = \mathcal{L}_{t_{i-1}+1}$ to denote the list of active sets during the $i$th batch for $i=1,\dots,M$, and $\L^{(0)}=\emptyset$. 
We will say that a set $C\in \T$ is \emph{born} at batch $i$ if $C \notin \L^{(i-1)}$ and $C \in \L^{(i)}$. This happens if $p(C)$ was split at the end of batch $i-1$. We note that by the set-up of algorithm, the sets that are born at the beginning of batch $i$ always belong to $\mathcal{B}_i$. This is because when $i=1$, $\L^{(1)}= \mathcal{B}_1$ by the set-up of the algorithm, so all sets born at batch $1$ belong to $\mathcal{B}_1$. Then the sets that are born at batch $i$ are always children of the sets that were born at $i-1$.

\begin{remark}[Unique batch elimination event for each set]\label{rmk: batch_elimination}
For a set $C$ which was born at batch $i$, by the construction of the algorithm, the batch elimination procedure will be performed for $C$ at the end of batch $i$. Also note that, $C \in \L^{(j)}$ for all $j>i$ if and only if $C$ has exactly one active arm after the batch elimination procedure at the end of batch $i$. In particular, at the end of batch $i$, the batch elimination procedure is performed only for those bins that are born at the beginning of batch $i$. As a consequence, each bin undergoes at most one batch elimination event.
\end{remark}
\paragraph{ Batch elimination procedure }
For each ``newly'' born $C \in \B_i$, for $i=1,\dots,M$, we obtain reward information from each active arm during batch $i$ and perform a batch elimination event at the end of batch $i$. Specifically, during batch $i$, we obtain average rewards on $C$ from active arms by pulling each arm in a fixed, cyclic order whenever $X_t \in C$.  At the end of batch $i$, we perform a batch elimination procedure.

More precisely, let $\tau_{C,i}(s) = \inf\{n\ge \tau_{C,i}(s-1)+1; X_n \in C\}$ be the $s$th time that covariate $X_t$ is in $C$ during the batch $i$, where $\tau_{C,i}(0) = t_{i-1}$, for $s=1,2,\dots$. Let $m_{C,i} = \sum_{t=t_{i-1}+1}^{t_i} 1\{X_t \in  C\} $ be the total number of visits of $X_t$ to $C$ during batch $i$. For the $s$th visit to $C$, we pull the arm {$k=  ((s-1) \bmod |\I_C|) +1$, so that the active arms in $\I_C$ are cycled through in a fixed order. For example, if $\I_C=\{1,2,3\}$, then arms $1,2,3,1,2,3,\dots$ are pulled on successive visits.} 

Let $\tau_{C,i}^{(k)} = \{\tau_{C,i}(s);\, 1\le s \le m_{C,i},\, {k=  ((s-1) \bmod |\I_C|) +1} \} $ be the set of time points $t$ during batch $i$ when $X_t$ visits $C$ and the arm $k \in {[K]}$ is pulled, and let $m_{C,i}^{(k)} = |\tau_{C,i}^{(k)}|$ denote the number of such visits. 
The average rewards for $C$ from arm $k\in{[K]}$ during batch $i\in \{1,\dots,M\}$ is then:
\begin{align}\label{eq: reward_def}
    \bar{Y}_{C,i}^{(k)} &= \frac{1}{m_{C,i}^{(k)}}\sum_{t \in \tau_{C,i}^{(k)}} Y^{(k)}_t.
\end{align}
Once $\bar{Y}_{C,i}^{(k)}$ for $k\in{[K]}$ are obtained, we check whether,
\begin{align}
    \max_{l\in{[K]}}\bar{Y}_{C,i}^{(l)} - \bar{Y}_{C,i}^{(k)} >U(m_{C,i}, T, C),
    \label{eq: batch_elimination}
\end{align}
where we define,
\begin{equation}\label{def: U}
{U(m,T,C) := 4\sqrt{\frac{2\log(2KT|C|_\T)}{\lfloor m/K\rfloor \vee 1}}}
\end{equation}
where we recall $|C|_\T= |A|$ for a set $A$ such that $C = C_A(\beta)$. In particular, for $C \in \B_i$, $|C|_\T = |A_i|$ for $A_i \in \mathcal{A}_i$.
We eliminate $k$ from the set of active arms for $C$ if $k$ satisfies \eqref{eq: batch_elimination}. {This rule ensures that arms are only eliminated when there is statistically significant evidence that another arm dominates it in expectation over the bin $C$.}

\begin{figure}[htbp]
    \centering
    \resizebox{.7\textwidth}{!}{
        \begin{tabular}{c c}
         (a) & (b) \\\includegraphics[width=0.45\linewidth]{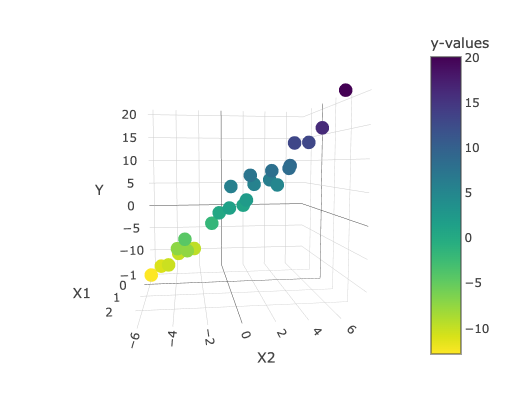}   &   \includegraphics[width=0.4\linewidth]{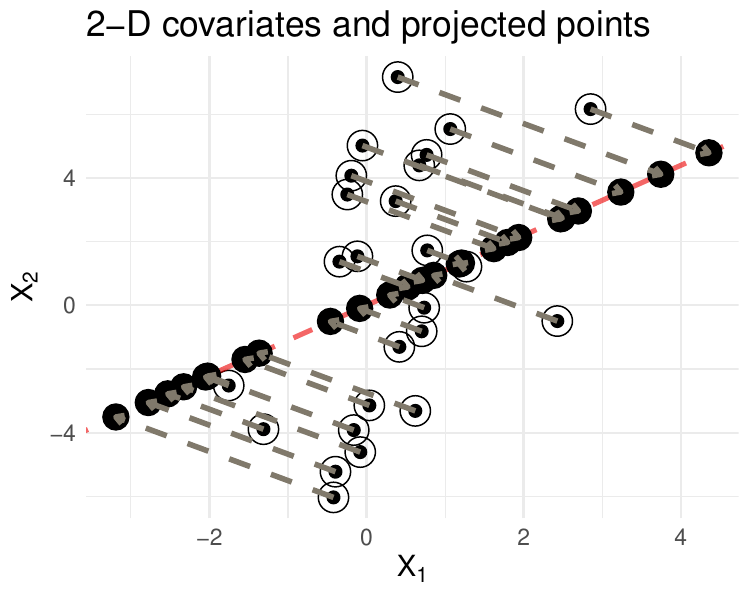}  \\
           (c) & (d) \\ \includegraphics[width=0.4\linewidth]{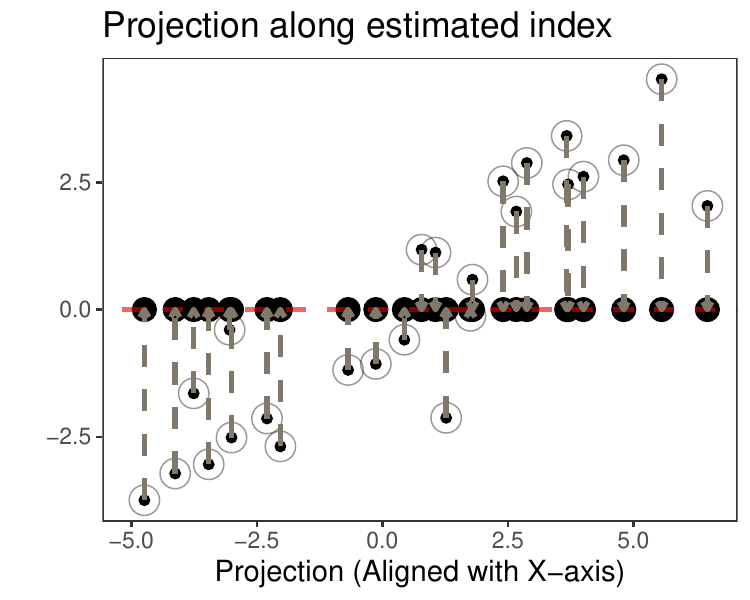}   &  \includegraphics[width=0.4\linewidth]{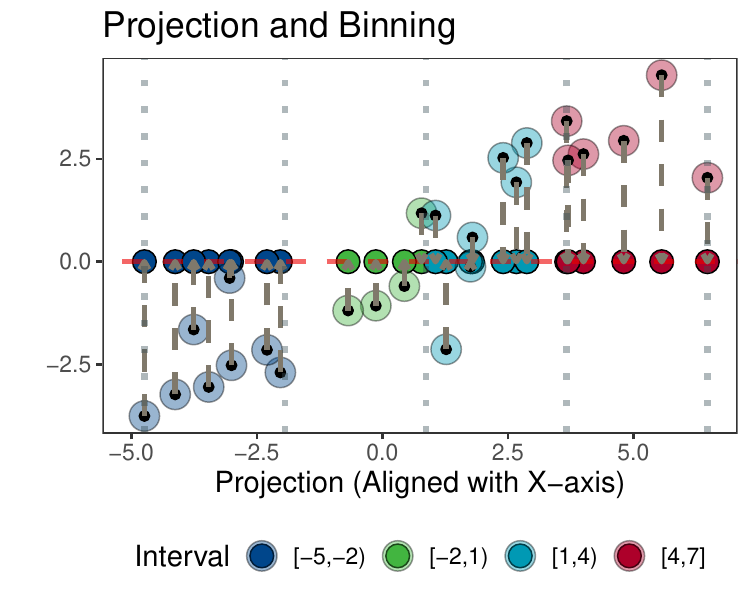}
        \end{tabular}}
    \caption{A linear model example with $Y_t= \beta_1 X_{t,1} + \beta_2 X_{t,2} + \epsilon_t$, where $X_t = (X_{t,1}, X_{t,2})  \in \mathbb{R}^2$ and $\epsilon_t \overset{\textrm{i.i.d} }{\sim}N(0, \sigma^2 = 1)$ for $t = 1, \dots, 25$. 
    (a) 3-D representation of the simulated data. (b)  Projection of covariates $X \in \mathbb{R}^2$ (circles with holes) onto the single-index direction (red dotted line),  with projected points shown as black circles connected by gray lines. (c) Rotated view of (b) to align the SIR direction with the x-axis. (d) Binning of the projected interval into four sub-intervals, with colors representing bin membership. The same process holds for all layers, $i = 1,\hdots, M$.
    }
    \label{fig: ProjectandBin}
\end{figure}

Algorithm \ref{algorithm: SIRBatchedBinning} summarizes the BIDS algorithm, which performs hierarchical partitioning based on projection along a given index vector and dynamic binning through successive arm elimination and active set updates. Figure \ref{fig: ProjectandBin} visualizes this partitioning in the projected space.

\begin{algorithm}[ht]
\begin{algorithmic}[1]
\State \textbf{Input: } No. of batches $M$, grid $\{t_i\}_{i=0}^M$, split factors $\{b_i\}_{i=0}^{M-1}$, working direction: $\beta$
\State Initialize active bins: $\mathcal{L}^{(1)} \leftarrow \mathcal{B}_1$. 
\State Initialize active arms: $\mathcal{I}_C \leftarrow {[K]}$ for all $C \in \mathcal{L}^{(1)}$
\For{$i = 1, \dots, M$}
\For{$t = t_{i-1}+1, \dots, t_i$} \Comment{draw observations (during batch $i$)} 
\State Find $C \in \mathcal{L}^{(i)}$ such that $X_t \in C$.
\State Pull an arm from $\mathcal{I}_{C}$ in a cyclic manner (let $s$ be the number of visits to $C$ up to the current time. set $Y_t = Y_t^{(k)}$, for ${k = (s-1) \bmod |\I_C| +1}$.)
\EndFor
\If{$t = t_i$ and $i < M$} \Comment{Batch elimination (at the end of batch $i$)}
\State Rewards during batch $i$, $Y_{t_{i-1} + 1}, \dots, Y_{t_i}$, are revealed.
\State Initialize $\mathcal{L}^{(i+1)} = \{\}$.
\For{$C \in \mathcal{L}^{(i)}$}\Comment{Iterate over active bins}
\If{$|\mathcal{I}_{C}| = 1$} \Comment{if only one active arm remains in $C$}
\State $\mathcal{L}^{(i+1)} = \mathcal{L}^{(i+1)} \cup \{C\}$ 
\State Break (Proceed to the next bin $C$)
\Else{ $|\mathcal{I}_{C}| > 1$} \Comment{if more than one active arm remains}
\State $\bar{Y}_{C, i}^{\text{max}} = \max_{k \in \mathcal{I}_{C}} \bar{Y}_{C, i}^{(k)}$ 
\For{$k$ in $\mathcal{I}_{C}$}\Comment{successive arm elimination}
\If{$\bar{Y}_{C,i}^{\text{max}} - \bar{Y}_{C,i}^{(k)} > U(m_{C,i}, T, C)$} 
\State $\mathcal{I}_{C} = \mathcal{I}_{C}\setminus \{k\}$
\EndIf 
\EndFor 
\If{$|\mathcal{I}_{C}| > 1$}\Comment{if arm elimination did not occur}
\State $\mathcal{I}_{C'} = \mathcal{I}_{C}$, for $C^\prime \in \text{child}(C)$\Comment{split the bin into children bins}
\State $\mathcal{L}^{(i+1)} = \mathcal{L}^{(i+1)} \cup \{C'; C' \in \text{child}(C)\}$ \Comment{update the active bins}
\EndIf 
\EndIf
\EndFor
\EndIf
\EndFor
\end{algorithmic}
\caption{BIDS algorithm}
\label{algorithm: SIRBatchedBinning}
\end{algorithm}

\subsection{Estimation of single-index vector without a pilot estimate}\label{sec: initial_phase}
In this subsection, we discuss the process of estimating the single-index vector using a separate initial phase when no pilot estimate is available. {We divide the time horizon $1,\dots,T$ into two phases: an initialization phase ($t = 1,\ldots, t_{\text{init}})$, and a second phase ($t = t_{\text{init}} + 1, \ldots, T$), in which we run the BIDS algorithm over the remaining $M-1$ batches.
In the initialization phase (first batch), we draw i.i.d. samples from each arm $k \in [K]$ to estimate the index direction, and in the subsequent $M-1$ batches, we run the BIDS algorithm (Algorithm \ref{algorithm: SIRBatchedBinning}) using the estimated direction.}

More specifically, in the initial phase, we draw i.i.d. samples cyclically from all arms, assigning arm ${k = (t-1) \bmod |\I_C| +1}$ at time $t$.
We construct i.i.d. datasets $\mathcal{D}^{(k)}_{\rm init} = (X_t, Y_t^{(k)})_{t \in \tau^{(k)}}$ for each arm $k \in {[K]}$, where
{$\tau^{(k)} := \{1 \leq t \leq t_{\rm init} ;\, k = (t-1)\bmod K +1\}$.} Once these datasets are available, any single-index regression (SIR) algorithm can be employed to estimate the direction $\beta_0$. For example, in Section\ifnum\pageoption=2~\ref{sec: SADEappendix} \else~SM4 \fi in Supplementary Material, we demonstrate this process using the Sliced Average Derivative Estimation (SADE) method from \cite{babichev2018slic}.

Let $\hat{\beta}^{(k)}$ denote the estimate of $\beta_0$ obtained using $\mathcal{D}^{(k)}_{\rm init}$ for $k = {1,2,\dots,K}$. Since single-index models estimate the direction up to a rotation, we cannot simply combine these vectors by taking their (weighted) average. We propose to first estimate the projection matrix $\mathcal{P}_0 = \beta_0\beta_0^\top $ of $\beta_0$ by computing a (weighted) average of the projection matrices from each arm with weights $\omega_k$, i.e., $\hat{\mathcal{P}} = \sum_{k=1}^{{K}} \omega_k \,\hat{\beta}^{(k)}(\hat{\beta}^{(k)})^\top$, then we obtain the final vector $\hat{\beta}$ by computing the first eigenvector of the estimated matrix $\hat{\mathcal{P}}$.
In our simulations and real-data illustrations in Sections \ref{sec: 5_simulation} and \ref{sec: 6_realdata}, we use the average with equal weights $\omega_k = {1/K}$ for datasets corresponding to each of the $K$ arms.  We summarize the procedure for estimating the single index vector during the initial phase in Algorithm \ref{algorithm: Initial_Dir_Estimation}.
\begin{algorithm}[ht]
\begin{algorithmic}[1]
\State \textbf{Input: } Number of samples in the initial phase $t_{\rm init}$, weights for each arm $(\omega_k)_{k=1}^K$, an SIR algorithm \textsc{SIR}$(\cdot)$
\For{$t = 1, \dots,t_{\rm init}$}
\State Pull arm ${k=(t-1) \bmod K+1}$.
\EndFor
\For{{$k=1,2,\dots,K$}}
\State Define the indices assigned to arm $k$: ${\tau^{(k)} = \{ 1 \leq t \leq t_{\rm init}; \, k=(t-1) \bmod K+1 \}}$
\State Compute $\hat{\beta}^{(k)} \leftarrow \textsc{SIR} ((X_t, Y_t^{(k)})_{t \in \tau^{(k)}})$
\EndFor
\State Compute the estimated projection matrix $\hat{\mathcal{P}} = \sum_{k=1}^{{K}} \omega_k \,\hat{\beta}^{(k)}(\hat{\beta}^{(k)})^\top$ of $\mathcal{P}_0$.
\State Return $\hat{\beta}$, the eigenvector corresponding to the largest eigenvalue of $\hat{\mathcal{P}}$.
\end{algorithmic}
\caption{Initial Direction Estimation}
\label{algorithm: Initial_Dir_Estimation}
\end{algorithm}

\section{Regret bounds}\label{sec: 4_regret_bounds}
In this section, we establish fundamental limits and achievable performance guarantees for the batched contextual bandit problem under a single-index model structure. We first derive a minimax lower bound that characterizes the optimal regret rates as a function of the number of batches $M$, {number of arms $K$,} and margin parameter $\alpha$. This lower bound reveals an inherent difficulty of the problem. We then analyze our proposed BIDS algorithm under the two scenarios, i.e., with and without a pilot estimate. When the pilot direction estimate is available with sufficient accuracy, our upper bound matches the lower bound up to log factors {for fixed $K$}, establishing minimax optimality {in this regime. When $K$ grows with $T$, however, the two bounds differ in their dependence on $K$, where the upper bound has polynomial dependence on $K$ while the lower bound has only logarithmic dependence.} When the pilot direction is unknown and needs to be estimated, the upper bound matches the lower bound under certain margin conditions, though a gap remains between upper and lower bounds in some ranges of the margin condition.
\subsection{Fundamental limits}
\label{sec: fundamental_limits}

Let $\mathcal{P}_X$ denote the collection of probability distributions $\P_X$ which satisfy Assumption \ref{assum: cond_X}. Let $${\mathcal{F}(\alpha; \,(\beta_0,\P_X)) := \{(f^{(1)},\dots,f^{(K)}); \, f^{(k)}\mbox{ satisfies Assumptions \ref{assum: Smoothness} and \ref{assum: Margin}}, \,\forall k\in [K]\}},$$  denote the class of reward function pairs satisfying Assumptions \ref{assum: Smoothness} and \ref{assum: Margin} for a given direction $\beta_0 \in \mathbb{S}^{d-1}$ and covariate distribution $\P_X$.

For ${k\in [K]}$, define $\P_{f(X)}^{(k)}(\cdot) = \P(Y^{(k)} \in \cdot \, | X)$, as the conditional distribution of $Y^{(k)}$ given $X$ with the conditional mean $\E[Y^{(k)}|X]=f(X)$.
We make the following assumption on the conditional distribution of $Y^{(k)}$ given $X$ which bounds the KL divergence between the two conditional distributions by the squared distance between their mean parameters. This KL divergence bound assumption is similar to Assumption (B) in Section 2.5 of \cite{tsybakov2008introduction}, and was originally proposed and used in \cite{rigollet2010nonparametric}. For example, this assumption is satisfied when $Y^{(k)}$ follows a Bernoulli distribution (see Lemma 4.1 in \cite{rigollet2010nonparametric}).

\begin{assumption}\label{assum: KL_bound}
There exists $\tau \in (0,1/2)$ such that for each $k\in {[K]}$, the family $\{\P_\theta^{(k)}, \,\, \theta \in [1/2-\tau, 1/2+\tau]\}$ satisfies the KL-divergence bound
\begin{align}
    \text{KL}(\P_\theta^{(k)} , \P_{\theta'}^{(k)}) \le \frac{1}{\kappa^2}(\theta-\theta')^2 \label{eq: KL_bound}
\end{align}
for some $\kappa>0$ and all $\theta,\theta' \in [1/2-\tau, 1/2+\tau]$.      
\end{assumption}

\begin{theorem}[Regret Lower Bound for Batched Global Multi-Armed Bandits with Covariates]\label{thm: lower_limit}
Suppose $0< \alpha \le 1$. Assume the conditional distributions of $Y^{(k)}$ given $X$, for ${k=1,2,\dots,K}$, satisfy Assumption \ref{assum: KL_bound}.
Then for any $M$-batch policy $\pi$ with prespecified batch endpoints $\mathcal{G} = \{t_0, t_1, \ldots, t_M\}$, where $0 = t_0 < t_1 < \cdots < t_M = T$, there exist reward functions {$(f^{(1)},\dots, f^{(K)})\in \mathcal{F}(\alpha;\,(\beta_0,\P_X))$}, a direction $\beta_0 \in \mathbb{S}^{d-1}$, and a covariate distribution $\P_X \in \mathcal{P}_X$ such that, the expected cumulative regret of $\pi$ satisfies
\begin{align*}
{
   \mathcal{R}_T(\pi) \ge C_1 (\log K)^{1-\beta_M}T^{\beta_M}, \qquad \text{where} \quad \gamma = \frac{\alpha + 1}{3}\mbox{ and }\beta_M = \frac{1-\gamma}{1-\gamma^M},}
\end{align*}
{where $C_1$ is a constant which only depends on model parameters $(\alpha, D_0, L, \overline{c}_X, \underline{c}_X, R_X)$ but not on $T$ or $K$.}
\end{theorem}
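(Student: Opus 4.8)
The plan is to establish the minimax lower bound via the standard two-point (or multiple-hypothesis) reduction adapted to the batched setting, following the general strategy for nonparametric contextual bandits in \cite{perchet2013multi} and its batched extension in \cite{jiang2024batched}, but now working entirely along the one-dimensional projection $X^\top\beta_0$. The key simplification afforded by the single-index structure is that, once $\beta_0$ is fixed, the problem effectively becomes a one-dimensional nonparametric bandit problem in the scalar variable $u = x^\top\beta_0$, whose density is bounded above and below by Assumption \ref{assum: cond_X}. So I would first reduce to constructing a hard instance where both link functions $f^{(1)},f^{(2)}$ depend only on $u$, and where the gap function $\Delta(u) = f^{(1)}(u)-f^{(2)}(u)$ is engineered to be small but hard to detect.

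\textbf{Construction of the hard instance.} I would partition the projected interval $\I_{\beta_0}=[L_{\beta_0},U_{\beta_0}]$ into a grid of small sub-intervals (``bumps'') of a common width $h$, to be optimized. On each bump I place a smooth perturbation of height proportional to $h$ (dictated by the $L$-Lipschitz constraint in Assumption \ref{assum: Smoothness}), so that $f^{(1)}$ and $f^{(2)}$ differ by a controlled gap of order $h$ on an $O(h^\alpha)$-fraction of bumps — this is exactly what the margin condition (Assumption \ref{assum: Margin}) with parameter $\alpha$ permits, since the measure of the region where $0<|\Delta|\le\delta$ must scale like $\delta^\alpha$. The family of hypotheses is indexed by a sign vector choosing, on each bump, which arm is optimal; distinguishing hypotheses requires the policy to pull the suboptimal arm enough times in each bump.

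\textbf{Propagating through batches.} The batched constraint is where the exponent $\gamma=(\alpha+1)/3$ and the recursive form $T^{(1-\gamma)/(1-\gamma^M)}$ enter. The essential idea, borrowed from the batched lower-bound machinery, is that within a batch the policy cannot adapt, so the optimal bump width $h_i$ that an adversary can force in batch $i$ is tied to the number of samples allocated by the end of batch $i-1$; balancing the per-batch regret across all $M$ batches under the worst-case grid $\mathcal{G}$ yields a geometric recursion in the effective sample sizes, whose solution is the stated exponent. Concretely, I would let the adversary choose the bump width at batch $i$ as a function of $t_{i-1}$, lower-bound the regret incurred in each batch by combining the margin-weighted gap (contributing $h^{1+\alpha}$ per unit mass) with the information-theoretic cost of discrimination via Assumption \ref{assum: KL_bound} (the KL bound $\kappa^{-2}(\theta-\theta')^2$ converts squared gap into sample complexity), and apply Fano's inequality or a pairwise Tsybakov/Assouad argument to show that any policy must misidentify the best arm on a constant fraction of bumps unless it has collected $\gtrsim h^{-(1+2\alpha)}\kappa^2$ samples per bump-region. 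Optimizing over $h$ against the batch budget and taking the worst case over the grid $\mathcal{G}$ produces the geometric balancing condition whose solution gives $\gamma=(\alpha+1)/3$.

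\textbf{Main obstacle.} The hardest part will be carrying out the batched recursion correctly: one must argue that \emph{no} choice of batch endpoints $\mathcal{G}$ can avoid the bottleneck, which requires a minimax-over-$\mathcal{G}$ argument nested inside the adversarial choice of the instance. In the sequential (non-batched) case the exponent degenerates as $M\to\infty$ to the known rate $T^{(1-\gamma)/(1-\gamma^M)} \to T^{1-\gamma}$, and I would use this as a consistency check. The technical delicacy is that the perturbations must simultaneously (i) respect the Lipschitz bound, (ii) satisfy the margin condition \emph{with the claimed} $\alpha$, and (iii) keep all conditional means inside the interval $[1/2-\tau,1/2+\tau]$ on which the KL bound of Assumption \ref{assum: KL_bound} is assumed, so that the boundedness $|Y^{(k)}|\le 0.5$ and the KL-to-squared-distance conversion both hold; threading these three constraints while keeping the bump height matched to $h$ is the main bookkeeping challenge. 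I would handle the projection issue by noting that since the construction depends on $x$ only through $u=x^\top\beta_0$ and the induced density $f_{X^\top\beta_0}$ is bounded above and below, the measure of the margin region in $X$-space is comparable to its length in $u$-space, so all the one-dimensional bump counts transfer directly to the required probabilistic statements about $\P_X$.
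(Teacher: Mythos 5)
Your overall strategy is the same as the paper's: fix $\beta_0$ and a covariate law whose projection has density bounded above and below (the paper takes $\beta_0=[1,0,\dots,0]$ and a truncated normal), reduce to a one-dimensional problem in $u=x^\top\beta_0$, build Lipschitz bump perturbations around the constant $1/2$ with only $\approx h^{\alpha}$ of the mass perturbed so the margin condition holds with exponent $\alpha$ and the means stay in $[1/2-\tau,1/2+\tau]$, index hypotheses by sign vectors, apply Le Cam/Assouad with the KL bound of Assumption \ref{assum: KL_bound}, tie the bump width for the instance targeting batch $i$ to $t_{i-1}$, and finish with the geometric balancing $\min_{\mathcal{G}}\max_i t_i/t_{i-1}^{\gamma}=T^{(1-\gamma)/(1-\gamma^M)}$. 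The only structural difference is cosmetic: the paper converts a lower bound on the inferior sampling rate into regret via Lemma 3.1 of Rigollet--Zeevi, whereas you count regret directly as (margin mass)$\times$(gap) $\asymp h^{1+\alpha}$ per unit time; both give the same per-batch bound $t_i\,t_{i-1}^{-\gamma}$.

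There is, however, one concrete quantitative error in your sketch: the claim that discrimination requires ``$\gtrsim h^{-(1+2\alpha)}\kappa^2$ samples per bump-region.'' The correct accounting, which your own phrase ``the KL bound converts squared gap into sample complexity'' implies, is: each time step contributes KL at most $\asymp h^2/\kappa^2$ (squared gap) times $\asymp h$ (probability of landing in a bump), i.e.\ $\asymp h^3/\kappa^2$, so indistinguishability at the start of batch $i$ requires $t_{i-1}\lesssim \kappa^2 h^{-3}$ \emph{total time steps}, equivalently $\kappa^2 h^{-2}$ samples inside a single bump, or $\kappa^2 h^{-(3-\alpha)}$ samples in the whole perturbed region. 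None of these equals $h^{-(1+2\alpha)}$ except at isolated values of $\alpha$. If one carries your threshold through the balancing you describe, the adversary's width becomes $h\asymp t_{i-1}^{-1/(1+3\alpha)}$ and the per-batch regret becomes $t_i\,t_{i-1}^{-(1+\alpha)/(1+3\alpha)}$, yielding $\gamma=(1+\alpha)/(1+3\alpha)$ rather than the claimed $\gamma=(1+\alpha)/3$; so as written the computation does not produce the theorem's exponent. The fix is simply to use the $h\asymp t_{i-1}^{-1/3}$ threshold (which the rest of your sketch already presupposes), after which your argument matches the paper's.
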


{When $K$ is treated as a fixed constant, the $\log K$ factor is absorbed into the implicit constant and the bound reduces to $\mathcal{R}_T(\pi)\gtrsim T^{\beta_M}$, which }coincides with the lower bound result derived for the fully nonparametric batched bandits setting in \cite{jiang2025batched} (Theorem 1), but when the dimension is $d= 1$. 
{We note that \cite{jiang2025batched} establish their lower bound only for the two arm case $K=2$. In contrast, our analysis covers general $K$, treating $K$ as a variable that can potentially grow slowly with $T$, and reveals an additional $(\log K)^{1-\beta_M}$ factor in the lower bound which reflects an increased difficulty in identifying the optimal arm among a set of larger alternatives.}
Our construction of the lower bound generally follows the framework and construction of hard instances from \cite{rigollet2010nonparametric} and \cite{jiang2025batched}, with some non-trivial modifications to adapt to our global batched bandit setting {with $K$ arms}.  We defer the proof to Section\ifnum\pageoption=2~\ref{sec: lower_bound_proof} \else~SM3.1 \fi in Supplementary Material.

\subsection{Regret upper bounds}
\label{sec: regret_upper_bound}
In this section, we discuss the regret bounds in two settings, when a pilot estimate of $\beta_0$ is available and when it is not.
First, recall that our adaptive binning is performed by partitioning the projected space, where the projection is based on the pilot index vector. As a result, the regret depends on how accurate the initial index vector is. To quantify this accuracy, we make the following assumption regarding the $\ell_2$-difference between the initial index $\beta$ and the true index $\beta_0$.

Since we are estimating the direction of $\beta_0$ rather than the vector itself, we quantify the distance in terms of the principal angle between two directions. More specifically, for $u,v \in \R$ such that $\|u\|_2 = \|v\|_2 = 1$, let $\angle u,v = \cos^{-1}(|u^\top v|) \in [0, \pi/2]$ be the principal angle between the directions $u$ and $v.$ 
Note that $\angle u,v = 0$ implies that $|u^\top v| = 1$, i.e., $u$ and $v$ are identical up to sign. At the other extreme, $\angle u,v = \pi/2$ implies that $|u^\top v| = 0$, which means $u$ and $v$ are orthogonal. Equivalently, we can express this in terms of the sine principal angle distance $\sin \angle u,v \in [0,1]$, where $\sin \angle u,v =0$ implies that $u,v$ are identical up to sign and $\sin \angle u,v = 1$ implies $u$ and $v$ are orthogonal.


\begin{assumption}\label{assum: initial_beta_rate}
The initial vector $\beta$ satisfies
\begin{align}\label{eq: initial_beta_sine}
   { \sin \angle \beta,\beta_0 \le \rho_{\rm pilot}}
\end{align}
{for some $\rho_{\rm pilot}\in [0,1]$}.
\end{assumption}
Note that Assumption~\ref{assum: initial_beta_rate} implies
there exists $o \in \{-1,1\}$ such that
    $\|\beta\cdot o - \beta_0\|_2 \le 2^{1/2}{\rho_{\rm pilot}}$ (see, e.g., proof of Lemma \ref{lem: estimated_beta_bound}).
For future reference, we define $\beta_{sgn} = \beta \cdot o$ which is either $\beta_{sgn} = \beta$ or $\beta_{sgn} = -\beta$ such that the above bound holds.
We note that $\beta_{sgn}$ is an oracle quantity since it depends on the unknown sign. It is used only in the proofs and is not required for the actual implementation of the algorithm.

\paragraph{Regret analysis when a pilot index is available} 
When a pilot direction satisfying Assumption \ref{assum: initial_beta_rate} is provided {with sufficiently small $\rho_{\rm pilot}$}, our regret analysis follows a similar approach to the adaptive binning with successive elimination method of \cite{perchet2013multi, jiang2025batched}, but with non-trivial modifications to accommodate the single-index (GMABC) model setting. 

We show that, with an optimal choice of batch size and splitting factor, our regret bound for Algorithm \ref{algorithm: SIRBatchedBinning} matches (up to logarithmic factors) with the lower bound in Theorem \ref{thm: lower_limit}, which is also the minimax rate of non-parametric batched contextual bandits but with $d=1$ (noting that their $\gamma$ depends on the covariate dimension $d$, meaning that their rate for $d>1$ is significantly slower than ours). To achieve this, we carefully select the batch size and splitting factors to ensure that the regret from one batch does not dominate the regrets from other batches. Specifically, we adopt the allocation rule and splitting factor setup proposed by \cite{jiang2025batched},  but with the choice of dimension $d=1$.

Recall that the list of split factors $\{b_i\}_{i=0}^{M-1}$ determines the number of bins $n_i = \prod_{l=0}^{i-1} b_l$ in the partition $\mathcal{A}_i$ of $[L_\beta,U_\beta]$ and the width $w_i=(U_\beta-L_\beta)/n_i$ of each bin in $\mathcal{A}_i$. Let $\gamma = \frac{ (1+ \alpha)}{3}$ and set {$a \asymp \{T/(K\log T)\}^{\frac{1-\gamma}{1-\gamma^M}}$}. The split factors are then chosen as follows:
\begin{align}
    b_0 = \lfloor a^{1/3} \rfloor, \ \text{and}\ \ b_i = \lfloor b_{i-1}^\gamma \rfloor,i = 1,\dots, M-2.\label{eq: bi_formula}
\end{align}
Note that this leads to the following choice of bin widths:
\begin{align}
    w_i \asymp (b_0 b_1 \dots b_{i-1})^{-1} \asymp b_0^{-(1+\gamma+\dots+\gamma^{i-1})} \asymp  b_0^{-\frac{1-\gamma^i}{1-\gamma} } , \ i= 1,\dots, M-1.\label{eq: wi_formula}
\end{align}
The number of samples allocated to batch $i$, i.e., $t_i-t_{i-1}$, is chosen so that it increases with the number of bins in the $i$th layer. Specifically, we let 
\begin{align}
    {t_i - t_{i-1} = \lfloor c_B  \,  K w_i^{-3} \log{(2KTw_i)}\rfloor,\,\, 1\leq i \leq M-1. }\label{eq: batch_size}
\end{align}
where $c_B = 4(4L_0+1)^{-2}(\overline{c}_X)^{-1}$, with ${L_0=L(2^{3/2}R_X+ 1)}$, is a constant independent of $T$.
{
These choices are made essentially to balance regret across batches. First, as we see from the proof of Theorem~\ref{thm: regret_bound}, for batch $i=1,\dots,M-1$, the per-batch regret is bounded by a term proportional to $(t_i - t_{i-1})w_{i-1}^{1+\alpha}$. We choose the batch size $t_i - t_{i-1}$ so that the elimination threshold $U$ (equation~\ref{def: U}) is of order $w_i$--large enough to detect arms whose gap exceeds the current resolution $w_i$, but small enough to avoid falsely eliminating near-optimal arms. This requires the expected number of visits to each bin to scale as $K\log(KT w_i)/w_i^2$, and combined with the fact that the probability of visiting each bin scales with $w_i$, forces $t_i - t_{i-1}\asymp Kw_i^{-3}\log (KTw_i)$ as in \eqref{eq: batch_size}. Substituting, the per-batch regret for batch $i$ scales as $K\log (KTw_i)w_i^{-3}w_{i-1}^{1+\alpha}\lesssim K\log (T)w_i^{-3}w_{i-1}^{1+\alpha}$, which varies across batches. We equalize the regret contribution across batches by setting $w_i^{-3}w_{i-1}^{1+\alpha}$ to be constant, since otherwise the total regret will be dominated by the worst batch. This leads to the choice of $b_i \asymp b_{i-1}^{\gamma}$ with $\gamma=(1+\alpha)/3$, making each per batch regret bounded by $K\log (T)b_0^3$. Finally, the base scale $b_0$ is determined by balancing these $M-1$ identical per-batch contribution bounds against the last batch regret $(T-t_{M-1})w_{M-1}^{1+\alpha}\asymp T b_0^{-3\gamma(\frac{1-\gamma^{M-1}}{1-\gamma})}$, leading to the choice of $b_0 \asymp (T/(K\log T))^{(1-\gamma)/(3(1-\gamma^M))}$.

{Finally, we note that this choice of the splitting schedule depends explicitly on the margin parameter $\alpha$ through $\gamma=(1+\alpha)/3$. Therefore, the current version of BIDS is tuned to a 
known margin parameter $\alpha$, rather than being adaptive to an unknown 
$\alpha$.  Designing an adaptive procedure that does not require prior knowledge of $\alpha$ is an interesting direction for future work.}
}

With these choices, we now present Theorem \ref{thm: regret_bound}, which establishes the regret bound for the proposed BIDS algorithm when the batch size $M$ is at most of order $\log (T) $. The proof is provided in Section\ifnum\pageoption=2~\ref{sec: regret_proof_Thm1} \else~SM3.2 \fi in Supplementary Material.
\begin{theorem}\label{thm: regret_bound}
Suppose Assumptions \ref{assum: Smoothness}--\ref{assum: cond_X} hold, and let a pilot direction $\beta$ with $\|\beta\|_2 = 1$ be given, satisfying Assumption \ref{assum: initial_beta_rate} {with accuracy $\rho_{\rm pilot}$}. 
{Suppose $\alpha \leq 1$, $M =O(\log{T})$, $K=O(\log T)$, and the pilot accuracy satisfies $\rho_{\rm pilot} = O(\{T/(K\log T)\}^{-1/3}).$}
For the BIDS algorithm $\pi$ described in Algorithm \ref{algorithm: SIRBatchedBinning}, with the choices of split factors and batch size satisfying \eqref{eq: bi_formula} and \eqref{eq: batch_size}, the following bound on the expected regret $\mathcal{R}_T(\pi)=\E[R_T(\pi)]$ holds for sufficiently large $T$:
\begin{align*}
{\mathcal{R}_T(\pi)
\;\le\;
C_2\, M (K\log T)^{1-\beta_M}T^{\beta_M},
\qquad
\text{where }
\gamma = \frac{1+\alpha}{3}\mbox{ and }\beta_M = \frac{1-\gamma}{1-\gamma^M}.}
\end{align*}
Here $C_2$ is a constant depending on model parameters such as $\alpha, D_0, L, \overline{c}_X, \underline{c}_X$, and $ R_X$, but not on {$T$ or $K$}.
\end{theorem}
Theorem \ref{thm: regret_bound} shows that the BIDS Algorithm, when provided with a sufficiently accurate pilot estimate, achieves near-optimal regret performance across different batch regimes.
{When $K$ is treated as a fixed constant, the expected regret upper bound in Theorem \ref{thm: regret_bound} matches the lower bound in Theorem \ref{thm: lower_limit} up to logarithmic factors. When $K$ is allowed to grow with $T$, however, the upper bound exhibits polynomial dependence on $K$, whereas the lower bound contains only logarithmic dependence on $K$.
Notably, for a fixed $K$, this rate coincides with the $d=1$ minimax-optimal rate for nonparametric batched bandits established for the two-arm case in Theorem 1 in \cite{jiang2025batched}, thereby avoiding the curse of dimensionality.}
\begin{remark}
The regret bound in Theorem~\ref{thm: regret_bound} exhibits a polynomial dependence on the number of arms $K$. Such dependence is common in contextual bandit models where each arm is associated with its own reward function or parameter, sometimes referred to as \emph{multi-parameter} bandit models \cite{sivakumar2020structured}. In these settings the learner must estimate the reward structure of all $K$ arms simultaneously, which typically leads to polynomial dependence on $K$ in regret bounds. Similar behavior appears in  nonparametric bandits \cite{perchet2013multi,qian2016kernel} as well as linear contextual bandit analyses \cite{bastani2021mostly,dimakopoulou2019balanced}.\\
When the number of batches $M \gtrsim \log\log{T}$, the bound in Theorem~\ref{thm: regret_bound} simplifies to,
\begin{align*}
\mathcal{R}_T(\pi)\lesssim (K\log T)^{(1+\alpha)/3}T^{1-(1+\alpha)/3},
\end{align*}
yielding a sub-linear polynomial dependence of order $K^{(1+\alpha)/3}$. This matches the dependence obtained for adaptive binning and successive elimination methods such as ABSE \cite{perchet2013multi} in the fully online setting with $d=1$, suggesting that the $K$ scaling of BIDS is consistent with existing binning-based contextual bandit algorithms while benefiting from the shared single-index structure that reduces the effective covariate dimension.\\
Finally, the lower bound in Theorem~\ref{thm: lower_limit} contains only a logarithmic dependence on $K$. In the nonparametric contextual bandit literature, the primary focus has typically been on the dependence of regret on the time horizon $T$ and the smoothness of the reward functions, while the number of arms $K$ is often treated as a fixed constant. For instance, the upper bound of ABSE was compared with a lower bound derived for $K=2$~\cite{perchet2013multi}. 
Under this interpretation, the proposed regret bound achieves the optimal rate in $T$.\\
When $K$ is allowed to grow with $T$, however, the optimal dependence on $K$ is, to our knowledge, less well understood. Our analysis allows $K$ to grow slowly with $T$, for example up to $K = O(\log T)$, since the concentration bounds appearing in the regret analysis involve logarithmic factors of the form $\log(KT)$, which remain of order $\log T$ in this regime. The polynomial dependence on $K$ appearing in the upper bound reflects the binning and elimination procedure used by the algorithm, which requires collecting sufficient samples within each bin to reliably compare all candidate arms. Whether this dependence can be improved, or whether stronger lower bounds that capture the dependence on $K$ can be established, remains an interesting open question that we leave for future work.
\end{remark}

\begin{remark}[Effect of pilot accuracy]
{The threshold $\rho_{\mathrm{pilot}} = O(\{T/(K\log T)\}^{-1/3})$ 
in Theorem~\ref{thm: regret_bound} arises because the batch elimination procedure requires 
the within-bin reward variation to be controlled at the scale of the 
bin width $w_i$. Specifically, by Lemma~\ref{lem: barg_min_g}, for any 
$x, y \in C \in \mathcal{B}_i$,
$|g^{(k)}(x) - g^{(k)}(y)| \leq L\{w_i + 2^{3/2}R_X\rho_{\mathrm{pilot}}\}$,
so controlling this variation at scale $w_i$ requires 
$\rho_{\mathrm{pilot}} \lesssim w_i$ for all $i$. Since the finest 
bin width satisfies $w_{M-1} \asymp \{T/(K\log T)\}^{-1/3}$, this 
yields the stated threshold. 
When $\rho_{\mathrm{pilot}}$ exceeds 
this threshold, the within-bin bias may no longer be sufficiently controlled. As a result, the 
good elimination event $\mathcal{S}_C$ in~\eqref{def: SC} may fail more frequently, and therefore the 
regret bound in Theorem~\ref{thm: regret_bound} may no longer hold with high probability. This degradation in performance is also illustrated empirically in Figure~\ref{fig:simulation_knownpilot}, where 
increasing perturbation angles lead to higher regret, although the method appears to be robust to moderate perturbations. A full 
$\rho_{\mathrm{pilot}}$-dependent regret bound is left for future 
work.}
\end{remark}

\paragraph{Regret analysis when no pilot estimate is available}
When no pilot index estimate is available, both the index vector and the link function must be estimated within the batches. We propose using the first batch to estimate $\beta$ (Algorithm \ref{algorithm: Initial_Dir_Estimation}), then performing the BIDS algorithm with the estimated index vector $\beta$ for the remaining batches (Algorithm \ref{algorithm: SIRBatchedBinning}). 

Recall that in the initial phase, for $t\in \{1,\dots,t_{\rm init}\}$, we draw i.i.d. random samples from each arm.  Any suitable single-index model can then be applied in this phase to estimate the index vector.  The index vector can generally be estimated at a parametric rate, e.g., \cite{li1989regression,babichev2018slic, kuchibhotla2020efficient}. Assumption \ref{assum: index_rate} specifies the requirement for the index vector from a Single-Index Regression (SIR) method used in Algorithm \ref{algorithm: Initial_Dir_Estimation}. Specifically, we require that the SIR algorithm used in Algorithm \ref{algorithm: Initial_Dir_Estimation} produces an estimate that satisfies a parametric error bound up to a log term with high probability when applied to an i.i.d dataset of size $n_k$. 
\begin{assumption}\label{assum: index_rate}
Let $k\in {[K]}$ be fixed, and let $\hat{\beta}^{(k)}$ be the estimated vector from an i.i.d sample of size $n_k$, $(x_i,Y_i^{(k)})_{i=1}^{n_k}$ where $Y_i^{(k)}$ follows the single index model \eqref{def:sim}. 
{For a sufficiently large $n_k$, for any $\delta \in (0,1/n_k)$, the following bound holds with probability at least $1-\delta$:
\begin{align*}
  \sin \angle \hat{\beta}^{(k)}, \beta_0 \le   C_{\rm idx}\frac{\textrm{polylog}(n_k/\delta)}{\sqrt{n_k}}.
\end{align*}
for some constant $C_{\rm idx}>0$ which can depend on model parameters but is independent of the sample size $n_k$.
}
\end{assumption}
\begin{remark}
As an example of a single index estimation algorithm that satisfies Assumption \ref{assum: index_rate}, we discuss the Sliced Average Derivative Estimator (SADE) of \cite{babichev2018slic} in Section\ifnum\pageoption=2~\ref{sec: SADEappendix} \else~SM4 \fi in Supplementary Material.  In particular, Theorem\ifnum\pageoption=2~\ref{thm: SADE_estimation} \else~SM4.1 \fi establishes that, under mild conditions,  the estimates $\hat{\beta}^{(k)}$ obtained using the SADE method satisfy Assumption \ref{assum: index_rate}.  Please see  Supplementary Material\ifnum\pageoption=2~\ref{sec: SADEappendix} \else~SM4 \fi for more details.
\end{remark}

The following Lemma \ref{lem: estimated_beta_bound} shows that under Assumption \ref{assum: index_rate}, the estimated direction $\hat{\beta}$ from Algorithm \ref{algorithm: Initial_Dir_Estimation} is (up to sign) within a neighborhood of $\beta_0$ that shrinks at an approximate rate of ${\sqrt{K/t_{\rm init}}}$, with an additional log term.
\begin{lemma}\label{lem: estimated_beta_bound}
Let {$\hat{\beta}^{(1)},\dots,\hat{\beta}^{(K)}$} be the estimated index vectors from each arm, and let $\hat{\beta}$ be the final estimated direction from Algorithm \ref{algorithm: Initial_Dir_Estimation}. Suppose Assumption \ref{assum: index_rate} holds for each {$k=1,2,\dots,K$}. For sufficiently large $T$, {for any $\delta \in (0, K/(2t_{\rm init})]$, the following inequality holds for all $k\in [K]$ with probability at least $1-K\delta$:} 
\begin{align}\label{eq: estimation_rate_for_beta_wi}
  { \sin \angle \hat{\beta},\beta_0  \le \tilde{C}_{\rm idx} \frac{\sqrt{K}\textrm{polylog}(t_{\rm init}/(K\delta))}{\sqrt{t_{\rm init}}},}
\end{align}
for a constant $\tilde{C}_{\rm idx}$.
Moreover, there exists $\hat{o} \in \{-1,1\}$ such that
\begin{align}\label{eq: init_vec_bound}
	{\|\hat{\beta} \cdot \hat{o} - \beta_0 \|_2\le 2^{1/2}\tilde{C}_{\rm idx} \frac{\sqrt{K}\textrm{polylog}(t_{\rm init}/(K\delta))}{\sqrt{t_{\rm init}}}.}
\end{align}
\end{lemma}

The proof for Lemma \ref{lem: estimated_beta_bound} is provided in Section\ifnum\pageoption=2~\ref{sec: proof_lem_estimated_beta_bound} \else~SM3.3 \fi in Supplementary Material.
In terms of regret bound analysis, the primary difference in this setting compared to the previous one is that regret will accrue from the observations drawn during the initial phase.
In particular, the cumulative regret incurred is given by,
\begin{align}
    \mathcal{R}_T(\pi) & = \E[\sum_{t=1}^T g^{(*)}(X_t) - g^{(\pi_t(X_t))}(X_t)] \nonumber\\
    &= \E\left[\sum_{t=1}^{t_{\rm init}} (g^{(*)}(X_t) - g^{(\pi_t(X_t))}(X_t))  + \sum_{t=t_{\rm init} + 1}^T (g^{(*)}(X_t) - g^{(\pi_t(X_t))}(X_t))\right] \nonumber\\
    & \leq 2t_{\rm init} + \E \left[\sum_{t=t_{\rm init} + 1}^T (g^{(*)}(X_t) - g^{(\pi_t(X_t))}(X_t))\right] \label{eq: regret_phase1_bd_phase2}\\
    & =: 2t_{\rm init} + \mathcal{R}_{T-t_{\rm init}}(\pi;\beta). \nonumber
\end{align}
where \eqref{eq: regret_phase1_bd_phase2} follows from the fact that $|Y_t| \le 1$. 

The size of the first batch $t_{\rm init}$ needs to be chosen to balance two competing factors: achieving sufficient accuracy in estimating the single-index parameter while not incurring too much regret.
{Theorem~\ref{thm: regret_bound} requires the working direction $\beta$ to be within a {$\{T/(K\log T)\}^{-1/3}$} neighborhood of $\beta_0$, up to sign. Therefore, to ensure that the estimated direction $\beta$ is sufficiently accurate, we consider the initial phase length as $t_{\rm init} \asymp  K^{1/3}{\rm polylog} (T)T^{2/3}$ where the polylogarithmic factor is taken with sufficiently large exponent so that
\begin{equation}\label{eq: tinit_bound}
    \frac{\sqrt{K} \textrm{polylog}(T)}{\sqrt{t_{\rm init}}} \lesssim \left(\frac{K\log T}{T}\right)^{1/3}.
\end{equation}
}

\begin{theorem} \label{thm: thm2}
Suppose Assumptions \ref{assum: Smoothness}--\ref{assum: cond_X} hold. 
Also, assume that the estimates from Algorithm \ref{algorithm: Initial_Dir_Estimation} satisfy Assumption \ref{assum: index_rate}.
Let $\alpha \leq 1$, {$K = O(\log T)$}, and $M = O(\log{T})$. 
Consider the algorithm $\pi$, which first executes Algorithm \ref{algorithm: Initial_Dir_Estimation} {during the initialization batch} of size $t_{\mathrm{init}} \asymp K^{1/3}\mathrm{polylog}(T)T^{2/3}$, followed by Algorithm \ref{algorithm: SIRBatchedBinning} {over the remaining $M-1$ batches using the binning and batch schedule from \eqref{eq: wi_formula} and \eqref{eq: batch_size}. }
Then, the expected regret for the resulting algorithm $\pi$ is upper bounded by,
{
\begin{align*}
    \mathcal{R}_T(\pi) \leq C_3\, {\rm polylog} (T) \max\{K^{1/3}T^{2/3} , K^{1-\beta_M}T^{\beta_M}\},
\quad
\text{where }
\gamma = \frac{1+\alpha}{3}\mbox{ and }\beta_M = \frac{1-\gamma}{1-\gamma^M}.
\end{align*}
}
Here $C_3>0$ is a constant that depends only on the single index parameter $\beta$ and other constants such as $\alpha,  D_0, L, R_X, \overline{c}_X, \underline{c}_X$.
\end{theorem}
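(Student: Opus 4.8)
The plan is to leverage the regret decomposition already recorded in \eqref{eq: regret_phase1_bd_phase2}, $\mathcal{R}_T(\pi) \le t_{\rm init} + \mathcal{R}_{T-t_{\rm init}}(\pi;\beta)$, and to reduce the second term to Theorem \ref{thm: regret_bound}. The initial-phase contribution is immediate: with $t_{\rm init}\asymp {\rm polylog}(T)\,T^{2/3}$ it is of order ${\rm polylog}(T)\,T^{2/3}$, already within the target. Write $R_{\mathrm{2nd}}$ for the random second-phase regret, so that $\mathcal{R}_{T-t_{\rm init}}(\pi;\beta)=\E[R_{\mathrm{2nd}}]$; the whole task is to show $\E[R_{\mathrm{2nd}}] \lesssim {\rm polylog}(T)\max\{T^{2/3},T^{(1-\gamma)/(1-\gamma^M)}\}$ when BIDS is run with the estimated direction $\hat\beta$ from Algorithm \ref{algorithm: Initial_Dir_Estimation}.

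First I would check that $\hat\beta$ meets the precision required by Theorem \ref{thm: regret_bound}. Let $E$ be the event on which Lemma \ref{lem: estimated_beta_bound} holds; then $\P(E)\ge 1-2C_4(t_{\rm init}/4)^{-\phi}$ and on $E$ we have $\sin\angle\hat\beta,\beta_0 \le \tilde C\,{\rm polylog}(t_{\rm init})/\sqrt{t_{\rm init}}$. Substituting $t_{\rm init}\asymp {\rm polylog}(T)\,T^{2/3}$ and using $\log t_{\rm init}\asymp \log T$, this bound is $\asymp (\log T)^{p-q/2}T^{-1/3}$, where $p$ is the polylog exponent of Lemma \ref{lem: estimated_beta_bound} and $q$ the polylog exponent built into $t_{\rm init}$. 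Taking $q\ge 2p$ makes it $\le C_0 T^{-1/3}$, so Assumption \ref{assum: initial_beta_rate} holds on $E$ with $\xi=1$; the companion bound \eqref{eq: init_vec_bound} also guarantees $\hat\beta_{sgn}\in\bB_2(R_0;\beta_0)$ for large $T$, matching the hypothesis of Theorem \ref{thm: regret_bound}.

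The main obstacle is that Assumption \ref{assum: initial_beta_rate} holds only on the random event $E$, whereas Theorem \ref{thm: regret_bound} treats the direction as fixed and admissible; I would therefore split $\E[R_{\mathrm{2nd}}]=\E[R_{\mathrm{2nd}}\mathbf{1}_E]+\E[R_{\mathrm{2nd}}\mathbf{1}_{E^c}]$ and exploit an independence structure. The key point is that $\hat\beta$ is measurable with respect to the initial phase, while BIDS uses the fresh i.i.d.\ covariates $X_{t_{\rm init}+1},\dots,X_T$, which are independent of that phase. Conditioning on the initial-phase $\sigma$-field, $E$ becomes a fixed event, $\hat\beta$ a fixed admissible direction, and the BIDS phase a fresh run on horizon $T-t_{\rm init}\asymp T$ with the schedule \eqref{eq: bi_formula}--\eqref{eq: batch_size}; applying Theorem \ref{thm: regret_bound} to this conditional expectation gives $\E[R_{\mathrm{2nd}}\mathbf{1}_E]\le C_2 M\log(T)\,T^{(1-\gamma)/(1-\gamma^M)}$. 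On $E^c$ I would use the deterministic bound $R_{\mathrm{2nd}}\le T$ (each per-step gap is at most $1$ because $|Y_t|\le 0.5$) together with $\P(E^c)\le 2C_4(t_{\rm init}/4)^{-\phi}$; since $\phi\ge 1$ and $t_{\rm init}\gtrsim T^{2/3}$, this yields $\E[R_{\mathrm{2nd}}\mathbf{1}_{E^c}]\lesssim T\cdot T^{-2\phi/3}\le T^{1/3}$, which is negligible.

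Collecting the three pieces gives $\mathcal{R}_T(\pi)\lesssim {\rm polylog}(T)\,T^{2/3}+M\log(T)\,T^{(1-\gamma)/(1-\gamma^M)}+T^{1/3}$. Absorbing $M\log T\lesssim(\log T)^2$ into the polylog factor (valid since $M=O(\log T)$) and dropping the dominated $T^{1/3}$ term produces the claimed $\mathcal{R}_T(\pi)\le C_6\,{\rm polylog}(T)\max\{T^{2/3},T^{(1-\gamma)/(1-\gamma^M)}\}$. The genuinely delicate steps are the polylog bookkeeping that secures $\xi=1$ and, above all, the measurability-and-independence argument that legitimizes invoking the fixed-direction guarantee of Theorem \ref{thm: regret_bound} conditionally on the data-dependent estimate $\hat\beta$; a minor additional point is the harmless discrepancy between the number of BIDS batches and $M$, which does not affect the stated order.
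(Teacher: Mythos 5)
Your proposal is correct and follows essentially the same route as the paper's proof: the decomposition $\mathcal{R}_T(\pi) \le t_{\rm init} + \mathcal{R}_{T-t_{\rm init}}(\pi;\beta)$, a split of the second-phase regret on the good event where Lemma \ref{lem: estimated_beta_bound} gives $\sin\angle\hat\beta,\beta_0 \lesssim T^{-1/3}$ (so Assumption \ref{assum: initial_beta_rate} holds with $\xi=1$ and Theorem \ref{thm: regret_bound} applies), and a trivial bound times $\P(\mathcal{E}_\beta^c)\lesssim t_{\rm init}^{-\phi}$ on the bad event. Your explicit conditioning-on-the-initial-phase argument to justify invoking the fixed-direction Theorem \ref{thm: regret_bound} with the data-dependent $\hat\beta$ is a careful spelling-out of a step the paper performs implicitly, not a different approach.
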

The proof is deferred to Section\ifnum\pageoption=2~\ref{sec: regret_proof_thm2} \else~SM3.4 \fi in Supplementary Material.

\begin{remark}
Compared to the bound in Theorem \ref{thm: regret_bound}, the bound in Theorem \ref{thm: thm2} reflects an additional price for not knowing the pilot index. {The first term $K^{1/3}T^{2/3}$ arises from the initial phase needed to estimate the index vector with sufficient accuracy, while the second term $K^{1-\beta_M}T^{\beta_M}$ matches the known-pilot bound up to a polynomial factor.}

{In terms of dependence on $T$}, however, in certain problem instances, we can still achieve the same rates as those in Theorem \ref{thm: regret_bound}. In Theorem \ref{thm: thm2}, it is easy to note that the second term dominates when $2/3 \le \beta_M=\frac{1-\gamma}{1-\gamma^M}$, 
which simplifies to
\begin{align}
(1+\alpha)^M - (3^M\alpha)/2 \ge 0.\label{eq: condition_on_xi_thm2}
\end{align}
This implies that, for example, when the number of batches after the initial batch is $M=2$,  the rate in Theorem \ref{thm: thm2} matches with that of Theorem \ref{thm: regret_bound} for $0<\alpha \le 0.5$.
The range of $\alpha$ for which the rate without a pilot estimate matches with the rate with a pilot estimate becomes smaller as the number of batches increases. 
For instance, when $M$ is large enough that $\gamma^M \approx 0$, only $\alpha\approx 0$ satisfies \eqref{eq: condition_on_xi_thm2}. That is, the rate without a pilot estimate is optimal only under the near-zero margin condition.
At the other extreme, when $\alpha=1$, the regret grows as $\tilde{O}(T^{2/3})$, whereas when the pilot estimate is known (as in Theorem \ref{thm: regret_bound}), the regret grows as $\tilde{O}(T^{1/3})$. 

{
One possible explanation for this gap lies in the algorithmic structure of BIDS. 
The batched elimination procedure relies on a hierarchical binning scheme constructed along a fixed projection direction, where bins are refined across batches through a nested parent–child partition. When the index direction is unknown, we must first estimate it using an initialization phase before the binning procedure can be applied. This requires an initial sample size large enough to ensure that the projection error is smaller than the spatial resolution of the binning algorithm. As a result, the initialization batch size must scale as 
$t_{\rm init} \asymp K^{1/3}\,\mathrm{polylog}(T)\,T^{2/3}$, which in turn introduces the additional $\widetilde O(K^{1/3}T^{2/3})$ term in the regret bound. An adaptive scheme that re-estimates $\beta_0$ at each batch using all samples accumulated up to that point, matching the estimation error to the current bin resolution rather than the finest, could potentially reduce this initialization cost; however, updating the projection direction across batches breaks the nested partition structure, making such an extension nontrivial. We leave this as an interesting direction for future work.
Nevertheless, it is encouraging to note that even without a pilot estimate, BIDS still achieves a sub-linear regret corresponding to the effective dimension $d=1$}, despite using some initial data to estimate $\beta_0$. 

\end{remark}

\section{Simulation Study} \label{sec: 5_simulation}
In this section, we present numerical experiments to illustrate the performance of the proposed BIDS algorithm (Algorithm \ref{algorithm: SIRBatchedBinning}) in comparison to the nonparametric analogue: Batched Successive Elimination with Dynamic Binning (BaSEDB) algorithm of \cite{jiang2025batched}. We consider both cases discussed in Section \ref{sec: regret_upper_bound}: 1) when the pilot direction is available under varying degrees of accuracy, and 2) when the pilot direction is unknown and estimated using the initial $t_{\rm init}$ amount of data, under varying signal-to-noise level settings.

\paragraph{Simulation settings}
We first consider $K=2$ arm setting, where the mean reward functions $g^{(1)}$ and $g^{(2)}$ follow a single index model structure with the shared parameter $\beta_0\in \R^d$, i.e., 
\begin{align*}
    g^{(k)}(x) = f^{(k)}(x^\top \beta_0),\,\,k=1,2,
\end{align*}
where $f^{(1)},f^{(2)}:[l,u] \to \R$ are link functions for arm $1$ and $2$, with $d=5$ fixed throughout. 

First, the index vector $\beta_0$ is simulated by generating a scaled normal random vector. Specifically, we first draw $u \sim N_d(0,I_d)$ and then let $\beta_{0} = u / \|u\|_2$. 
For the covariate distribution, we let each $X_t \in \R^d$ follow a truncated multivariate normal distribution for $t=1,\dots,T$, i.e., $X_t \sim N_{T}(0,\Sigma_X)$
whose density is given by: 
\begin{align*}
    f_{X}(x) = \begin{cases}
        \frac{1}{Z(\Sigma_X)} \exp\{-\frac{1}{2}x^\top\Sigma_X^{-1}x\}   & x\in \mathcal{H}\\
        0 & \text{otherwise,}
    \end{cases}
\end{align*}
with $\Sigma_X = 5^2I_d$. The normalization constant $Z(\Sigma_X)$ is given by $Z(\Sigma_X) =\int_{x\in \R^d} e^{-\frac{1}{2}x^\top\Sigma_X^{-1}x} 1\{x\in \mathcal{H}\} dx $ with the truncation region $\mathcal{H} = \prod_{j=1}^d 1\{|x_j|\le 3\}$. 
Additionally, we have considered other covariate distributions, including the Normal distribution without truncation and the uniform distribution. The results were qualitatively similar to those presented below for the truncated normal case and are presented in Section\ifnum\pageoption=2~\ref{sec: additional_simulations} \else~SM5.1 \fi in Supplementary Material.

To define $1$-dimensional link functions, first let us define,
\begin{align} \label{eq: fx_simulation}
    f(x) &= a+\frac{2}{B}\sum_{j=1}^{B/2}v_j \,\phi\left(\frac{B}{u-l}(x-q_j)\right),
\end{align}
where $q_j = l + (2j-1) \frac{u-l}{B}$ for $j=1,\dots,B/2$, 
$\phi(x) = (1-|x|) 1\{|x|\le 1\}$, 
$v_j$ for $j=1,\dots,B/2$ are Rademacher random variables ($v_j \in \{-1,1\}$), and $l,u = \mp 3\sqrt{d}$.

We consider two simulation settings for the link functions as illustrated in Figure \ref{fig:simulation_settings}.\\
 \textbf{Setting 1: }
$
    f^{(1)}(x) = f(x) \ \text{with} \ a = 0.5, B = 8, \ \text{and} \ 
    f^{(2)}(x) = \frac{1}{2} + x.
$ \\
\textbf{Setting 2: }
$   f^{(1)}(x) = f(x)\ \text{with} \ a = 0.5, B = 8, \ \text{and} \ 
    f^{(2)}(x) = f(x)\  \text{with} \ a = 0.75, B = 5.$
\begin{figure}[htbp]
    \centering
    \includegraphics[width=0.4\linewidth]{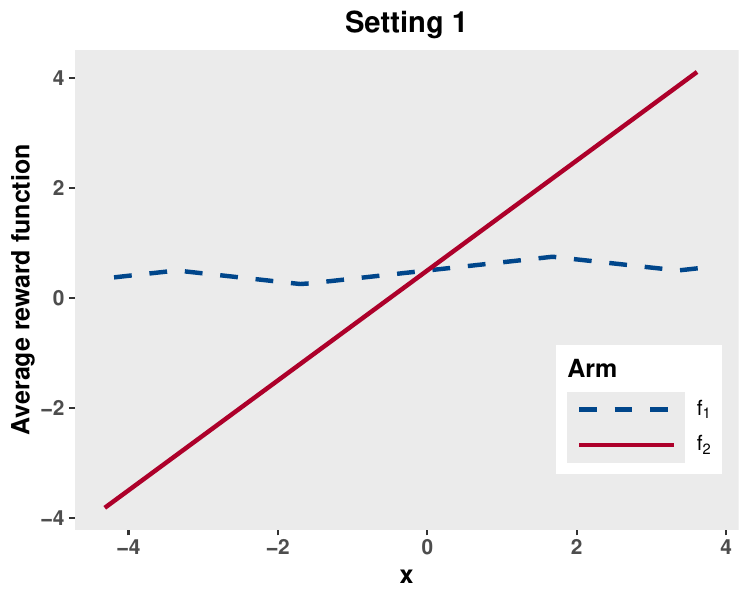}
    \includegraphics[width=0.4\linewidth]{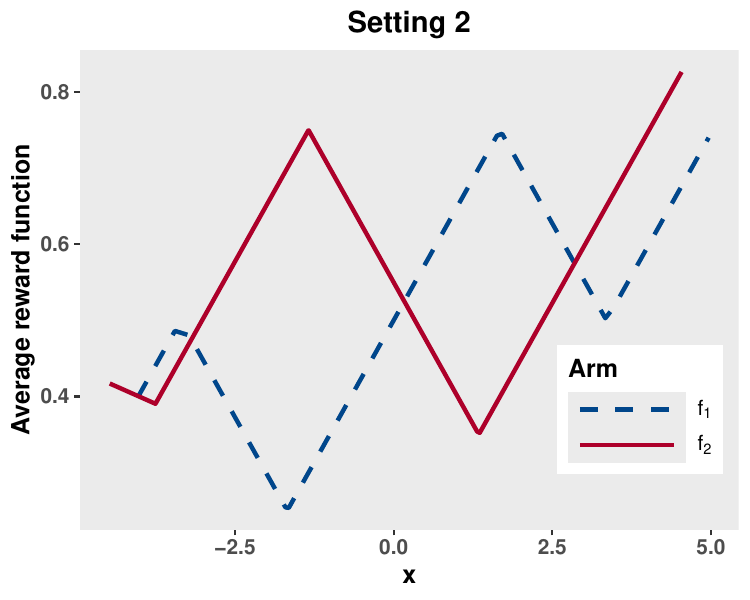}
    \caption{Mean reward functions for the two simulation settings}
    \label{fig:simulation_settings}
\end{figure}

We let $Y_t^{(k)} = f^{(k)}(X_t) + \epsilon_t$, where $\epsilon_t \overset{\text{i.i.d.}}{\sim} N(0, \sigma^2)$ for $t=1,\dots,T$, with $\sigma^2 > 0$, representing the noise variance.
In the first case, where we test the performance of the BIDS algorithm with varying accuracies of pilot directions, we set $\sigma^2 = 0.01^2$. In the second case, where we estimate the initial direction under different noise levels, we set $\sigma \in \{1,\dots,8\}$ for setting 1 and $\sigma \in \{0.02,0.09,0.16,\dots,1\}$ for setting 2, with time horizon $T = 10^6$. 

To further illustrate how the number of arms affects the performance of the BIDS algorithm, we also consider additional experiments where $K \in \{3,5,8\}$ while keeping the remaining simulation setup identical to Setting~1. The corresponding reward functions follow the same single-index construction as described earlier; examples of these multi-arm reward profiles are shown in Figure\ifnum\pageoption=2~\ref{fig:Kgr2_rewardfuncs} \else~SM10 \fi and more details can be found in Section\ifnum\pageoption=2~\ref{sec: simulation_multiarm} \else~SM5.2 \fi in Supplementary Material.

\paragraph{Algorithm set-ups} 
Both BIDS and BaSEDB algorithms require specifying the number of batches $M$ and the grid points $\{t_i\}_{i=0}^M$. {In both cases, we set the total number of batches to ($M = 5$). When the pilot direction is unknown, the first batch serves as an initialization phase, and BIDS is applied to the remaining four batches.} For the BaSEDB algorithm, we follow the specifications described in \cite{jiang2025batched} for choosing grid points. For the BIDS algorithm (Algorithm \ref{algorithm: SIRBatchedBinning}), in the first case with known pilot directions, we make grid point choices according to \eqref{eq: bi_formula} and \eqref{eq: wi_formula}, and in the second case with unknown pilot directions, the initial batch size is set to $T^{2/3}$, and the remaining time points are partitioned according to the same rules. 
In addition, in the latter case, Algorithm \ref{algorithm: Initial_Dir_Estimation} requires specifying an SIR algorithm and arm weights. For the SIR algorithm, we use the SADE estimator from \cite{babichev2018slic}  (Algorithm\ifnum\pageoption=2~\ref{algorithm: SADE} \else~SM4.1 \fi in Supplementary Material) and we used equal arm weights $\omega_k = 1/2, k = 1,2$ for combining directions from each arm. 
Additionally, both algorithms require specifying the endpoints for hierarchical partitioning: $[L_\beta,U_\beta]$ such that $L_\beta \le x^\top \beta \le U_\beta$ for the BIDS algorithm, and $[L,U]$ such that $L \le x_j \le U$ for all $j=1,\dots,d$ for the BaSEDB algorithm. We constructed these intervals based on the observed minimum and maximum values from i.i.d. samples for each arm in the first batch, and expanded them by 20\%.  More specifically, we obtained the minimum $a$ and maximum $b$, where $a = \min_{t\in (t_0,t_1]}x_t^\top \beta$ and $b = \max_{t\in (t_0,t_1]}x_t^\top \beta$ in BIDS algorithm and $a = \min_{t\in (t_0,t_1]} \min_{1\le j\le d}x_{tj}$ and $b = \max_{1\le j\le d}x_{tj}$ in BaSEDB algorithm.  The interval was then set as $[\frac{a+b}{2}-\frac{C(b-a)}{2}, \frac{a+b}{2}+\frac{C(b-a)}{2}]$ with $C=1.2$.

\paragraph{Results}
We run each algorithm 20 times and report the average regret  in Figures \ref{fig:simulation_knownpilot} and \ref{fig:simulation_unknownpilot} for the two settings. Batch endpoints are marked by the vertical solid black (SIR) and dashed blue (nonparametric)  lines in both figures.

\vspace{0.5em}
\noindent \textbf{Case I (given pilot directions with varying accuracies) } 
In this set-up, we compare the performance of BIDS and BaSEDB when a pilot direction is available with varying levels of accuracies. 
Specifically, we set the initial index parameters $\beta$ for the BIDS algorithm so that $\theta = \angle \beta,\beta_0  \in  \{0.01, 0.16, 0.31 \dots, \pi/2\}$. The corresponding $\sin (\theta)$ ranges from $0$ to $1$, where, $\sin(\theta)=0$ implies that $\beta$ is identical to $\beta_0$ up to a sign change, and $\sin(\theta)=1$ implies that the two vectors are orthogonal. 
\begin{figure}[htbp]
    \centering
    \begin{tabular}{cc}
         (a) & (b) \\
        \includegraphics[width = 0.4\linewidth]{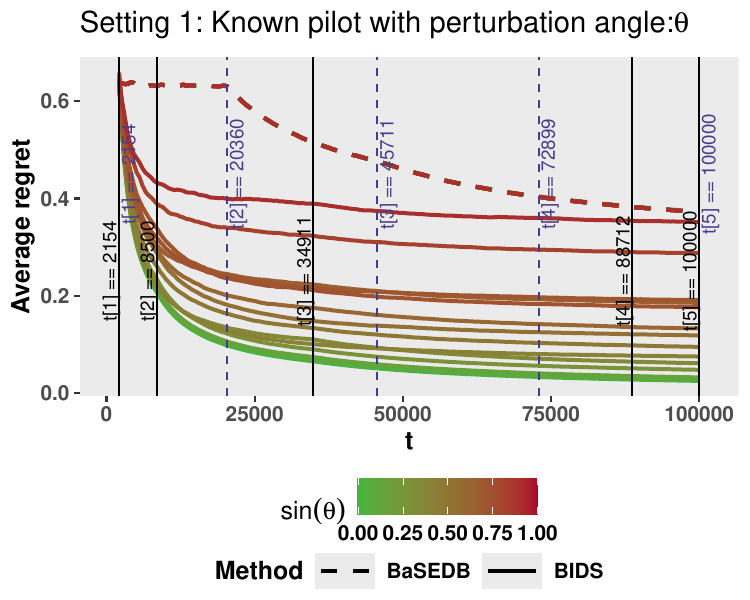} & \includegraphics[width = 0.4\linewidth]{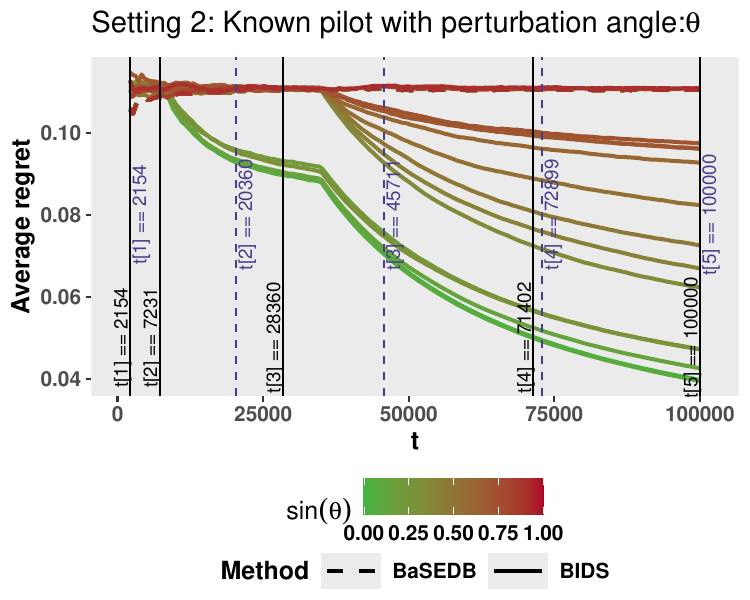}
    \end{tabular}
\caption{Average regret ($(\mathcal{R}_t)_{t=1}^T$) with pilot directions $\beta$ with varying accuracy, measured by $\sin \theta = \sin \angle \beta,\beta_0$ for the two simulation settings. Different colors of the solid lines represent different levels of perturbation, where $\sin\angle \beta, \beta_0 =0$ corresponds to no perturbation, and $\sin\angle \beta, \beta_0 =1$ corresponds to orthogonal vectors. As the degree of perturbation increases, performance deteriorates but still beats the nonparametric analogue.}
    \label{fig:simulation_knownpilot}
\end{figure}

Figure \ref{fig:simulation_knownpilot} presents the average regrets of the BIDS algorithm with pilot directions of varying accuracies, compared to BaSEDB algorithm. 
As the perturbation level increases, the performance of the BIDS algorithm with the perturbed pilot estimate declines. However, it consistently outperforms the nonparametric batched bandit algorithm (BaSEDB), even under high perturbations. Interestingly, in Figure \ref{fig:simulation_knownpilot}(b), we observe that in Setting 2—where the two mean reward functions exhibit greater overlap—the BaSEDB algorithm never eliminates an arm. Consequently, its average regret (dashed red line) does not decay over time.  Moreover, when the perturbation angle exceeds $\pi/3$ in Settings 1  and $\pi/4$ in Setting 2, BIDS performance  deteriorates to the level of its nonparametric counterpart.

\vspace{0.5em}
\noindent \textbf{Case II (no pilot directions) }  
For the case when the pilot estimate is not available, in Figure \ref{fig:simulation_unknownpilot}, we assess the algorithmic performance for varying degrees of model noise $\sigma$.  We also included BIDS (oracle), which uses the true $\beta_0$ as the initial direction. 

\begin{figure}[htbp]
    \centering
    \begin{tabular}{cc}
    (a) & (b)\\
         \includegraphics[width = 0.4\linewidth]{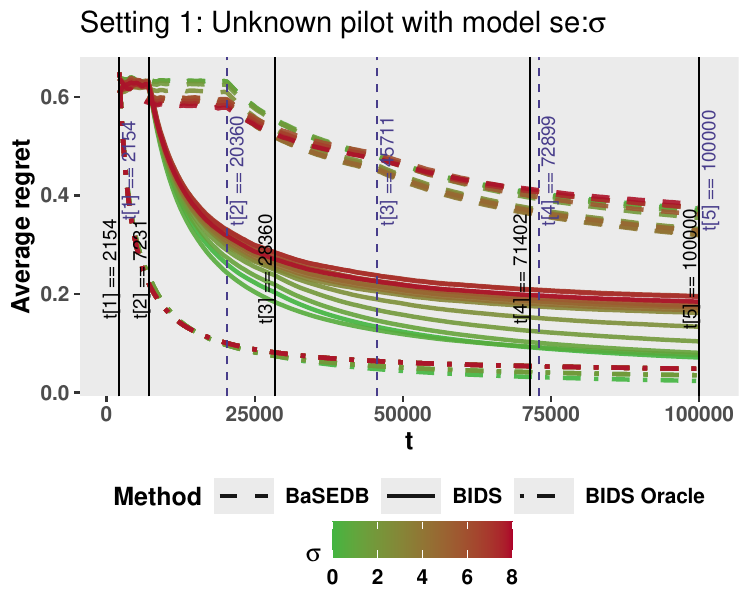}    & \includegraphics[width=0.4\linewidth]{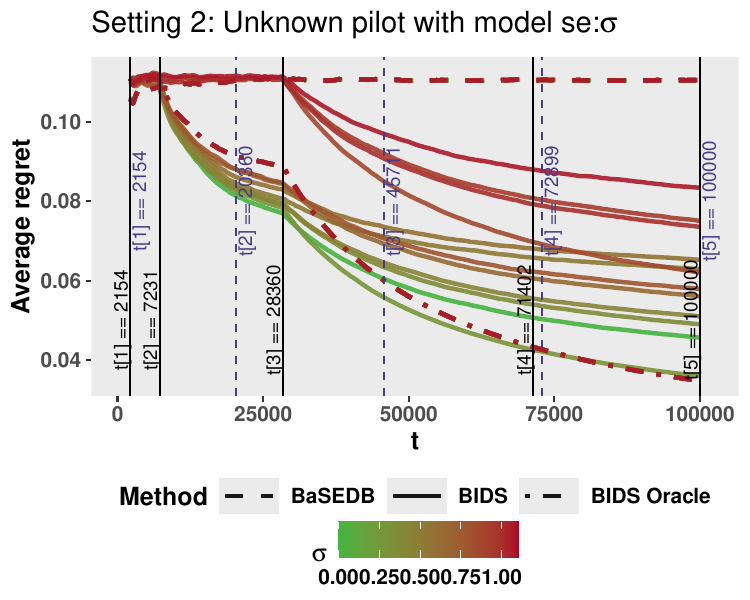} 
    \end{tabular}
\caption{Average regret ($(\mathcal{R}_t)_{t=1}^T$) with varying model noise $\sigma$ for the two simulation settings. As the noise level increases, while the performance of the BIDS algorithm (solid) remains better than the nonparametric analogue (dashed), but deviates further from the BIDS oracle (dashed-dotted). }
    \label{fig:simulation_unknownpilot}
\end{figure}

Note that in setting 1, the two mean reward functions are well-separated, while in setting 2, they have more of an overlap in various regions. Therefore, even with higher model error in setting 1, it is easier to maintain low regret as can be seen in Figure \ref{fig:simulation_unknownpilot}(a).  We consider the standard deviation to be ranging from $\sigma \in \{1,2,\hdots,8\}$ for setting 1 while $\sigma \in \{0.02,0.09,0.16,\hdots,1\}$ for setting 2. 
From Figure \ref{fig:simulation_unknownpilot}, we see that in both settings, the BIDS algorithm appears to perform better than the BaSEDB algorithm for all the noise variance levels. As expected, the performance of the BIDS algorithm (solid) as compared to the oracle BIDS algorithm (dotted-dashed) deteriorates as the noise grows, as the higher noise levels reduce the accuracy of the initial direction vectors.
\begin{figure}[htbp]
\centering
\begin{tabular}{cc}
(a) & (b) \\
\includegraphics[width=0.4\linewidth]{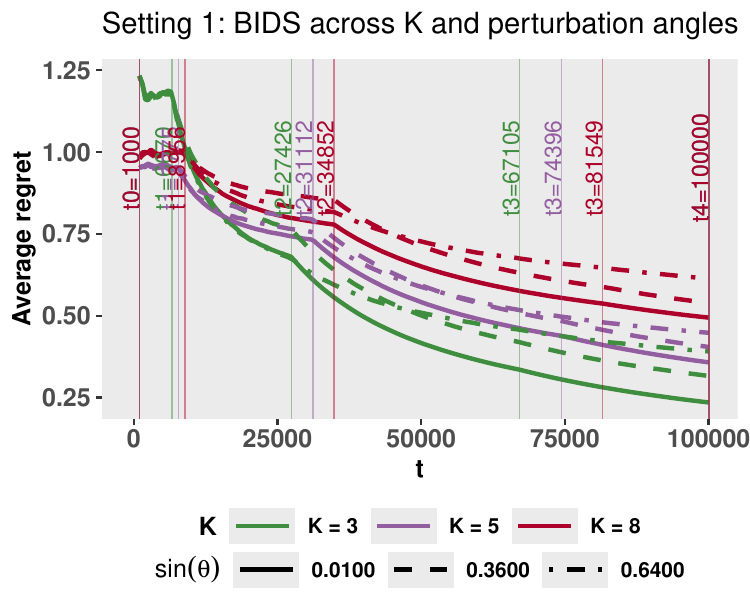} &
\includegraphics[width=0.4\linewidth]{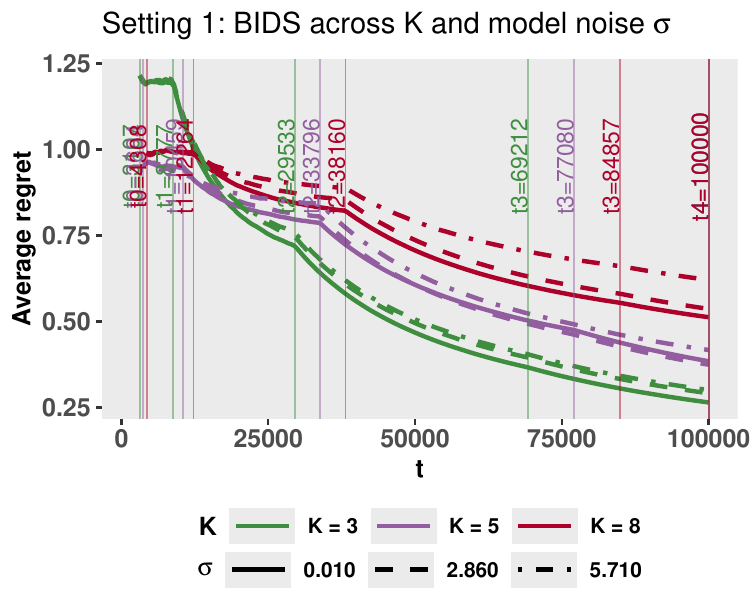}
\end{tabular}
\caption{Average regret of the BIDS algorithm for $K\in\{3,5,8\}$ in Setting 1. 
(a) Known pilot direction with varying perturbation angles $\theta$. 
(b) Unknown pilot direction estimated from data under different model noise levels $\sigma$. 
In both panels, colors represent different numbers of arms $K$, while line types correspond to the perturbation levels $\theta$ in (a) and noise levels $\sigma$ in (b). Vertical lines indicate the batch endpoints of the BIDS algorithm.}
\label{fig:simulation_K_scaling}
\end{figure}

\vspace{0.5em}
\noindent \textbf{Case I and II when the number of arms $K>2$} 
{When the number of arms $K>2$, Figure~\ref{fig:simulation_K_scaling} illustrates the performance of the BIDS algorithm for $K\in\{3,5,8\}$ under Setting 1. The qualitative conclusions were similar under Setting 2 for $K > 2$ arms as shown in the Appendix. In panel (a), we consider the case where a pilot direction is available but perturbed, and vary the angle $\theta=\angle(\beta,\beta_0)$ between the supplied pilot direction $\beta$ and the true index $\beta_0$. {Within the range of $K$ considered, the observed regret increases with $K$, which is consistent with the predicted $K$-dependence of the regret bound. This reflects the increased difficulty of identifying the optimal arm, as well as the larger batch sizes prescribed by the algorithm.} Nevertheless, the regret curves continue to decay over time for all values of {$K \in \{3,5,8\}$}. Smaller perturbation angles lead to faster regret decay, while larger perturbations slow learning but still yield sublinear regret.}
Panel (b) considers the setting where the pilot direction is unknown and $\beta_0$ is estimated using the initialization procedure with batch size $t_{\rm init}\asymp K^{1/3}\mathrm{polylog}(T)T^{2/3}$. We vary the model noise level $\sigma$ while keeping the rest of the simulation setup identical to Setting~1. 
As $\sigma$ increases, the regret curves shift upward due to reduced accuracy in estimating the index direction, but the regret continues to decrease over time across all values of {$K \in \{3,5,8\}$}.

\begin{remark}[Computation considerations]
In terms of computation, the GMABC framework and the BIDS algorithm have a key advantage over the BaSEDB algorithm, as the number of bins that needs to be tracked does not grow with the covariate dimension. In contrast, the number of bins in BaSEDB algorithm grows exponentially with the covariate dimension, making implementation challenging even for moderately large dimensions.

\end{remark}

\section{Application to Real Data} \label{sec: 6_realdata}
We compare the performance of the batched single-index and batched nonparametric BaSEDB algorithm on three publicly available real datasets:
\begin{enumerate}
    \item[a)] Rice classification \cite{rice_(cammeo_and_osmancik)_545}: Classifying rice into two Turkish varieties, namely, Cammeo and Ormancik, using 7 morphological features extracted from 3810 rice grain's images. 
    \item[b)] Occupancy Detection \cite{occupancy_detection__357}: Experimental data used for binary classification (room occupancy) from Temperature, Humidity, Light and $CO_2$.
    \item[c)] EEG Eye State \cite{eeg_eye_state_264}: This dataset records EEG measurements  with binary labels indicating whether the eyes were open. The features consist of 14 EEG channels, labeled AF3, F7, F3, FC5, T7, P, O1, O2, P8, T8, FC6, F4, F8, AF4.
\end{enumerate} All these datasets involve classification tasks using some features. Accordingly, we take the number of decisions $K$ to be the number of classes and consider a binary reward, which is 1 if we select the correct class and 0 otherwise. The dimension of the features for datasets (a)–(c) is 7, 5, and 14, with two arms each, respectively. The number of rows in (a)-(c) are 3809, 8143, and 14980, therefore we choose the number of batches to be 5,6, and 7, respectively.

\paragraph{Setup} We leverage supervised learning classification datasets to simulate contextual bandits learning (e.g., see \cite{bietti2021contextual}). In particular, let $(x_t, c_t) \in \mathbb{R}^d \times \{1,2\}$ row in the dataset where $x_t$ is the context and $c_t$ is the true label for the $t$th instance. We consider this $t$th row as a contextual bandit instance with $x_t$ as given to the bandit algorithm, and we only reveal a binary reward of the chosen action $a_t$ to be 1 if it matches the true label $c_t$ and 0 otherwise. Therefore, for arms $a_t \in \{1,2\}$, we consider the model in \eqref{def:sim} and its non-parametric analogue: $Y_t = g^{(a_t)}(X_t) + \epsilon_t$, where $Y_t \in \{0,1\}$ based on whether the chosen arm is a correct match or not. Note, since we only observe one arm at a given instance $t$, we only observe the reward corresponding to the chosen arm $a_t$ at that particular instance. Apart from comparing the nonparametric batched bandit (BaSEDB) performance with the BIDS algorithm proposed in Algorithm \ref{algorithm: SIRBatchedBinning}, we also consider an oracle BIDS algorithm where we estimate the index parameter $\beta_0$ using the entire dataset, and then use that for sequential decision-making in the BIDS algorithm. We randomly permute the data 60 times and measure the average regret performance of the three algorithms.

\paragraph{Results} We plot the average regret (rolling fraction of incorrect decisions over 60 trials with randomly permuted rows) as a function of the number of instances (rows) seen thus far for the following algorithms:
\begin{enumerate}
    \item Nonparametric batched bandit (BaSEDB algorithm) of \cite{jiang2025batched}.
    \item BIDS algorithm (Algorithm \ref{algorithm: SIRBatchedBinning}) with initial estimator from Algorithm \ref{algorithm: Initial_Dir_Estimation}.
    \item BIDS algorithm (Algorithm \ref{algorithm: SIRBatchedBinning}) with estimated `oracle' index, where the oracle direction is estimated by applying SADE algorithm to the entire dataset.
\end{enumerate}
\begin{figure}[htbp]
    \centering
\includegraphics[width = 0.85\linewidth, height=0.3\linewidth]{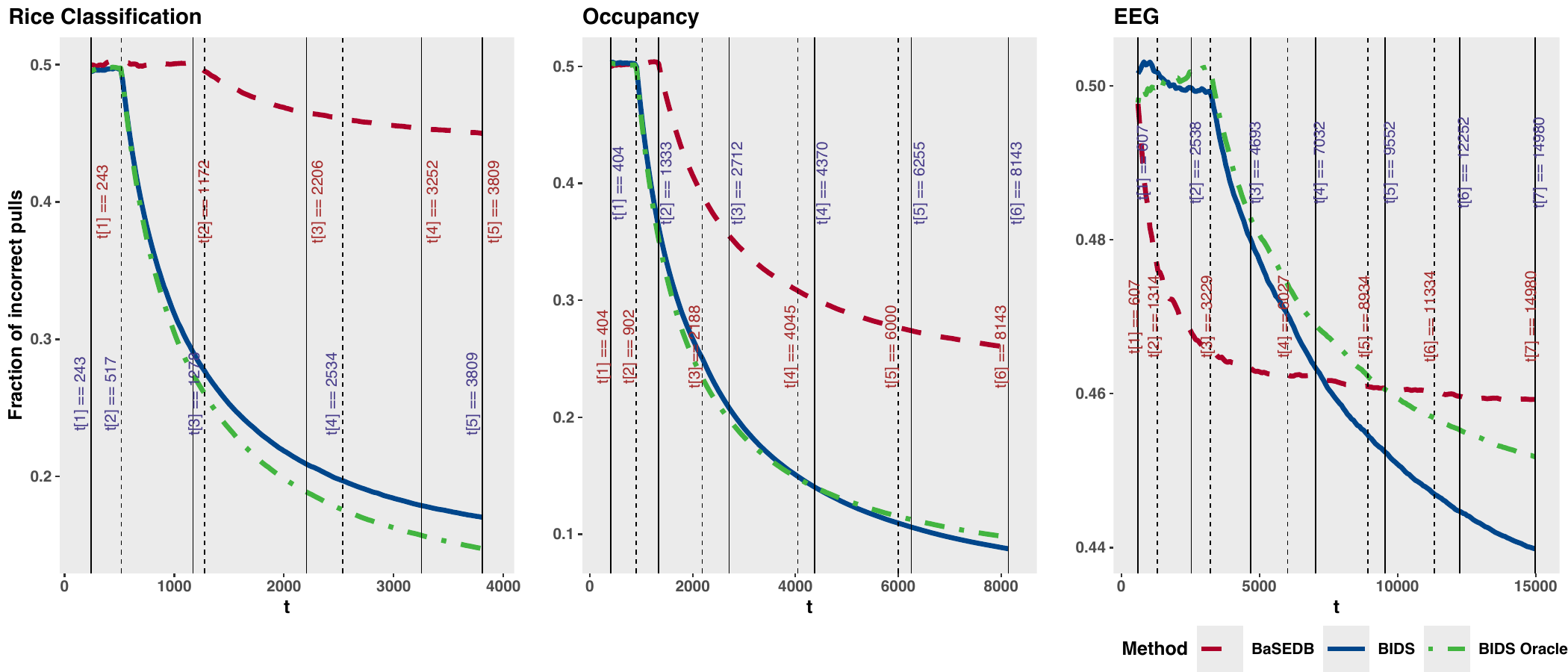}
    \caption{Comparison of expected regret of the proposed semiparametric BIDS algorithm and the nonparametric batched bandit algorithm (BaSEDB)  on a) rice classification, b) occupancy detection, and c) EEG datasets, with $\beta_0$ estimated in the initial phase with $t_{\rm{init}} \approx T^{2/3}$ for their respective data lengths $T$. Vertical solid and dashed lines denote the batch markings for the BIDS and BaSEDB algorithm, respectively. Observe that the BIDS outperforms BaSEDB in all instances, and for the Occupancy and EEG dataset it even performs similar/better to the BIDS oracle algorithm.}
    \label{fig:SIR_NP_ReadData}
\end{figure}
In Figure \ref{fig:SIR_NP_ReadData}, we notice that in all three datasets, the BIDS algorithm that we propose outperforms the nonparametric batched bandit (BaSEDB) algorithm of \cite{jiang2025batched}. We use $t_{\rm{init}} = T^{2/3}$ for each of the datasets. The vertical solid and dashed lines represent the batch end points for the GMABC and the nonparametric setup, respectively. In the Occupancy dataset, BIDS achieves performance comparable to the BIDS oracle algorithm. In the EEG dataset, although BaSEDB initially experiences a steep decline in regret, it eventually plateaus, whereas the regret for BIDS continues to decrease at a faster rate, surpassing BaSEDB after a certain point. To assess the effect of the initial sample size used for estimating the index parameter $\beta_0$, we compare performance across different values of $t_{\rm{init}}$ in Section\ifnum\pageoption=2~\ref{sec: additional_realdata} \else~SM5.3 \fi in Supplementary Material. The observed trends remain consistent: BIDS outperforms the nonparametric batched analogue across all three datasets. However, as the initial sample size increases, the average regret of BIDS approaches that of the oracle BIDS algorithm.

\begin{table}[htbp]
\centering
    \resizebox{.8\textwidth}{!}{
\begin{tabular}{lrrr}
  \hline
 & Rice Classification ($t_{\rm{init}} = 243$) & Occupancy ($t_{\rm{init}} = 404)$  & EEG ($t_{\rm{init}} = 607) $  \\ 
  \hline
$\beta_1$ & Area: 0.0279 (0.0206) & Temp: \textcolor{blue}{0.8326} (0.0817) & AF3:  0.0712 (0.0315) \\ 
 $\beta_2$ & Perimeter: \textcolor{blue}{-0.2979} (0.0247) & Humidity: -0.0036 (0.0046) & F7: {0.2979} (0.0266)  \\ 
$\beta_3$ & MajorAxis: \textcolor{blue}{0.4990} (0.0409) & Light: \textcolor{blue}{-0.0769} (0.0083)  & F3: 0.2088 (0.0387)  \\ 
$\beta_4$ & MinorAxis: \textcolor{blue}{-0.8085} (0.0762) & $CO_2$: \textcolor{blue}{-0.1310} (0.0151) & FC5: \textcolor{blue}{0.3310} (0.0170) \\ 
$\beta_5$ & Eccentricity: 0.0446 (0.0185) & HumidRatio: \textcolor{blue}{0.5327} (0.0782) & T7:  0.1372 (0.0638) \\ 
$\beta_6$ & Convex Area: \textcolor{blue}{0.0748} (0.0215) &    & P7: \textcolor{blue}{0.4034} (0.0512)  \\ 
$\beta_7$ & Extent: {0.0093} (0.0234) &    & O1: {0.2244} (0.0219) \\ 
$\beta_8$ &    &    & O2: {0.1807} (0.0236) \\ 
$\beta_9$&    &    & P8: \textcolor{blue}{0.3290} (0.0288) \\ 
$\beta_{10}$ &    &    & T8: {0.0832} (0.0304) \\ 
$\beta_{11}$ &    &    & FC6: {0.2663} (0.0183)  \\ 
$\beta_{12}$ &    &    & F4: {0.3146} (0.0314) \\ 
$\beta_{13}$ &    &    & F8: \textcolor{blue}{0.3213} (0.0199) \\ 
$\beta_{14}$ &    &    & AF4: {0.3164} (0.0266)  \\ 
   \hline
\end{tabular}}
\caption{Index parameter estimates used in the BIDS algorithm for the three datasets.}
\label{tab: beta_estimates}
\end{table}

\paragraph{Interpretability} In Table \ref{tab: beta_estimates}, we present the index parameter estimates for the three datasets using $t_{\rm{init}} = $  $243$, $404$, and $607$ ($\approx T^{(2/3)}$) observations, respectively. For each dataset with $d = 7,5, 14$, we report the estimated $\beta_i$ along with standard errors (over 60 replications).  Variable relevance is inferred from the magnitude of estimates, with the top four values per dataset highlighted in blue. In the Occupancy dataset, temperature, humidity ratio, light, and $CO_2$ levels are identified as key predictors, aligning with \cite{RoomOccupancy}. Similarly, in the Rice Classification dataset, our results agree with \cite{Cinarer2024RiceCA}, which suggests that `Extent' is not a useful feature in classifying rice into Cammeo and Osmancık rice types. Research on the EEG Eye State dataset has identified key features for distinguishing between eye-open and eye-closed states using EEG signals. These are derived from the 14 electrode channels, and the significant ones in Table \ref{tab: beta_estimates} (e.g., FC5, P7,  P8 and, F8) span all four brain regions as seen from Figure 2 in \cite{rosler2013first}. Right hemisphere channels (e.g., O2, P8, and F8) often show higher values for eye-open states, while left-hemisphere channels (e.g., F7, P7, and  T7) display other distinct patterns, aligning with \cite{rosler2013first,asquith2019classification}.

\section{Conclusion} \label{sec: 7_conclusion}

We propose a novel batched bandit framework that models reward functions using a semi-parametric single-index structure. 
By estimating a shared projection direction across arms, the BIDS algorithm reduces dimensionality and guides adaptive binning and successive arm elimination. We derive a lower bound for the {$K$-armed} GMABC problem and establish an upper bound that matches the lower bound {(for fixed $K$)} when the index parameter is available with sufficient accuracy, or, in its absence, when the margin parameter and batch size fall within a certain range of values {in terms of time horizon $T$}. Empirically, our method outperforms nonparametric baselines while offering substantial gains in interpretability and computational efficiency.

To the best of our knowledge, this is the first study to explore a single-index framework in contextual batched bandits, opening avenues for future research. An immediate open question is whether one can design an algorithm with a minimax-optimal upper bound that holds across all parameter regimes and batch sizes when the index is unavailable.
In this regard, one could draw on insights from recent work in transfer learning, specifically, by leveraging data collected from `source' bandits to estimate the index direction prior to initiating learning in the `target' bandit. Another promising direction is to estimate the index direction adaptively across batches by exploiting the margin condition, especially when the number of batches is moderate to large. Since interpretability is a key motivation of our work, developing formal inference procedures for the estimated index direction would further enhance practical utility.  In summary, our framework and proposed methodology bridge interpretability and flexibility in batched contextual bandits, offering both strong theoretical guarantees and practical gains. 

\section{Acknowledgment}
HS gratefully acknowledges partial support from NSF DMS-2311141.

\bibliographystyle{siamplain}
\bibliography{ref}
\ifnum\pageoption=2
\newpage
\setcounter{page}{1}
\renewcommand{\thepage}{SM\arabic{page}} 
\renewcommand{\thesection}{SM\arabic{section}}  
\renewcommand{\thetable}{SM\arabic{table}}  
\renewcommand{\thefigure}{SM\arabic{figure}}
\renewcommand{\theequation}{SM-\arabic{equation}}
\renewcommand{\theremark}{SM-\arabic{remark}}
\renewcommand{\thelemma}{SM-\arabic{lemma}}
\renewcommand{\theproposition}{SM-\arabic{proposition}}
\renewcommand{\thecorollary}{SM-\arabic{corollary}}
\setcounter{equation}{0}
\setcounter{section}{0}
\setcounter{table}{0}

\begin{center}
{\Large\bf Supplement Material}\\
\vspace{1em}
{\large Sakshi Arya and Hyebin Song}\\
\end{center}

\section{A summary table of notations}
First, to enhance readability, in Table \ref{tab:notations_long}, we provide a table of notations that are used in the paper and the proofs presented in this section.
\begin{table}[htbp]
\footnotesize
    \centering
     \resizebox{.85\textwidth}{!}{
    \begin{tabular}{|l|c|l|}
        \hline
        \textbf{Category} & \textbf{Notation} & \textbf{Description} \\
        \hline
        \multirow{2}{*}{\textbf{Problem setup}} 
        &  $T$  & Total time horizon \\
        & $M$ & Number of batches\\
        & $\mathcal{X}$ & Covariate space in $\mathbb{R}^d$\\
        & $\mathcal{G}$ & Partition of $\{1,\hdots, T\}$ in $M$ batches\\
        & $\{t_0, t_1,\dots,t_M\}$ & Batch end points\\
        & $R_T(\pi)$  & Cumulative regret of $\pi$\\
        & $\mathcal{R}_T(\pi)$  & Expected cumulative regret of $\pi$\\
         & $\angle u,v$ & Principal angle between $u$ and $v$: $\cos^{-1}(|u^\top v|)$\\
        \hline
        \multirow{2}{*}{\textbf{ Parameters}}  
        & $\beta_0$ & Index parameter \\
         & $\alpha$ & Margin parameter\\
         & $\{\omega_k\}_{k=1}^K$ & Weights for the average estimator\\
        \hline
        \multirow{2}{*}{\textbf{Algorithmic and Theory}}  
        & $\pi$ & Proposed BIDS algorithm\\
         & $\beta$ & Working direction\\
         & $\mathcal{I}_\beta: = [L_\beta, U_\beta]$ & Interval of projected covariates along $\beta$\\
        &  $t_{\rm{init}}$  & Initial batch size used when pilot unknown \\
        & $\hat{\beta}^{(k)}$ & Single index estimate for $k$th arm\\
        & $\hat{\beta}$ &  Initial index estimate of $\beta_0$\\
        & $\mathcal{T}$ & Tree of depth $M$\\
        & $\mathcal{A}_i$ & Partition of $\mathcal{I}_\beta = [L_\beta, U_\beta]$ at layer $i$\\ 
        & $w_i = |\mathcal{I}_\beta|/n_i$ & Bin width for $i$th layer\\
        & $b_l$ & Number of splits in layer $l$\\
        & $n_i$ & Number of equal width intervals in layer $i$\\
        & $\mathcal{T}_\mathcal{A}$ & $\cup_{i=1}^M \mathcal{A}_i$\\
        & $\mathcal{B}_i$ & Partition of $\mathcal{X}$ induced by $\mathcal{A}_i$\\
        & $C = C_A(\beta)$ & Bin in $\mathcal{X}$ corresponding to $A \in \mathcal{T}_\mathcal{A}$\\
        & $|C|_{\mathcal{T}}$ & width of $A$ for $C = C_A(\beta)$\\
        & $p(C) = p(C_A(\beta))$ & Parent bin of $C$ defined by $A$\\
        & $\text{child}(C)$ & Child bin of $C$ defined by $A$\\
        & $\mathcal{L}_t, \mathcal{L}^{(i)}$ & Set of active bins at time $t$/at batch $i$\\
       &  $\mathcal{J}_t$ &  $\cup_{s \leq t} \mathcal{L}_s$\\
        & $\mathcal{I}_C$ & Set of active arms in bin $C$\\
        & $\mathcal{I}^\prime_{C}$ & Set of active arms post arm-elimination in $C$ \\
        & $\underline{\mathcal{I}}_{C}, \overline{\mathcal{I}}_{C}, \mathcal{S}_C, \mathcal{G}_C$ & Sets defined in \eqref{def: I_C}, \eqref{def: GC}, \eqref{def: SC}\\
        & $U(m,T,C)$ & Threshold for arm elimination\\
        & $m_{C,i}$ & number of $X_t$'s falling in C during batch i \\
        & $m^*_{C,i}$ & $\E[m_{C,i}]$\\
        & SIR & Single-index regression\\
        & $\xi$, $c_B, R_X, \bar{c}_X, \underline{c}_X, L_0, D_0$ & Constants independent of $T$.\\
        \hline
    \end{tabular}}
    \caption{Summary of notations used in the paper}
    \label{tab:notations_long}
\end{table}

\section{Proofs for Section \ref{sec: 2_problem_setup}} \label{appendix: proofsofSection2}
\subsection{Proof for Lemma \ref{lem: tmtvn_condX}}
\begin{proof}
For any $v$, the density of $X^\top v$ is given by 
\begin{align*}
    f_{X^\top v}(u) = \begin{cases}
        \frac{1}{Z(v,\Sigma)} \exp\{-\frac{u^2}{2v^\top\Sigma v}\}  & x\in \T_v\\
        0 & \text{otherwise}
    \end{cases}
\end{align*}
where we define $\T_v:= \{x^\top v; v\in \mathcal{H}\}$ and $Z(v,\Sigma):= \int_{u \in \T_v} \exp\{-\frac{u^2}{2v^\top\Sigma v}\}du$. 

Let a unit vector $v$ be given such that $\|v\|_2=1$. First of all, we observe that $\T_v$ is an interval in $\R$. Note that $\mathcal{H}$ is a closed, convex set in $\R^d$. We can find $x_0(v), x_1(v) \in \mathcal{H}$ such that $x_0(v)^\top v = \min_{x\in \mathcal{H}} x^\top v:=L_0(v)$ and $x_1(v)^\top v = \max_{x\in \mathcal{H}} x^\top v:=L_1(v)$. Moreover, since the dual of the 
$\ell_{\infty }$-norm is the $\ell_1$-norm, $L_0(v) = -\|v\|_1$ and $L_1(v) = \|v\|_1$. Now we show for any $u \in [L_0(v), L_1(v)]$, $u \in \T_v$. Since $u \in [L_0(v), L_1(v)]$, we can find $t\in[0,1]$ such that $u = tL_0(v) + (1-t)L_1(v)$. Then $u = tx_0(v)^\top v + (1-t)x_1(v)^\top v = \{tx_0(v) +(1-t)x_1(v)\}^\top  v $. By convexity of $\mathcal{H}$, $tx_0(v) +(1-t)x_1(v) \in \mathcal{H}$, and therefore $u \in \T_v$, which shows that $\T_v = [L_0(v), L_1(v)] \subseteq \R$. 

Now let $R_0 = \|\beta_0\|_1 /(2\sqrt{d})$. Let $v \in \bB_2(R_0;\beta_0)$ be given such that $\|v\|_2=1$. We show that for any $u \in \T_v$, the density $f_{X^\top v}(u)$ is bounded below and above by constants $\underline{c}_X$ and  $\overline{c}_X$, which depend on model parameters $\beta_0$ and $\Sigma$, but independent of $v$. Recall that $L_0(v) = -\|v\|_1$ and $L_1(v) = \|v\|_1$. Since $|\|v\|_1 - \|\beta_0\|_1| \le \|v - \beta_0\|_1 \le \sqrt{d}R_0$, $|L_0(v) - L_0(\beta_0)| \le \sqrt{d}R_0$. Similarly, $|L_1(v) - L_1(\beta_0)| \le \sqrt{d}R_0$. In particular, $[L_0(\beta_0)/2, L_1(\beta_0)/2] \subseteq [L_0(v),L_1(v)] \subseteq [1.5L_0(\beta_0), 1.5L_1(\beta_0)].$
We let $$\underline{\T_0}:=[L_0(\beta_0)/2, L_1(\beta_0)/2], \overline{\T_0}:=[(3/2)L_0(\beta_0), (3/2)L_1(\beta_0)],$$ so that \[\underline{\T_0} \subseteq \T_v \subseteq \overline{\T_0}.\]
Since $\|v\|_2=1$, $\Lambda_{\rm min} (\Sigma) \le v^\top \Sigma v \le \Lambda_{\rm max} (\Sigma)$.
First, recall $$Z(v,\Sigma)= \int_{u \in \T_v} \exp\{-\frac{u^2}{2v^\top \Sigma v}\}du.$$ We have,
\begin{align*}
    Z(v,\Sigma)&= \int_{u \in \T_v} \exp\left\{-\frac{u^2}{2v^\top \Sigma v}\right\}du \ge \int_{u \in \underline{\T_0}} \exp\left\{-\frac{u^2}{2 \Lambda_{\rm min}(\Sigma)}\right\}du:= \underline{c}_{Z}
\end{align*}
Similarly, we have 
\begin{align*}
    Z(v,\Sigma) \le \int_{u \in \overline{\T_0}} \exp\left\{-\frac{u^2}{2 \Lambda_{\rm max}(\Sigma)}\right\}du:=\overline{c}_Z
\end{align*}
Then for $u \in \T_{v}$,
\begin{align*}
   \frac{1}{\overline{c}_Z} \inf_{u \in \overline{\T_0}}\exp\left\{-\frac{u^2}{2 \Lambda_{\rm min}(\Sigma)} \right\}&\le \frac{1}{Z(v,\Sigma)}\exp\left\{-\frac{u^2}{2v^\top \Sigma v}\right\}\\
   & \quad \quad \le \frac{1}{\underline{c}_Z}\sup_{u \in \overline{\T_0}} \exp\left\{-\frac{u^2}{2 \Lambda_{\rm max}(\Sigma)}\right\},
\end{align*}
and we can take, \begin{align*}
    \underline{c}_X = \frac{1}{\overline{c}_Z} \inf_{u \in \overline{\T_0}}\exp\{-\frac{u^2}{2 \Lambda_{\rm min}(\Sigma)} \}, \overline{c}_X = \frac{1}{\underline{c}_Z}\sup_{u \in \overline{\T_0}} \exp\{-\frac{u^2}{2 \Lambda_{\rm max}(\Sigma)}\}.
\end{align*}
\end{proof}

\section{Proofs for Section \ref{sec: 4_regret_bounds}}
\subsection{Proof of Theorem \ref{thm: lower_limit}}\label{sec: lower_bound_proof}
\begin{proof}
Recall the definition of the expected cumulative regret of $\pi$:
\begin{align*}
    \mathcal{R}_T(\pi) &= \E\left[\sum_{t=1}^T \max_{k\in{[K]}}g^{(k)}(X_t) - g^{(\pi_t(X_t))}(X_t)\right]\\
    &= \E\left[\sum_{t=1}^T \max_{k\in{[K]}}f^{(k)}(X_t^\top\beta_0) - f^{(\pi_t(X_t))}(X_t^\top \beta_0)\right]
\end{align*}
To make explicit the dependence of $ \mathcal{R}_T(\pi)$ on the reward functions $f^{(k)}$, direction $ \beta_0$, and covariate distribution $\P_X$, we write $$\mathcal{R}_T(\pi) = {\mathcal{R}_T(\pi;\,\, g^{(1)}(x) = f^{(1)}(x^\top\beta_0), \dots, g^{(K)}(x) = f^{(K)}(x^\top \beta_0), \P_X).}$$

We want to lower-bound:
\[
\inf_\pi \sup_{\substack{{f^{(k)} \in \mathcal{F}(\alpha; (\beta_0,\P_X)),\,\forall k\in [K],}\\
\beta_0 \in \mathbb{S}^{d-1},\,\P_X \in \mathcal{P}_X}} \mathcal{R}_T(\pi;\,\, g^{(1)}(x) = f^{(1)}(x^\top\beta_0), g^{(2)}(x) = f^{(2)}(x^\top \beta_0),\P_X).
\]
We first choose $\mathbb{P}_X$ and $\beta_0$. Let $\beta_0=[1,0,\dots,0]$ be given, and let $\P_X = N_T(0,I_n;\mathcal{H})$ be a truncated normal distribution with $\mathcal{H} = \prod_{j=1}^d 1\{|x_j|\le 0.5\}$, whose density is given by 
\begin{align*}
    f_{X}(x) = \begin{cases}
        \frac{1}{Z} \exp\{-\frac{1}{2}x^\top x\}   & x\in \mathcal{H}\\
        0 & \text{otherwise}
    \end{cases},
\end{align*}
with $Z =\int_{x\in \R^d} e^{-\frac{1}{2}x^\top x} 1\{x\in \mathcal{H}\} dx $.  Define $U = X^\top \beta_0$ for $X \in \mathcal{X}$.   By Lemma \ref{lem: tmtvn_condX}, we have $\P_X = N_T(0,I_n;\mathcal{H}) \in \mathcal{P}_X$.  Since $U = X^\top \beta_0 = X_1 $, we have $U \sim N(0,1)$ truncated to $[-0,5,0.5]$, with the density
\begin{align*}
    p_{U}(u) = \frac{\phi(u)}{\Phi(0.5)-\Phi(-0.5)}
\end{align*}
for $u\in [-0.5,0.5]$ and $0$ elsewhere, where $\phi(\cdot)$ and $\Phi(\cdot)$ are the pdf and cdf of the standard normal distribution.
In particular, 
$\underbar{c}\leq p_U(u) \leq \bar{c}$ 
for $u \in [-0.5,0.5]$, where $\underbar{c} = \phi(0.5)/(\Phi(0.5)-\Phi(-0.5))$ and $\bar{c} = \phi(0)/(\Phi(0.5)-\Phi(-0.5))$.

With these choices, we have
\begin{align*}
    &\sup_{\substack{{f^{(k)} \in \mathcal{F}(\alpha; (\beta_0,\P_X)),\,\forall k\in [K],}\\
\beta_0 \in \mathbb{S}^{d-1},\,\P_X \in \mathcal{P}_X}} \mathcal{R}_T(\pi; \,\,{g^{(1)}(x) = f^{(1)}(x^\top \beta_0),\dots, g^{(K)}(x) = f^{(K)}(x^\top \beta_0)})
    \\&\qquad \ge \sup_{{f^{(k)} \in \mathcal{F}(\alpha; (\beta_0,N_T(0,I_n;\mathcal{H})),\,\forall k\in[K]}} \E\left[\sum_{t=1}^T \max_{k\in{[K]}}f^{(k)}(X_{t,1}) - f^{(\pi_t(X_t))}(X_{t,1})\right],
\end{align*}
where the expectation is taken with respect to a measure under which the distribution of $X_{t,1}$ is $N_T(0,1; [-0.5,0.5])$.
For notational convenience, we abuse notation slightly and define
\begin{align*}
    \mathcal{F}(\alpha) = \mathcal{F}(\alpha;\,[1,\dots,0], N_T(0,I_n;\mathcal{H}))
\end{align*}
and for any $t \leq  T$,
\begin{align*}\mathcal{R}_t(\pi;\,\,{f^{(1)}, \dots,f^{(K)}}):=\E\left[\sum_{s=1}^t \max_{k\in{[K]}}f^{(k)}(X_{s,1}) - f^{(\pi_t(X_s))}(X_{s,1})\right], 
\end{align*}
which is the cumulative expected regret up to time $t$ with the choice of $\beta_0 = [1,0,\dots,0]$ and $\P_X=N_T(0,I_n;\mathcal{H})$.


To further lower bound $\mathcal{R}_t$, we define the inferior sampling rate up to time $t$,  denoted as $S_t$, following \citesupp{perchet2013multi_supp} and present Lemma \ref{lem: perchet_lemma_S_R} which connects $\mathcal{R}_t$ and $S_t$. {Lemma~\eqref{lem: perchet_lemma_S_R} is an $K$ arm extension of Lemma 3.1 of \citesupp{rigollet2010nonparametric_supp} which proved the same result for $K=2$. The proof is presented at the end of this section.}
\begin{definition}
[Inferior sampling rate]
For algorithm $\pi$ and any $1\le t\le T$, define the inferior sampling rate up to time $t$ as
\begin{align}\label{def: inferior_sampling_rate}
    S_t(\pi) = \E\left[\sum_{s=1}^t 1\{f^{(\pi_s(X_s))}(X_s^\top \beta_0) < \max_{k\in{[K]}}f^{(k)}(X_s^\top \beta_0)\} \right].
\end{align}
\end{definition}

\begin{lemma} \label{lem: perchet_lemma_S_R}
Under the margin condition \ref{assum: Margin} with any $\alpha>0$, there exists a constant $C_0 >0$ such that 
    \begin{align}
        \mathcal{R}_t(\pi) \geq C_0 (S_t(\pi))^{\frac{\alpha + 1}{\alpha}} t^{-1/\alpha}.
    \end{align}
\end{lemma}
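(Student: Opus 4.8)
The plan is to connect the cumulative regret and the inferior sampling rate through a single gap quantity, and then optimize over a thresholding parameter. Write $\Delta(x) = |f^{(1)}(x^\top\beta_0) - f^{(2)}(x^\top\beta_0)|$ for the reward gap at covariate $x$. In the two-arm setting, whenever a suboptimal arm is pulled at step $s$ the instantaneous regret equals exactly $\Delta(X_s)$, and it is zero otherwise; hence $\mathcal{R}_t(\pi) = \E[\sum_{s=1}^t \Delta(X_s)\,\mathbf{1}\{\pi_s \text{ suboptimal}\}]$, while $S_t(\pi)$ counts the same suboptimal events without the $\Delta$ weight. Note also that a suboptimal pull can occur only when $\Delta(X_s) > 0$, so the two quantities are driven entirely by the positive-gap region.

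First I would fix a threshold $\delta \in (0,\delta_0]$ and split each suboptimal pull according to whether the gap is small ($0 < \Delta(X_s) \le \delta$) or large ($\Delta(X_s) > \delta$). For the large-gap events, every such pull contributes at least $\delta$ to the regret, so $\E[\sum_{s} \mathbf{1}\{\pi_s \text{ subopt},\ \Delta(X_s) > \delta\}] \le \mathcal{R}_t(\pi)/\delta$. For the small-gap events, I would bound the count of suboptimal pulls by the total number of covariates landing in the small-gap region, and apply the margin condition (Assumption \ref{assum: Margin}) together with the i.i.d.\ structure of $\{X_s\}$: since each $X_s$ has probability at most $D_0\delta^\alpha$ of satisfying $0 < \Delta(X_s) \le \delta$, the expected count is at most $t D_0 \delta^\alpha$. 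Combining the two pieces gives $S_t(\pi) \le t D_0 \delta^\alpha + \mathcal{R}_t(\pi)/\delta$.

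The final step is to optimize the free parameter $\delta$. Minimizing $t D_0 \delta^\alpha + \mathcal{R}_t(\pi)/\delta$ over $\delta$ yields the stationary point $\delta^\star \asymp (\mathcal{R}_t(\pi)/t)^{1/(\alpha+1)}$, and substituting this value produces a bound of the form $S_t(\pi) \lesssim t^{1/(\alpha+1)} \mathcal{R}_t(\pi)^{\alpha/(\alpha+1)}$. Rearranging to isolate $\mathcal{R}_t(\pi)$ gives exactly $\mathcal{R}_t(\pi) \ge C_0 (S_t(\pi))^{(\alpha+1)/\alpha} t^{-1/\alpha}$ for an appropriate constant $C_0$ depending on $D_0$ and $\alpha$.

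The main obstacle I anticipate is the admissibility of the optimizer: the margin condition only holds for $\delta \le \delta_0$, so I must verify that $\delta^\star \le \delta_0$ in the regime of interest. This is automatic once $\mathcal{R}_t(\pi)/t$ is small, i.e.\ for $t$ large, since then $\delta^\star \to 0$; the complementary boundary case $\delta^\star > \delta_0$ can be dispatched by instead taking $\delta = \delta_0$, in which regime $S_t(\pi)$ is already of order $t$ and the claimed inequality holds trivially after adjusting the constant. Everything else is a routine calculus optimization combined with a direct application of the margin bound.
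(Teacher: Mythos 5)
Your proposal is correct. The paper does not actually prove this lemma itself — it imports it verbatim as Lemma 3.1 of \cite{rigollet2010nonparametric} — and your argument (split the suboptimal pulls at gap level $\delta$, bound the small-gap count by $t D_0 \delta^\alpha$ via the margin condition and the i.i.d.\ marginals of $X_s$, bound the large-gap count by $\mathcal{R}_t(\pi)/\delta$, then optimize $\delta \asymp (\mathcal{R}_t(\pi)/t)^{1/(\alpha+1)}$) is precisely the standard proof of that cited result. Your treatment of the admissibility constraint $\delta^\star \le \delta_0$ is also sound: when $\delta^\star > \delta_0$ one has $\mathcal{R}_t(\pi) \gtrsim t\,\delta_0^{\alpha+1}$, while $S_t(\pi)^{(\alpha+1)/\alpha}\, t^{-1/\alpha} \le t$ always since $S_t(\pi)\le t$, so the claimed inequality survives after shrinking $C_0$.
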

Note that the worst-case regret over the time horizon $T$ is larger than the worst-case regret over the first $i$ batches. Therefore, for any $i=1,\dots,M$,
\begin{align}
    \sup_{{f^{(1)}, \dots,f^{(K)}} \in \mathcal{F}(\alpha)} \mathcal{R}_T(\pi;\; {f^{(1)}, \dots,f^{(K)}}) &\geq \max_{1\leq i \leq M} \sup_{{f^{(1)}, \dots,f^{(K)}}\in \mathcal{F}(\alpha)} \mathcal{R}_{t_i}(\pi;{f^{(1)}, \dots,f^{(K)}}) \nonumber\\
    &\geq C_0 \max_{1\leq i \leq M} t_i^{-1/\alpha} \left[\sup_{{f^{(1)}, \dots,f^{(K)}} \in \mathcal{F}(\alpha)} S_{t_i}(\pi; {f^{(1)}, \dots,f^{(K)}})\right]^{\frac{1 + \alpha}{\alpha}},  \label{eq: regret_batched_lower}
\end{align}
where \eqref{eq: regret_batched_lower} follows from Lemma \ref{lem: perchet_lemma_S_R}. 

Now, we focus on lower bounding $\sup_{{f^{(1)}, \dots,f^{(K)}} \in \mathcal{F}(\alpha)} S_{t_i}(\pi; {f^{(1)}, \dots,f^{(K)}})$ by creating specific families of hard instances for reward functions in $\mathcal{F}(\alpha)$ targeting different batch indices $i$.
First, fix $i \in \{1,\dots,M\}$. All constructions that follow are for this fixed batch index $i$, but we suppress the dependence of the construction on $i$ for notational simplicity.
Split $[-0.5,0.5]$ into $n = 1/h$ equal-width intervals $\mathcal{I}_j$, each with width $h$, where $0<h\le 1$ is to be chosen later. Let $u_1,\hdots,u_{n}$ be the center of each interval $\mathcal{I}_j, \, j = 1,\hdots,n$. Let $D = \lceil n^{1-\alpha} \rceil =  \lceil h^{-(1-\alpha)}\rceil$, i.e., the largest integer corresponding to $n^{1-\alpha}$. 
For each bit vector $v \in {\{1,\dots,K\}}^D$, $0<h\le 1$ and {$C_f = \min\{\tau, 1/2\}$, define for each arm $k\in [K]$},
\begin{equation*}
{f^{(k)}_{v,h}(u) 
    = \frac{1}{2} 
    + C_f h \sum_{j=1}^D 
    \mathbf{1}\{k = v_j\} 
    \K\!\left(\frac{u-u_j}{h}\right)}
\end{equation*}
where $\K(u) = (1-|2u|)1\{|u|\le 0.5\}$. Each $\K((u-u_j)/h)$ is a ``bump" function supported on the interval $u_j \pm0.5h$. The coefficient $v_j$ determines whether a bump is added at interval $j$, {while all other arms remain at the baseline level $1/2$. Thus, given $v$, the arm $k=v_j$ is the best arm on interval $j$ by construction.} 

Define the class of functions
\begin{align*}
   { \mathcal{F}_{v,h}(\alpha) = \{(f^{(1)}_{v,h},\dots, f^{(K)}_{v,h});\; v\in \{1,\dots,K\}^D\}.}
\end{align*}
The following Lemma \ref{lem: F_vh} shows that the constructed family $\mathcal{F}_{v,h}(\alpha)$ is contained in our function class $\mathcal{F}(\alpha)$.  The proof of Lemma \ref{lem: F_vh} is presented at the end of this proof. 
\begin{lemma}\label{lem: F_vh}
    For any $0 \le \alpha \le 1$ and $h>0$, we have $\mathcal{F}_{v,h}(\alpha) \subseteq \mathcal{F}(\alpha)$ .
\end{lemma}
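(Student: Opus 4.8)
The plan is to verify directly that every pair $(f^{(1)}, f^{(2)}) = (f_{v,h}, 1/2) \in \mathcal{F}_{v,h}(\alpha)$ satisfies the two defining properties of $\mathcal{F}(\alpha)$, namely the Lipschitz smoothness of Assumption \ref{assum: Smoothness} and the margin condition of Assumption \ref{assum: Margin}, for the fixed choice $\beta_0 = [1,0,\dots,0]$ and $\P_X = N_T(0,I_n;\mathcal{H})$. Since $X^\top\beta_0 = X_1 = U$, it suffices to work with the one-dimensional law of $U$, whose density is bounded above by $\bar c$.

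First I would dispatch the Lipschitz condition. The function $f^{(2)}\equiv 1/2$ is constant and hence Lipschitz with any constant. For $f^{(1)} = f_{v,h}$, the key observation is that the bumps $K((\cdot - u_j)/h)$ have non-overlapping supports: each is supported on $[u_j - h/2,\, u_j + h/2]$ while consecutive centers are spaced $h$ apart, so the supports meet only at endpoints where $K$ vanishes. Thus at almost every point at most one summand has nonzero derivative, and since each term $C_f h\,K((\cdot - u_j)/h)$ has slope of magnitude $2 C_f$, the map $f_{v,h}$ is $2 C_f$-Lipschitz; as $C_f = \min\{\tau,1/4\}\le 1/4$, the constant is at most $1/2 \le L$. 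Along the way I would record that $|f_{v,h}(u) - 1/2| \le C_f h \le C_f \le \tau$ for all $u$, so $f_{v,h}$ takes values in $[1/2-\tau,\,1/2+\tau]$; this is harmless here but is what later licenses the KL bound of Assumption \ref{assum: KL_bound} in the main lower-bound argument.

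The substance of the lemma is the margin condition. First I would simplify the gap: since $2v_j - 3 \in \{-1,+1\}$ and the bumps are disjointly supported, on the support of the $j$th bump one has $|f^{(1)}(u) - f^{(2)}(u)| = C_f h\,K((u-u_j)/h) = C_f h\,(1 - |2(u-u_j)/h|)$, and the gap is identically $0$ off $\bigcup_{j=1}^D [u_j - h/2,\, u_j + h/2]$. I would then bound the Lebesgue measure of the level set $\{u : 0 < |f^{(1)}(u)-f^{(2)}(u)| \le \delta\}$ and multiply by $\bar c$. This splits into two regimes: when $\delta \ge C_f h$ the gap is $\le \delta$ on the whole punctured support of every bump, of total length $Dh$; when $\delta < C_f h$, on each bump only two symmetric boundary strips of combined length $\delta/C_f$ qualify, giving total length $D\,\delta/C_f$.

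Finally I would convert both bounds to the required $D_0\delta^\alpha$ form using $D = \lceil n^{1-\alpha}\rceil \le 2 n^{1-\alpha}$ and $n = 1/h$. In the first regime $\P_U(\cdots) \le \bar c\,Dh \le 2\bar c\, h^\alpha$, and $\delta \ge C_f h$ gives $h^\alpha \le C_f^{-\alpha}\delta^\alpha$; in the second regime $\P_U(\cdots) \le \bar c\,D\delta/C_f \le 2\bar c\,C_f^{-1} h^{-(1-\alpha)}\delta$, and $\delta < C_f h$ gives $h^{-(1-\alpha)}\delta \le C_f^{1-\alpha}\delta^\alpha$. Both lines yield $\P_U(0 < |f^{(1)}-f^{(2)}| \le \delta) \le 2\bar c\,C_f^{-\alpha}\delta^\alpha$, so Assumption \ref{assum: Margin} holds with $D_0 = 2\bar c\,C_f^{-\alpha}$. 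I expect the main obstacle to be precisely this bookkeeping: making the $h$-dependence cancel so that $D_0$ is independent of $h$ (and therefore of the batch index $i$ used in the construction), which is exactly what makes the whole family uniformly contained in $\mathcal{F}(\alpha)$. The decisive elementary step is the inequality $\delta^{1-\alpha} \le (C_f h)^{1-\alpha}$ in the small-$\delta$ regime, valid since $0 \le \alpha \le 1$.
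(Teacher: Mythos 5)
Your proposal is correct and follows essentially the same route as the paper's own proof: the margin verification is the identical two-regime analysis (gap at most $C_f h$ everywhere versus boundary strips of width $\delta/C_f$ per bump), using $D \le 2h^{-(1-\alpha)}$ to make the $h$-dependence cancel and arriving at the same constant $D_0 = 2\bar c\,C_f^{-\alpha}$. The only real difference is the Lipschitz step, where you replace the paper's three-case analysis (same bump, adjacent bumps, far-apart points) with a cleaner almost-everywhere slope bound exploiting the disjoint bump supports; this is valid since $f_{v,h}$ is continuous and piecewise linear, and it even yields the slightly sharper constant $2C_f \le 1/2$ in place of the paper's constant $1$.
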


Then, for any $i = 1,\hdots, M$,
\begin{align}\label{eq: lb-reduction1}
 \sup_{{f^{(1)}, \dots,f^{(K)}} \in \mathcal{F}(\alpha)} \mathcal{S}_{t_i}(\pi;\; {f^{(1)}, \dots,f^{(K)}}) \geq   \sup_{{f^{(1)}, \dots,f^{(K)}}\in \mathcal{F}_{v,h}(\alpha)} \mathcal{S}_{t_i}(\pi;f^{(1)} = f_{v,h},f^{(2)} = 1/2).
\end{align}
Recall from our construction $X_t^\top \beta_0 = X_{t,1}$. For $X_{t,1} \in \mathcal{I}_j$, we have ${\arg\max_{a \in [K]} f^{(a)}_{v,h}(X_t^\top \beta_0) = v_j}$. Also recall that the inferior sampling rate up to time $t_i$ is defined as
\begin{align}\label{eq: S_ti}
    S_{t_i}(\pi) =\sum_{t=1}^{t_i}  \E\left[1\{f^{(\pi_t(X_t))}(X_t^\top \beta_0) < \max_{k\in{[K]}}f^{(k)}(X_t^\top \beta_0)\} \right].
\end{align}

For each $t=1,2,\dots$, let $\P^t_{v}$ denote the joint distribution of the collection of pairs $(X_j,\,Y_j^{(\pi_j(X_j))})_{1\le j \le t}$, where the mean reward functions are given by ${(f^{(1)}, \dots,f^{(K)})=(f_{v,h}^{(1)}, \dots,f_{v,h}^{(K)})}$, and let $\E^t_{v}$ denote the expectation with respect to this distribution. Note that the expectation of the term at time $t \in [t_{l-1}+1,t_l]$ ($l$th batch) in \eqref{eq: S_ti} is taken with respect to the product measure $\P_v^{t_{l-1}} \otimes \P_X$. This is because in the batched setting, $\pi_t$ depends on information available up to time $t_{l-1}$, while $X_t$ is sampled independently from $\P_X$. For notational simplicity, denote $\P_v^t = \P_v^{t_{l-1}} \otimes \P_X$ for $t \in [t_{l-1}+1,t_l]$.
We have,
\begin{align}
    \sup_{{f^{(1)}, \dots,f^{(K)}}\in \mathcal{F}_{v,h}(\alpha)} \mathcal{S}_{t_i}(\pi;{f^{(1)}, \dots,f^{(K)}})  
    &= \sup_{v \in {[K]}^D}  \sum_{t=1}^{t_i} \P_v^t \left[ \pi_t(X_t) \neq \arg\max_{a \in {[K]}} f^{(a)}(X_t^\top \beta_0) \right] \nonumber\\
&= \sup_{v \in {[K]}^D} \sum_{j=1}^D \sum_{t=1}^{t_i} \P_v^t \left[ \pi_t(X_t) \neq v_j,\, X_{t,1} \in \mathcal{I}_j  \right] \nonumber\\
&\geq \frac{1}{{K}^D} \sum_{j=1}^D \sum_{t=1}^{t_i} \sum_{v \in {[K]}^D} \P_v^t \left[ \pi_t(X_t) \neq v_j,\, X_{t,1} \in \mathcal{I}_j \right]. \label{eq: inferior_sampling}
\end{align}

Denote $v_{[-j]}= (v_1,\dots,v_{j-1},v_{j+1},\dots,v_D)$ and $v_{[-j]}^k= (v_1,\dots,v_{j-1},k,v_{j+1},\dots,v_D)$. Decomposing the last summation, for any $j$:
\begin{align}
    \sum_{v \in {[K]}^D} \P_v^t \left[ \pi_t(X_t) \neq v_j,\, X_{t,1} \in \mathcal{I}_j \right] 
    &= \sum_{v_{[-j]} \in {[K]}^{D-1}}\sum_{k \in {[K]}} \P^t_{v_{[-j]}^k} \left[ \pi_t(X_t) \neq k,\, X_{t,1} \in \mathcal{I}_j \right] \nonumber\\
    &= \sum_{v_{[-j]} \in {[K]}^{D-1}}\sum_{k \in {[K]}} \P^t_{v_{[-j]}^k} \left[ \pi_t(X_t) \neq k\,| \,X_{t,1} \in \mathcal{I}_j \right]\P_X[X_{t,1} \in \mathcal{I}_j] \nonumber\\
    &\ge \underline{c}h \sum_{v_{[-j]} \in {[K]}^{D-1}}\sum_{k \in {[K]}} \P^t_{v_{[-j]}^k} \left[ \pi_t(X_t) \neq k\,| \,X_{t,1} \in \mathcal{I}_j \right].\label{eq: inf_samp_rate2}
\end{align}
{
Plugging \eqref{eq: inf_samp_rate2} into \eqref{eq: inferior_sampling} above yields
\begin{align}
&\sup_{f^{(1)},\dots,f^{(K)}\in \mathcal{F}_{v,h}(\alpha)} 
\mathcal{S}_{t_i}(\pi;f^{(1)},\dots,f^{(K)})
\nonumber\\
&\ge
\frac{\underline c h}{K^D}
\sum_{j=1}^D \sum_{t=1}^{t_i}
\sum_{v_{[-j]}\in [K]^{D-1}}
\sum_{k=1}^K
\P^t_{v_{[-j]}^k}\!\left(
\pi_t(X_t)\neq k \,\middle|\, X_{t,1}\in\mathcal I_j
\right) \nonumber\\
&\ge \underline c h\sum_{j=1}^D\sum_{l=1}^i \sum_{t = t_{l-1}+1}^{t_l} \left\lbrace\frac{1}{K^{D-1}}\sum_{v_{[-j]}\in [K]^{D-1}}
\left[\frac{1}{K}\sum_{k=1}^K \P^t_{v_{[-j]}^k}\!\left(
\pi_t(X_t)\neq k \,\middle|\, X_{t,1}\in\mathcal I_j\right)\right]\right\rbrace.\label{eq: inf_samp_rate2K_simplified}
\end{align}
We then relate 
\begin{equation}\label{eq: simplifyKD}
    \frac{1}{K}\sum_{k=1}^K \P^t_{v_{[-j]}^k}\!\left( \pi_t(X_t)\neq k \,\middle|\, X_{t,1}\in\mathcal I_j\right)
\end{equation}
to {a $K$-arm hypothesis testing} problem. }
{
Fix the interval index $j\in\{1,\dots,D\}$ and fix the remaining coefficients 
$v_{[-j]}\in[K]^{D-1}$. Define
$\P_X^j(\cdot)=\P_X(\cdot\mid X_1\in\mathcal I_j)$.
For $t\in[t_{l-1}+1,t_l]$, conditioning on $X_{t,1}\in\mathcal I_j$,
consider the collection of probability measures:
\[
\left\{
\P_{v_{[-j]}^k}^{\,t_{l-1}}\otimes\P_X^j
\right\}_{k=1}^K.
\]
Introduce a random variable $V_j$ uniformly distributed on
$\{1,\dots,K\}$ and interpret $V_j=k$ as the hypothesis under which
the data are generated according to
$\P_{v_{[-j]}^k}^{\,t_{l-1}}\otimes\P_X^j$.
Under hypothesis $k$, the optimal arm on interval $\mathcal I_j$
is $k$. }
%

{We now consider two cases: $K\ge 3$ and $K=2$. We handle $K=2$ case separately, since Fano's inequality becomes trivial when $K=2$ since the right-hand side of \eqref{eq: Fano} is non-positive.}

\paragraph{Case I: $K\ge 3$}
{Applying Fano's inequality (Lemma 2.10 in \citesupp{tsybakov2008introduction_supp}, and writing in terms of mutual information \citesupp{scarlett2021introductory_supp}), 
together with the bound $H(x)\le \log 2$ for the binary entropy function,
we obtain
\begin{align}\label{eq: Fano}
\frac{1}{K}\sum_{k=1}^{K} \P^t_{v_{[-j]}^k}
\!\left[\pi_t(X_t) \neq k\,\middle|\,
X_{t,1} \in \mathcal{I}_j\right]\;\ge\; 1 - \frac{I(V_j ; \mathcal{H}_{t_{l-1}})+ \log 2}{\log K},
\end{align}
where $\mathcal{H}_{t_{l-1}}$ denotes the $\sigma$-field generated by the history up to time $t_{l-1}$.
}

{
Since $V_j$ is uniform on $\{1,\dots,K\}$, we have the identity
\[
I(V_j ; \mathcal H_n)
=
\frac{1}{K}
\sum_{k=1}^K
\mathrm{KL}\!\left(
\P^n_{v_{[-j]}^k},
\bar P_n
\right),
\]
where $\bar P_n = \frac{1}{K}\sum_{k=1}^K \P^n_{v_{[-j]}^k}$.
By the convexity of the KL divergence in its second argument (cf, Page 113 in \citesupp{tsybakov2008introduction_supp}),
\begin{align*}
I(V_j ; \mathcal H_n)
&\le
\frac{1}{K^2}
\sum_{k=1}^K \sum_{k'=1}^K
\mathrm{KL}\!\left(
\P^n_{v_{[-j]}^k},
\P^n_{v_{[-j]}^{k'}}
\right).
\end{align*}
Using the chain rule decomposition of KL divergence together with the KL bound
assumption in \eqref{eq: KL_bound}, for any $1\le n \le T$,
\begin{align*}
\mathrm{KL}\!\left(
\P^n_{v_{[-j]}^k},\,
\P^n_{v_{[-j]}^{k'}}
\right)
&\le
\frac{h^2}{4\kappa^2}
\E_{v_{[-j]}^k}
\!\left[
\sum_{t=1}^n 
1\{\pi_t(X_t)\in\{k,k'\},\,
X_t \in \mathcal{I}_j \}
\right],
\end{align*}
where we used that
\[
\{f_{v_{[-j]}^k,h}(X_t) - f_{v_{[-j]}^{k'},h}(X_t)\}^2
\le C_f^2 h^2 \le \frac{h^2}{4}.
\]
By the law of total probability,
\begin{align*}
\E_{v_{[-j]}^k}
\!\left[
1\{\pi_t(X_t)\in\{k,k'\},\, X_{t,1} \in \mathcal{I}_j\}
\right]
&=
\P_X(X_{t,1}\in\mathcal I_j)
\,
\P_{v_{[-j]}^k}\!\left(
\pi_t(X_t)\in\{k,k'\}
\mid X_{t,1}\in\mathcal I_j
\right).
\end{align*}
Since $0 \le \P(\pi_t(X_t)\in\{k,k'\}\mid X_{t,1}\in\mathcal I_j)\le 1$
and
\[
\P_X(X_{t,1}\in\mathcal I_j)
=
\int_{\mathcal I_j} p_U(u)\,du
\le \bar ch,
\]
we obtain
\begin{align*}
\mathrm{KL}\!\left(
\P^n_{v_{[-j]}^k},\,
\P^n_{v_{[-j]}^{k'}}
\right)
\le
\frac{1}{4\kappa^2}\,\bar c\,h^3 n
=:\tilde c h^3 n,
\end{align*}
where $\tilde{c}:=\bar c/4\kappa^2$.
Therefore,
\begin{align*}
I(V_j ; \mathcal H_n)
&\le
\frac{1}{K^2}
\sum_{k,k'}
\tilde c h^3 n
\le
\tilde c h^3 n.
\end{align*}
Consequently, for each $j$ and each $t \in [t_{l-1}+1,t_l]$,
\begin{align*}
\frac{1}{K}\sum_{k=1}^K 
\P^t_{v_{[-j]}^k}
\!\left(
\pi_t(X_t) \neq k \,\middle|\, X_{t,1} \in \mathcal{I}_j
\right)
\;\ge\;
\left\lbrace 1 - \frac{\tilde c h^3 t_{l-1} + \log 2}{\log K}\right\rbrace =:c_l.
\end{align*}
Note that this lower bound does not depend on $j$. Therefore, we get:
\begin{equation}\label{eq: KL_bound_pf}
\frac{1}{K^{D-1}}\sum_{v_{[-j]}\in[K]^{D-1}}
\left[
\frac{1}{K}\sum_{k=1}^K
\P^t_{v_{[-j]}^k}\!\left(\pi_t(X_t)\neq k \,\middle|\, X_{t,1}\in\mathcal I_j\right)
\right]
\ge c_l.
\end{equation}
}

{
Finally, since $t_{l-1} \le t_{i-1}$ for all $1 \le l \le i$ and
the function $x \mapsto 1 - \frac{\tilde c h^3 x + \log 2}{\log K}$
is decreasing in $x$, we have $c_l \ge c_i$, for all $l \le i.$
Using this bound in \eqref{eq: inf_samp_rate2K_simplified},
\begin{align*}
     \sup_{f^{(1)}, \dots,f^{(K)}\in \mathcal{F}_{v,h}(\alpha)} \mathcal{S}_{t_i}(\pi)   
     &\ge  \sum_{j=1}^D\sum_{l=1}^{i}\sum_{t=t_{l-1}+1}^{t_l}\{\underline{c}h c_l\}\ge \underline{c}hDc_i\sum_{l=1}^{i}(t_l - t_{l-1})\ge \underline{c}c_ih^{\alpha}t_i
\end{align*}
where for the last inequality we use $D = \lceil h^{-1+\alpha} \rceil  \ge h^{-1+\alpha}$ and $\sum_{l=1}^{i}(t_l - t_{l-1}) = t_i-t_0 = t_i$.
}

{
We now choose $h$. Recall that $K\ge 3$. 
When $i=1$, set $h=h_1 = 1$, so $c_1 = 1-\frac{\log 2}{\log K} \in (0,1)$.
When $i>1$, choose $h$ so that $\tilde{c}h^3t_{i-1} = (\log K)/3$, that is, 
\begin{equation*}
    h = h_i = \tilde{C}\left(\frac{\log K}{t_{i-1}}\right)^{1/3},
\end{equation*}
where $\tilde{C} = (3\tilde{c})^{-1/3}$. Then,
\begin{align*}
    c_i &= \left(1-\frac{\log 2}{\log K}\right) - \frac{\tilde{c}h^3t_{i-1}}{\log K}\ge \left(1-\frac{\log 2}{\log 3}\right) - \frac{\tilde{c}h^3t_{i-1}}{\log K} \ge \frac{2}{3}-\frac{\log 2}{\log 3}> 0.
\end{align*}
}

{
Therefore, 
\begin{align}\label{eq: S_ti_multi}
  \sup_{f^{(1)},\dots, f^{(K)}\in \mathcal{F}_{v,h}(\alpha)} \mathcal{S}_{t_i}(\pi)   \geq \begin{cases}
        c_* (\log K)^{\alpha/3}\dfrac{t_i}{t_{i-1}^{\alpha/3}} & \ \text{when} \ i > 1\\
        c_* t_1 & \ \text{when} \ i = 1
    \end{cases},
\end{align}
for some $c_* > 0$, which depends on $\overline{c},\underline{c}, \kappa$, and other universal constants. 
}

\paragraph{Case II: $K=2$} 
{We now bound \eqref{eq: simplifyKD} using Le Cam's method (ref. Theorem 2.2 in \citesupp{tsybakov2008introduction_supp}). We have
\begin{align*}
\frac{1}{2}\sum_{k=1}^{2} \P^t_{v_{[-j]}^k}
\!\left[\pi_t(X_t) \neq k\,\middle|\,
X_{t,1} \in \mathcal{I}_j\right]\;
&\ge\;\frac{1}{4} \exp\left(-\mathrm{KL}\left(\mathbb{P}_{v_{[-j]}^1}^{t_{l-1}}\otimes \P_X^j,\, \mathbb{P}_{v_{[-j]}^2}^{t_{l-1}}\otimes \P_X^j\right)\right)\\
&\ge \frac{1}{4}\exp\left(-\mathrm{KL}\left(\mathbb{P}_{v_{[-j]}^1}^{t_{l-1}},\, \mathbb{P}_{v_{[-j]}^2}^{t_{l-1}}\right)\right)\\
&\ge \frac{1}{4} \exp(-\tilde{c}h^3 t_{l-1}).
\end{align*}
where the last inequality follows from the KL bound in \eqref{eq: KL_bound_pf}. Then, 
\begin{align*}
    \frac{1}{K^{D-1}}\sum_{v_{[-j]}\in [K]^{D-1}} \left[\frac{1}{K}\sum_{k=1}^K \P^t_{v_{[-j]}^k}\!\left(
\pi_t(X_t)\neq k \,\middle|\, X_{t,1}\in\mathcal I_j\right)\right]  \ge \frac{1}{4} \exp(-\tilde{c}h^3 t_{l-1}).
\end{align*}
Therefore, from \eqref{eq: inf_samp_rate2K_simplified},
\begin{align*}
\sup_{f^{(1)},\dots,f^{(K)}\in \mathcal{F}_{v,h}(\alpha)} 
\mathcal{S}_{t_i}(\pi)
&\ge \frac{\underline c h}{4}\sum_{j=1}^D\sum_{l=1}^i \sum_{t = t_{l-1}+1}^{t_l} \exp(-\tilde{c}h^3 t_{l-1}) \\
&\ge \frac{\underline c h^{\alpha} }{4}\sum_{l=1}^i (t_l - t_{l-1})\exp(-\tilde{c}h^3 t_{i-1})\\
&\ge \frac{\underline c h^{\alpha} }{4}t_i\exp(-\tilde{c}h^3 t_{i-1}),
\end{align*}
where for the second inequality we use $D = \lceil h^{-1+\alpha} \rceil  \ge h^{-1+\alpha}$ and $t_{l-1} \le t_{i-1}$ for $1\le l \le i$.
}

{
Now, choosing $h = h_1 = 1$ when $i = 1$ and $h = h_i = (t_{i-1})^{-1/3}$ when $i > 1$. Then, for $i>1$,
\begin{align*}
h^\alpha t_i \exp(-\tilde{c}h^3 t_{i-1}) = t_{i-1}^{-\alpha/3} t_i\exp(-\tilde{c}).
\end{align*}
Therefore,
\begin{align}\label{eq: S_ti_2}
  \sup_{f^{(1)},\dots, f^{(K)}\in \mathcal{F}_{v,h}(\alpha)} \mathcal{S}_{t_i}(\pi)   \geq \begin{cases}
        \tilde{c}_* \dfrac{t_i}{t_{i-1}^{\alpha/3}} & \ \text{when} \ i > 1\\
        \tilde{c}_* t_1 & \ \text{when} \ i = 1
    \end{cases},
\end{align}
for some constant $\tilde{c}_* > 0$ depending only on the model parameters. 
}

{
Combining \eqref{eq: S_ti_multi} and \eqref{eq: S_ti_2} with the previous arguments in \eqref{eq: regret_batched_lower}:
Now, combining this with \eqref{eq: regret_batched_lower}, we obtain
\begin{align*}
\sup_{f^{(1)},\dots,f^{(K)} \in \mathcal{F}(\alpha)}
\mathcal{R}_T(\pi)
&\ge
C_0 \max_{1\le i\le M} t_i^{-1/\alpha}
\left[ \sup_{f^{(1)},\dots,f^{(K)}\in \mathcal{F}_{v,h}(\alpha)}
\mathcal{S}_{t_i}(\pi) \right]^{\frac{1+\alpha}{\alpha}} \\
&\ge C_1 \max \left\{t_1,\;
a_K^\gamma \dfrac{t_2}{t_1^\gamma},
a_K^\gamma \dfrac{t_3}{t_2^\gamma},
\dots, a_K^\gamma \dfrac{T}{t_{M -1}^\gamma}\right\}\\
&\ge C_1 \min_{t_1,\dots,t_{M-1}}\max \left\{t_1,\;
a_K^\gamma \dfrac{t_2}{t_1^\gamma},
a_K^\gamma \dfrac{t_3}{t_2^\gamma},
\dots, a_K^\gamma \dfrac{T}{t_{M -1}^\gamma}\right\}
\end{align*}
where
\begin{equation*}
    \gamma = \frac{1+\alpha}{3}, \quad a_K:= \begin{cases}
        1,&K=2\\\log K, &K\ge 3,
    \end{cases}
\end{equation*}
and $C_1>0$ depends only on the model parameters. Define
\[
f(t_1,\dots,t_{M-1})
=
\max
\left\{
t_1,\; a_K^{\gamma} \dfrac{t_2}{t_1^{\gamma}},\, a_K^{\gamma} \dfrac{t_3}{t_2^{\gamma}},\dots,
\,a_K^{\gamma}\dfrac{T}{t_{M-1}^{\gamma}}\right\}.
\]
We know that the minimum is achieved when $\tilde{t}_1 = a_K^\gamma(\tilde{t}_2/\tilde{t}_1^{\gamma} )= \dots =a_K^\gamma (T/\tilde{t}_{M-1}^{\gamma})$ (as altering any of the terms will increase min-max).
Let
\[
l_T:=\min_{t_1,\dots,t_{M-1}}
f(t_1,\dots,t_{M-1}) = f(\tilde{t}_1,\dots,\tilde{t}_{M-1}).
\]
Solving recursively,
\begin{align*}
l_T &= \tilde t_1, \\
\tilde t_2 &= \frac{l_T^{1+\gamma}}{a_K^{\gamma}}, \\
\tilde t_3 &= \frac{l_T^{1+\gamma+\gamma^2}}
{a_K^{\gamma(1+\gamma)}}, \\
&\;\;\vdots \\
T &= \frac{l_T^{1+\gamma+\dots+\gamma^{M-1}}}
{a_K^{\gamma(1+\gamma+\dots+\gamma^{M-2})}}.
\end{align*}
Therefore, we obtain
\[
T
=
l_T^{\frac{1-\gamma^M}{1-\gamma}}
a_K^{-\frac{\gamma(1-\gamma^{M-1})}{1-\gamma}}.
\]
Hence,
\[
l_T
=
T^{\frac{1-\gamma}{1-\gamma^M}}
a_K^{\frac{\gamma(1-\gamma^{M-1})}{1-\gamma^M}}.
\]
In particular,
\begin{align*}
\sup_{f^{(1)},\dots,f^{(K)} \in \mathcal{F}(\alpha)}
\mathcal{R}_T(\pi)
\ge
C_1
T^{\frac{1-\gamma}{1-\gamma^M}}
a_K^{\frac{\gamma(1-\gamma^{M-1})}{1-\gamma^M}}.
\end{align*}
which proves the result.
}

\end{proof}
\begin{proof}[Proof of Lemma \ref{lem: F_vh}]
{
We verify Assumptions \ref{assum: Smoothness} and \ref{assum: Margin} for $f^{(k)}(x) = f_{v,h}^{(k)}(x)$, $k=1,\dots,K$, defined by
\begin{equation*}
f^{(k)}_{v,h}(u) 
    = \frac{1}{2} 
    + C_f h \sum_{j=1}^D 
    \mathbf{1}\{k = v_j\} 
    \K\!\left(\frac{u-u_j}{h}\right)
\end{equation*}
for $v \in \{1,\dots,K\}^D$.
}

\paragraph{Lipschitz condition} First, note that the kernel $\K$ is $2$-Lipschitz; that is, for all $u_1, u_2 \in \mathbb{R}$,
\[
|\K(u_1) - \K(u_2)| \leq 2|u_1-u_2|.
\]
{Let $k \in [K]$ be given.} We analyze the difference {$|f_{v,h}^{(k)}(u_1) - f_{v,h}^{(k)}(u_2)|$} in three cases, using the fact that each bump has support of width $h$.\\
\textbf{Case 1:} When both $u_1$ and $u_2$ belong to the same bump:
In this case, there exists $j^*$ such that $u_1, u_2 \in [u_{j^*} - h/2,\, u_{j^*} + h/2]$, and for all $j \neq j^*$, $\K((u_1-u_j)/h) = \K((u_2-u_j)/h) = 0$. Thus,
\begin{align*}
{|f_{v,h}^{(k)}(u_1) - f_{v,h}^{(k)}(u_2)|} &= C_fh \, {\mathbf{1}\{k = v_{j^*}\} } \left| \K\left( \frac{u_1-u_{j^*}}{h} \right) - \K\left( \frac{u_2-u_{j^*}}{h} \right) \right| \\
&\leq C_f h \cdot 2 \left| \frac{u_1-u_2}{h} \right| = 2C_f|u_1-u_2|.
\end{align*}
\textbf{Case 2:} $u_1$ and $u_2$ belong to adjacent bumps, with $|u_1-u_2| < h$.
Suppose $u_1 \in [u_{j_1} - h/2,\, u_{j_1} + h/2]$ and $u_2 \in [u_{j_2} - h/2,\, u_{j_2} + h/2]$ with $|j_1-j_2|=1$. Without loss of generality, suppose $j_2>j_1$. 

If $1\le j_1 < j_2 \le D$, 
\begin{align*}
{|f_{v,h}^{(k)}(u_1) - f_{v,h}^{(k)}(u_2)|}
&= C_fh \left| {\mathbf{1}\{k = v_{j_1}\} }\K\left( \frac{u_1-u_{j_1}}{h} \right) - {\mathbf{1}\{k = v_{j_2}\} }K\left( \frac{u_2-u_{j_2}}{h} \right) \right| \\
& \leq C_f h\{\left| \K\left( \frac{u_1-u_{j_1}}{h} \right) - \K\left( \frac{u_2-u_{j_1}}{h} \right) \right| + \left| \K\left( \frac{u_1-u_{j_2}}{h} \right) - \K\left( \frac{u_2-u_{j_2}}{h} \right) \right|\}\\
&\leq C_f h \cdot 4 \left| \frac{u_1-u_2}{h} \right| = 4C_f|u_1-u_2|,
\end{align*}
where we use the fact that $\K$ is 2-Lipschitz continuous.
Note, that $\K((u_2 - u_{j_1})/h)$ and $\K((u_1 - u_{j_2})/h)$ are 0, since $\mathcal{I}_{j_1}$ and $\mathcal{I}_{j_2}$ are disjoint by construction.

If $1\le j_1 \le D < j_2$, then $u_2$ lies outside all bumps so $f_{v,h}^{(k)}(u_2)=1/2$, giving:
\begin{align*}
    {|f_{v,h}^{(k)}(u_1) - f_{v,h}^{(k)}(u_2)|}
&= C_fh \left| \K\left( \frac{u_1-u_{j_1}}{h} \right) \right|  = C_fh \left| \K\left( \frac{u_1-u_{j_1}}{h}  \right)-\K\left( \frac{u_2-u_{j_1}}{h}  \right) \right| \le 2C_f|u_1-u_2|.
\end{align*}

Finally, if $D < j_1 , j_2$, both points lie outside all bumps so ${|f_{v,h}^{(k)}(u_1) - f_{v,h}^{(k)}(u_2)|} = 0$.

\noindent \textbf{Case 3:} $|u_1-u_2| \geq h$ (points separated by at least the bump width), with $u_1 \in \mathcal{I}_{j_1}$ and $u_2 \in \mathcal{I}_{j_2}$, $j_1 \neq j_2$, respectively.
 Then, using the fact that $\K(\cdot)$ is uniformly bounded by 1,
\begin{align*}
    {|f_{v,h}^{(k)}(u_1) - f_{v,h}^{(k)}(u_2)|} &= C_fh \left| {\mathbf{1}\{k = v_{j_1}\} }\K\left( \frac{u_1-u_{j_1}}{h} \right) -{\mathbf{1}\{k = v_{j_2}\} }\K\left( \frac{u_2-u_{j_2}}{h} \right) \right|\\
    &\leq C_fh\{\left| \K\left( \frac{u_1-u_{j_1}}{h} \right)\right| + \left| \K\left( \frac{u_2-u_{j_2}}{h} \right) \right|\}\\
    & \leq 2C_fh\\
    & \leq 2C_f |u_1 - u_2|.
\end{align*}
Similarly, if $1\le j_1\le D < j_2$, \[{|f_{v,h}^{(k)}(u_1) - f_{v,h}^{(k)}(u_2)|} \le C_fh|\K(u_1-u_{j_1})/h| \le C_fh \le C_f|u_1-u_2|,\] and if $D<j_1,j_2$, ${|f_{v,h}^{(k)}(u_1) - f_{v,h}^{(k)}(u_2)|} = 0$, therefore trivially satisfies Lipschitz condition.
Hence we have shown that 
\[{|f_{v,h}^{(k)}(u_1) - f_{v,h}^{(k)}(u_2)| \leq  4C_f|u_1 - u_2|.}\]

\paragraph{Margin condition} 
{
Recall $\Delta(x):=g^{(*)}(x)-g^{(\#)}(x)$ is the mean reward gap of the best arm and the next best arm given $x$. 
}
{
In each interval $\I_j$, only one arm with $k=v_j$ has a positive bump while all other arms have $f_{v,j}^{(k)} = 1/2$. Outside all bumps, $\Delta(x) =0$, so those regions contribute nothing. Therefore, for $\delta>0$:
\begin{align*}
    \P(0 < \Delta(X) \leq \delta) 
    &=\sum_{j=1}^D \P(0 < f_{v,h}^{(v_j)}(X_1)-1/2 \leq \delta, \, X_1 \in \I_j)\\
    & \leq D \bar{c} \int_{\mathcal{I}_1} 1\left\{ \K\left(\dfrac{x_1 - u_1}{h}\right) \leq \dfrac{\delta/C_f}{h}\right\} dx_1 \nonumber\\
    & = D\bar{c}h \int_{[-0.5, 0.5]} 1\{\K(t) \leq (\delta/C_f) h^{-1}\} dt
\end{align*}}
where we used the boundedness of the projected density and a change-of-variable ($t = (x_1-u_1)/h$ and $|t| \leq 0.5$).\\
If $(\delta/C_f) h^{-1} > 1$, note that since $\K$ is non-negative and uniformly bounded by $1$, we get:
\begin{align*}
 \int_{[-0.5, 0.5]} 1\{\K(t) \leq (\delta/C_f) h^{-1}\} dt = 1.   
\end{align*}
If $(\delta/C_f) h^{-1} \leq 1$, for $\K(t)$ constructed as above, observe that for any $t \in [-0.5,0.5]$, note that $0 < \K(t) \le (\delta/C_f) h^{-1}$ implies $|t| \in [1/2 - (\delta/C_f)/(2h), 1/2]$, an interval of length $\delta/h$. Therefore,
\begin{align*}
    \int_{[-0.5, 0.5]} 1\{\K(t) \leq (\delta/C_f) h^{-1}\} dt &= \int_{[-0.5, 0.5]} 1\left\{|t| \in \left[\frac{1}{2} - \frac{\delta/C_f}{2h}, \frac{1}{2}\right]\right\}  dt = (\delta/C_f) h^{-1}.
\end{align*}
Now, combining the two cases, we get that:
{
\begin{align*}
    \P(0 < \Delta(X) \leq \delta) 
    &\leq D\bar{c}h \left[1\{(\delta/C_f) h^{-1} > 1\} +  (\delta/C_f) h^{-1} 1\{\delta h^{-1} \leq C_f\}\right].
\end{align*}
}

Note $0<h\le 1$ and $-1+\alpha \le 0$, $h^{-1+\alpha} \ge 1$, and $D := \lceil h^{-1+\alpha}\rceil \le 2h^{-1+\alpha}$.  We have on $h< (\delta/C_f)$, $h^\alpha < (\delta/C_f)^\alpha$, and on $h \ge (\delta/C_f)$, $h^{-1+\alpha} \le (\delta/C_f)^{-1+\alpha}$.
Therefore,
\begin{align*}
    \P(0 < { \Delta(X)} \leq \delta) 
    &\leq 2h^{\alpha}\bar{c} 1\{(\delta/C_f) h^{-1} > 1\} +  2h^{-1+\alpha}\bar{c}(\delta/C_f)  1\{\delta h^{-1} \leq C_f\}\\
    &\leq 2(\delta/C_f)^{\alpha}\bar{c} 1\{(\delta/C_f) h^{-1} > 1\} + 2 (\delta/C_f)^{-1+\alpha} \bar{c}(\delta/C_f)  1\{\delta h^{-1} \leq C_f\}\\
    &\le 2\bar{c}(\delta/C_f)^{\alpha}.
\end{align*}
Hence, the margin condition holds with exponent $\alpha$ and $D_0 = 2\bar{c}/C_f^\alpha$. 
\end{proof}
{
\begin{lemma}[Proof of Lemma~\ref{lem: perchet_lemma_S_R}]
Define
\begin{align*}
 s_t(\pi) &:= \sum_{s=1}^t 1\{f^{(\pi_s(X_s))}(X_s^\top \beta_0) < \max_{k\in[K]}f^{(k)}(X_s^\top \beta_0)\}   \\
&= \sum_{s=1}^t 1\{\pi_s(X_s)\ne \pi^*(X_s)\}
\end{align*}
so that $S_t(\pi) = \E[s_t(\pi)]$, where $\pi^*(x_s) := \arg\max_{k\in [K]} f^{(k)}(x_s^\top \beta_0).$
We have 
\begin{align*}
    R_t(\pi) &= \sum_{s=1}^t \{\max_{k\in[K]}f^{(k)}(X_s^\top \beta_0)- f^{(\pi_s(X_s))}(X_s^\top \beta_0) \}\ge \sum_{s=1}^t \{ g^{(*)}(X_s) - g^{(\#)}(X_s)\}1(\pi_s(X_s) \ne \pi^*(X_s)),
\end{align*}
where the inequality holds because when $\pi_s(X_s) \ne \pi^*(X_s)$, the regret from the chosen arm $f^{(\pi_s(X_s))}(X_s^\top\beta_0)\le g^{(\#)}(X_s^\top\beta_0)$. Continuing, for any $\delta >0$,
\begin{align*}
\sum_{s=1}^t \{ g^{(*)}(X_s) - g^{(\#)}(X_s)\}1(\pi_s(X_s) \ne \pi^*(X_s))
    &\ge\delta \sum_{s=1}^t 1\{ g^{(*)}(X_s) - g^{(\#)}(X_s)>\delta, \pi_s(X_s) \ne \pi^*(X_s)\}\\
    &\ge \delta [s_t(\pi) -\sum_{s=1}^t 1\{ 0<g^{(*)}(X_s) - g^{(\#)}(X_s)\le \delta\}].
\end{align*}
Taking expectations on both sides and using the margin condition \ref{assum: Margin}, we obtain
\begin{align}\label{eq:lem:r_t_lb}
    \mathcal{R}_t(\pi) \ge \delta [S_t(\pi) - t D_0 \delta^\alpha].
\end{align}
Optimize the RHS of \eqref{eq:lem:r_t_lb} with respect to $\delta$, set
\begin{align*}
    \delta = (\frac{cS_t(\pi)}{(\alpha+1)D_0 t})^{1/\alpha}
\end{align*}
with $0<c<\alpha+1$ chosen small enough that $\delta \le \delta_0$. Then,
\begin{align*}
    \mathcal{R}_t(\pi) \ge \left(\frac{cS_t(\pi)}{(\alpha+1)D_0 t}\right)^{1/\alpha}\left(1-\frac{c}{\alpha+1}\right)S_t(\pi) = C_0 S_t(\pi)^{1+1/\alpha}t^{-1/\alpha}
\end{align*}
for some constant $C_0$ depending on $c,\alpha$ and $D_0$.    
\end{lemma}
}

\subsection{Proof of Theorem \ref{thm: regret_bound}} \label{sec: regret_proof_Thm1}
\begin{proof}
First we construct two events to capture the elimination process. 
Let the batch index $i=1,\dots,M$ be fixed.  For each bin $C  \in \mathcal{B}_i$, we define a ``good batch elimination event'', $\SC$, associated with $C$. 
Note that $C$ may or may not have been born at the beginning of batch $i$, and only undergoes the unique batch elimination event if it was born in the beginning of batch $i$, i.e., when $C \in  \L^{(i)}$ (also ref. Remark \ref{rmk: batch_elimination}). If $C\notin \L^{(i)}$, simply let $\SC = \Omega$ where $\Omega$ is the whole probability space.  
When $C\in \L^{(i)}$, let $\mathcal{I}_{C}$ and $\mathcal{I}^\prime_{C}$ denote the set of active arms associated with $C$ during batch $i$ and end of batch $i$ after batch elimination process, respectively. Note $|\mathcal{I}^\prime_{C}|>1$ will trigger splitting $C$ into its children sets. Define 
\begin{align}\label{def: I_C}
    \underline{\mathcal{I}}_{C} &= \left\{k \in {\I_C}: \sup_{x \in C} \{f^{(*)}(x^\top\beta_0) -f ^{(k)}(x^\top\beta_0)\} \leq c_0 |C|_\T \right\},\\
     \overline{\mathcal{I}}_{C} &= \left\{k \in {\I_C}: \sup_{x \in C} \{f^{(*)}(x^\top\beta_0) -f ^{(k)}(x^\top\beta_0) \} \leq c_1 |C|_\T \right\},
\end{align}
for $c_0 = 4L_0 + 1$ with ${L_0=L(2^{3/2}R_X+ 1)}$, $c_1 = 8c_0\gamma_X^{1/2}$ where $\gamma_X = \overline{c}_X/\underline{c}_X$, and $$f^{(*)}(x^\top\beta_0) = \max_{k \in {[K]}} f^{(k)}(x^\top \beta_0).$$ Note that, $\underline{\mathcal{I}}_{C} \subseteq  \overline{\mathcal{I}}_{C}$. Define a `good event':
\begin{equation}\label{def: SC}
    \mathcal{S}_{C} = \{\underline{\mathcal{I}}_{C} \subseteq \mathcal{I}_{C}^\prime \subseteq \overline{\mathcal{I}}_{C}\}.
\end{equation}
This is a good event because it says that all good arms (with small regret) survive the stage $i$ elimination, and all survived arms in $\mathcal{I}_{C}^\prime$ have not so large regret.
In addition, define 
\begin{equation}\label{def: GC}
\mathcal{G}_{C} = \cap_{C^\prime\in \mathcal{P}(C)} \mathcal{S}_{C'},
\end{equation}
which is the event where the elimination processes were ``good'' for all ancestors of $C$. In the special case when $C$ has no parent since $C \in \mathcal{B}_1$, simply let $\mathcal{G}_{C} = \Omega$.


We decompose the regret into three terms. Recall that $\mathcal{L}_t$ is the set of active bins at $t$. {Also, we define $\mathcal{J}_t := \cup_{s \leq t} \mathcal{L}_s$ for all the bins that were alive at some time point $s \le t$.}
First for a bin $C \in \T$, we define:
\begin{align*}
r_T^l(C) := \sum_{t=1}^T \{g^{(*)}(X_t) - g^{(\pi_t(X_t))}(X_t)\} 1(X_t \in C) 1(C \in \mathcal{L}_t),
\end{align*}
which is the amount of regret on $C$ when $C$ is ``alive'', and also define:
\begin{align*}
    r_T^b(C) := \sum_{t=1}^T (g^{(*)}(X_t) - g^{(\pi_t(X_t))}(X_t)) 1(X_t \in C) 1(C \in \mathcal{J}_t),
\end{align*}
which is the amount of regret on $C$ since $C$ was ``born''. 

There exists a recursive relationship between $r_T^l(C)$ and $r_T^b(C)$, as introduced in \citesupp{perchet2013multi_supp}. We present this relationship as Lemma \ref{lem: r_t^b_decomp} for the convenience of readers and provide a proof in Section \ref{sec: proof for Lemma rtbdecomp}.
\begin{lemma}\label{lem: r_t^b_decomp}
For $C \in \mathcal{B}_i$, for $i=1,\dots,M$, we have 
\begin{align}\label{eq: r_T^b_decomp}
    r_T^b(C) = r_T^l(C)+  \sum_{C'\in\text{child}(C)}r_T^b(C'),
\end{align}
where we adopt the convention that $\sum_{C\in\emptyset}r_T^b(C)=0$. In particular,  \begin{equation*} \sum_{C'\in\text{child}(C)}r_T^b(C') = 0 \  \text{if}  \ C \in \mathcal{B}_M. \end{equation*}
\end{lemma}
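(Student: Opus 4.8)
The plan is to track, for a fixed bin $C \in \mathcal{B}_i$, exactly which time steps $t$ contribute to each of the two indicator factors $1(C \in \mathcal{J}_t)$ and $1(C \in \mathcal{L}_t)$, and then to read off the identity by comparing these two time windows. I would write $\rho_t := g^{(*)}(X_t) - g^{(\pi_t(X_t))}(X_t)$ for the instantaneous regret, so that $r_T^b(C) = \sum_{t=1}^T \rho_t\, 1(X_t \in C)\, 1(C \in \mathcal{J}_t)$ and $r_T^l(C) = \sum_{t=1}^T \rho_t\, 1(X_t \in C)\, 1(C \in \mathcal{L}_t)$, and the whole argument reduces to analyzing the difference of the two indicators.

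First I would record two structural facts. Since $C \in \mathcal{B}_i$ is born at batch $i$, it belongs to $\mathcal{L}_{t_{i-1}+1}$, and because $\mathcal{J}_t = \cup_{s\le t}\mathcal{L}_s$ is cumulative, once $C$ appears in some $\mathcal{L}_s$ it stays in $\mathcal{J}_t$ for every later $t$; hence $1(C\in\mathcal{J}_t) = 1(t \ge t_{i-1}+1)$. For the alive-indicator, by Remark \ref{rmk: batch_elimination} the bin $C$ undergoes its unique batch elimination at the end of batch $i$, so it is alive exactly on the window $\{t_{i-1}+1,\dots,t^\star_C\}$, where $t^\star_C = t_i$ if $C$ is split (more than one surviving arm) and $t^\star_C = T$ if $C$ is not split (a single surviving arm, or $C \in \mathcal{B}_M$). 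Subtracting the two windows leaves only $\{t : t > t^\star_C\}$, giving
\begin{align*}
r_T^b(C) - r_T^l(C) = \sum_{t > t^\star_C} \rho_t\, 1(X_t \in C).
\end{align*}

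Then I would split into the two cases. If $C$ is not split, then $t^\star_C = T$ and the right-hand side is empty, so $r_T^b(C) = r_T^l(C)$; in this case none of the sets in $\text{child}(C)$ is ever born, hence $C'\notin \mathcal{J}_t$ for all $t$ and each $r_T^b(C') = 0$, matching the claimed identity (this also covers $C \in \mathcal{B}_M$, where $\text{child}(C) = \emptyset$). If instead $C$ is split at the end of batch $i$, its children are born at $t_i+1$, so $1(C'\in\mathcal{J}_t) = 1(t \ge t_i+1)$ and $r_T^b(C') = \sum_{t > t_i}\rho_t\, 1(X_t \in C')$. The decisive geometric fact I would invoke is that $\{C' : C' \in \text{child}(C)\}$ partitions $C$: the defining intervals $A'\in\text{child}(A)$ partition $A$ when $C = C_A(\beta)$, whence $1(X_t\in C) = \sum_{C'\in\text{child}(C)} 1(X_t\in C')$. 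Substituting this into the displayed difference and swapping the two finite sums yields $\sum_{t>t_i}\rho_t\, 1(X_t\in C) = \sum_{C'\in\text{child}(C)} r_T^b(C')$, which is exactly $r_T^b(C) - r_T^l(C)$.

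The argument is essentially bookkeeping, so there is no deep obstacle; the point that most needs care is the case analysis on whether $C$ is split, together with the observation that in the unsplit case the children contribute zero precisely because they are never born — that is, correctly reconciling the tree-defined set $\text{child}(C)$ with the algorithmically realized active bins. The only other places demanding precision are the partition property of the children and the identification of their birth time as $t_i+1$, both of which follow directly from the hierarchical construction of $\mathcal{T}$.
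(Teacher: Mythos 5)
Your bookkeeping for the cases you do treat is correct, and it is essentially the paper's own argument: the same case split on whether $C$ is split at the end of batch $i$, the same use of the fact that $\text{child}(C)$ partitions $C$, and the same identification of the children's birth time $t_i+1$. However, there is a missing case, and it stems from a false structural claim. You assert that since $C \in \mathcal{B}_i$ "is born at batch $i$," it belongs to $\mathcal{L}_{t_{i-1}+1}$ and hence $1(C \in \mathcal{J}_t) = 1(t \ge t_{i-1}+1)$. For $i \ge 2$ this need not hold: a bin in $\mathcal{B}_i$ is born at batch $i$ only if its parent was itself born and then split; if some ancestor of $C$ resolved to a single active arm (or was never activated), then $C$ is never born, so $1(C \in \mathcal{J}_t) \equiv 0$ and $1(C \in \mathcal{L}_t) \equiv 0$, and both of your claimed time windows are wrong. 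The lemma is stated for every $C \in \mathcal{B}_i$, and it is used in that generality in the regret decomposition of Theorem \ref{thm: regret_bound} (the layer-by-layer recursion runs over all bins of $\mathcal{B}_i$, born or not), so this case cannot be silently excluded.

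The repair is immediate, and it is precisely the paper's first case: if $C$ is never born, then no $C' \in \text{child}(C)$ is ever born either (children can only be created by splitting $C$), so $r_T^b(C) = r_T^l(C) = 0$ and $r_T^b(C') = 0$ for all children, making \eqref{eq: r_T^b_decomp} trivially true. With that case prepended, your remaining two cases (born but not split, born and split) coincide with the paper's Cases 2 and 3, and your derivation there — in particular the identity $r_T^b(C) - r_T^l(C) = \sum_{t > t^\star_C} \rho_t\, 1(X_t \in C)$ and the interchange of the finite sums via the partition property — is sound.
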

From Lemma \ref{lem: r_t^b_decomp}, trivially we obtain,
\begin{align}
    r_T^b(C) &=\left \{r_T^l(C)+  \sum_{C'\in\text{child}(C)}r_T^b(C')\right\}1(\mathcal{S}_{C}) + r_T^b(C) 1(\mathcal{S}_{C}^c) \nonumber\\
    &=  r_T^l(C)1(\mathcal{S}_{C})+ r_T^b(C) 1(\mathcal{S}_{C}^c)+ \sum_{C'\in\text{child}(C)}r_T^b(C')1(\mathcal{S}_{C}) \label{eq: regret_iterative}  
\end{align}
Additionally, we can have the following iterative relationship:
\begin{align}
    &{\sum_{C\in \mathcal{B}_i} \sum_{C'\in \text{child}(C)} r_T^b(C')1({\mathcal{S}_C})1({\mathcal{G}_C})}\label{eq: regret_iterative2}\\
 &=\sum_{C\in \mathcal{B}_i} \sum_{C'\in \text{child}(C)} \Big\{r_T^l(C')1(\mathcal{S}_{C'}) + r_T^b(C') 1(\mathcal{S}_{C'}^c) + \sum_{C''\in\text{child}(C')}r_T^b(C'')1(\mathcal{S}_{C'})\Big\} 1({\mathcal{S}_C})1({\mathcal{G}_C})\nonumber\\
&=\sum_{C'\in \mathcal{B}_{i+1}}\{ r_T^l(C')1({\mathcal{S}_{C'}})+ r_T^b(C')1({\mathcal{S}_{C'}^c})\}1({\mathcal{G}_{C'}}) + {\sum_{C'\in \mathcal{B}_{i+1}} \sum_{C''\in\text{child}(C')}r_T^b(C'')1({\mathcal{S}_{C'}})1({\mathcal{G}_{C'}})}\nonumber
\end{align}
using the fact that $1({\mathcal{S}_C})1({\mathcal{G}_C})=  1({\mathcal{G}_{C'}})$ for $C' \in \text{child}(C)$.

Using \eqref{eq: regret_iterative} and applying \eqref{eq: regret_iterative2} iteratively, and using the fact that $\mathcal{G}_C=\Omega$ for $C \in \mathcal{B}_1$, we have:
\begin{align*}
    R_T(\pi) 
    &= \sum_{C \in \mathcal{B}_1} r_T^b(C)\\
    &= \sum_{C \in \mathcal{B}_1} r_T^l(C) 1(\mathcal{S}_C)1(\mathcal{G}_C) +   \sum_{C \in \mathcal{B}_1}  r^b_T(C)1(\mathcal{S}_C^c)1(\mathcal{G}_C)+\sum_{C \in \mathcal{B}_1}\sum_{C' \in \text{child}(C)} r_T^b(C')1(\mathcal{S}_{C}) 1(\mathcal{G}_C)\\
     &= \sum_{i=1}^2 \sum_{C \in \mathcal{B}_i} \{r_T^l(C) 1(\mathcal{S}_C) +    r^b_T(C)1(\mathcal{S}_C^c)\}1(\mathcal{G}_C)+\sum_{C \in \mathcal{B}_2}\sum_{C' \in \text{child}(C)}  r_T^b(C')1(\mathcal{S}_{C})1(\mathcal{G}_{C})  \\
     & \dots =\sum_{i=1}^{M-1} \sum_{C \in \mathcal{B}_i} \{r_T^l(C) 1(\mathcal{S}_C) +    r^b_T(C)1(\mathcal{S}_C^c)\}1(\mathcal{G}_C) +\sum_{C \in \mathcal{B}_{M-1}}\sum_{C' \in \text{child}(C)}  r_T^b(C')1(\mathcal{S}_{C})1(\mathcal{G}_{C})  \\
     &=\sum_{i=1}^{M-1} \sum_{C \in \mathcal{B}_i} \{r_T^l(C) 1(\mathcal{S}_C) +    r^b_T(C)1(\mathcal{S}_C^c)\}1(\mathcal{G}_C)+\sum_{C \in \mathcal{B}_{M}} r_T^b(C)1(\mathcal{G}_{C}).
\end{align*}

Define the event that we obtain sufficient samples for all $C$ in $\mathcal{B}_i$ for $1\le i \le M-1$:
\begin{align}\label{def: E_nsamples_event}
\mathcal{E}:=\{\forall C \in \cup_{i=1}^{M-1} \mathcal{B}_i, m_{C,i} \in[m_{C,i}^*/2,\,3m_{C,i}^*/2]\}.
\end{align}
We have
\begin{align*}
    R_T(\pi) = R_T(\pi)1(\mathcal{E}^c) + R_T(\pi)1(\mathcal{E}).
\end{align*}

Moreover, for a set $C\in \T$, if $C$ has never been born (i.e., if $C \notin \mathcal{J}_T \iff C \notin \L_t$ for all $1\le t \le T$), $r_T^l(C) = r_T^b(C) = 0$.
Therefore,
\begin{align}
    R_T&(\pi)1(\mathcal{E}) \\
    &= \sum_{i=1}^{M-1} \sum_{C \in \mathcal{B}_i \cap \mathcal{J}_T} r_T^l(C) 1(\mathcal{S}_C \cap \mathcal{G}_C\cap \mathcal{E}) +   \sum_{i=1}^{M-1} \sum_{C \in \mathcal{B}_i \cap \mathcal{J}_T}  r^b_T(C)1(\mathcal{S}_C^c\cap  \mathcal{G}_C\cap \mathcal{E}) +\sum_{C \in \mathcal{B}_{M}\cap \mathcal{J}_T} r_T^b(C)1(\mathcal{G}_{C}\cap \mathcal{E})\nonumber\\
      &\le  \sum_{i=1}^{M-1} \sum_{C \in \mathcal{B}_i \cap \mathcal{J}_T} r_T^l(C) 1(\mathcal{S}_C \cap \mathcal{G}_C) +   \sum_{i=1}^{M-1} \sum_{C \in \mathcal{B}_i \cap \mathcal{J}_T}  r^b_T(C)1(\mathcal{S}_C^c\cap  \mathcal{G}_C\cap \mathcal{E}) +\sum_{C \in \mathcal{B}_{M}\cap \mathcal{J}_T} r_T^b(C)1(\mathcal{G}_{C}). \nonumber
\end{align}
Let, for $i=1,\dots,M-1$,
\begin{align*}
    U_i:= &\sum_{C \in \mathcal{B}_i\cap \mathcal{J}_T} r_T^l(C) 1(\mathcal{S}_{C} \cap \mathcal{G}_{C}), \ \   \ V_i := \sum_{C \in \mathcal{B}_i\cap \mathcal{J}_T} r_T^b(C) 1(\mathcal{S}_{C}^c \cap \mathcal{G}_{C}\cap \mathcal{E}),
\end{align*}
and $W_M =: \sum_{C \in \mathcal{B}_M\cap \mathcal{J}_T} r_T^b(C) 1(\mathcal{G}_C) $ so that 
\begin{equation}\label{eq: regret_breakdown}
    R_T(\pi)1(\mathcal{E}) \le \sum_{i=1}^{M-1} (U_i + V_i) + W_M.
\end{equation}
Next, we bound these three terms, namely, $U_i, V_i$ and $W_M$ separately.

\paragraph{Controlling $U_i$}\label{sec: U_i}
Let us fix some batch $i$, $1 \leq i \leq M-1$, and some bin $C \in \mathcal{B}_i\cap \mathcal{J}_T$. 
Recall that by definition of $\mathcal{B}_i$, $C=C_{A}(\beta)$ for some $A \in \mathcal{A}_i$, where $A \subseteq [L_\beta,U_\beta]$ is an interval of length $w_i$. 
By definition of $r_T(C)$, 
\begin{align*}
    &\E[r_T^l(C) 1(\mathcal{G}_{C} \cap \mathcal{S}_{C})] \\
    &= \E\left[\sum_{t=1}^T \{g^{(*)}(X_t) - g^{(\pi_t(X_t))}(X_t)\} 1(X_t \in C) 1(C \in \mathcal{L}_t) 1(\GC \cap \SC)\right].
\end{align*}
We show that the summand is non-zero only for $t \in [t_{i-1}+1 , t_i]$:
First, since $C \in \mathcal{B}_i$, $C \notin \L_t$ for $t\le t_{i-1}$, i.e., $1(C \in \mathcal{L}_t) =0$ for $t \le t_{i-1}$. This is because $C \in \mathcal{B}_i$ can only born at the beginning of batch $i$, that is when $t = t_{i-1}+1$. 
Now consider $t > t_{i}$. At the end of batch $i$, there are two possibilities:\\
{
1. $|\mathcal{I}'_{C}|>1$: in this case, $C$ is split into its children, and $C \notin \L_t$ for $t >  t_{i}$.\\
2. $|\mathcal{I}'_{C}|=1$: we argue that on $\SC$, the remaining arm is optimal for all $x \in C$, and therefore $g^{(*)}(x) - g^{(\pi_t(x))}(x) = 0$ for $t >t_i$, where we recall that $\pi_t(x)$ is the arm chosen for $x$ by the algorithm.}
{Let $k_1 \in {[K]}$ be any of the eliminated arms and $k_2 \in {[K]}$ be the remaining arm.} On $\SC$, we have $\underline{\mathcal{I}}_{C} \subseteq \mathcal{I}_{C}^\prime = \{k_2\} \subseteq \overline{\mathcal{I}}_{C}$, therefore $k_1 \notin \underline{\mathcal{I}}_{C}$. Then, there exists $x_0 \in  C$ such that $g^{(k_2)}(x_0) - g^{(k_1)}(x_0) > c_0 |C|_\T$.  
For any $x\in C$, 
\begin{align*}
    g^{(k_2)}(x) - g^{(k_1)}(x)  \ge g^{(k_2)}(x_0) - g^{(k_1)}(x_0) - \sum_{k\in {\{k_1,k_2\}}} |g^{(k)}(x) - g^{(k)}(x_0)|.
\end{align*}

By Lemma \ref{lem: barg_min_g}, for sufficiently large $T$,  $|g^{(k)}(x)-g^{(k)}(x_0)| \le L_0 w_i$ for $k\in{[K]}$, 
and therefore
\begin{align*}
g^{(k_2)}(x) - g^{(k_1)}(x) \ge (c_0-2L_0)w_i = (2L_0+1)w_i>0,
\end{align*}
recalling that $c_0 = 4L_0+1$. Therefore $k_2$ is the optimal arm for all $x\in C$. In particular, regret is not incurred for $t > t_i$, i.e.,  $g^{(*)}(X_t) - g^{(\pi_t(X_t))}(X_t)=0$ for $X_t \in C, \,t>t_i$.

Therefore,
\begin{align*}
    &\E[r_T^l(C) 1(\GC \cap \SC)] \\
    &= \E\left[\sum_{t=t_{i-1}+1}^{t_i} \{g^{(*)}(X_t) - g^{(\pi_t(X_t))}(X_t)\} 1(X_t \in C) 1(C \in \mathcal{L}_t) 1(\GC \cap \SC)\right].
\end{align*}
On the event $\GC$, we have that $\mathcal{I}^\prime_{p(C)} \subseteq \overline{\mathcal{I}}_{p(C)}$, that is, for any $k \in \mathcal{I}^\prime_{p(C)}$,
\begin{align*}
    \sup_{x \in p(C)} \{g^{(*)}(x) - g^{(k)}(x) \}\leq c_1 |p(C)|_\T.
\end{align*}
Moreover, regret is only incurred at points where {$\Delta(x):=g^{(*)}(x) - g^{(\#)}(x) > 0$. }
Therefore, on $\GC$, for any $x \in C$ and $k \in \mathcal{I}^\prime_{p(C)}$,
\begin{align}
    g^{(*)}(x) - g^{(k)}(x)  \leq c_1 |p(C)|_\T 1(0 < {\Delta(x)} \leq c_1 |p(C)|_\T).\nonumber 
\end{align}
In particular, for any $X_t \in C$, the inequality
\begin{align}
    g^{(*)}(X_t) - g^{(\pi_t(X_t))}(X_t)  \leq c_1 |p(C)|_\T 1(0 < {\Delta(X_t)} \leq c_1 |p(C)|_\T)\label{eq: eq1_Vi}
\end{align}
holds on $\GC$ when $t > t_{i-1}$, since for $t > t_{i-1}$, 
$\pi_t(X_t)$ can be selected from the (subset of) active arms after the $i-1$ batch elimination, and therefore $\pi_t(X_t) \in  \mathcal{I}'_{p(C)}$. 
Therefore, we obtain,
\begin{align*}
    &\E\left[\sum_{t=t_{i-1}+1}^{t_i} (g^{(*)}(X_t) - g^{(\pi_t(X_t))}(X_t)) 1(X_t \in C) 1(C \in \mathcal{L}_t) 1(\GC \cap \SC)\right]\\
 &\le\sum_{t=t_{i-1}+1}^{t_i} c_1 |p(C)|_\T\E\left[ 1(0 < {\Delta(X_t)} \leq c_1 |p(C)|_\T) 1(X_t \in C) 1(C \in \mathcal{L}_t) 1(\GC \cap \SC)\right]\\
 &\le\sum_{t=t_{i-1}+1}^{t_i} c_1 |p(C)|_\T \P\left( 0 < {\Delta(X_t)} \leq c_1 |p(C)|_\T,\, X_t \in C \right)\\
 &=(t_i - t_{i-1}) c_1 |p(C)|_\T \P\left( 0 < {\Delta(X)} \leq c_1 |p(C)|_\T,\, X \in C \right),
\end{align*}
where the last equality is due to the fact that $X_t\sim \P_X$ iid. 
Finally,
\begin{align*}
    \E [U_i]
    &= \sum_{C\in \mathcal{B}_i \cap \mathcal{J}_T} \E[r_T^l(C) 1(\GC \cap \SC)] \\
    &\le\sum_{C \in \mathcal{B}_i \cap \mathcal{J}_T} (t_i - t_{i-1}) c_1 |p(C)|_\T \P( 0 < {\Delta(X)} \leq c_1 |p(C)|_\T, \, X \in C )\\& \le (t_i - t_{i-1}) c_1|p(C)|_\T \P( 0 < {\Delta(X)} \leq c_1 |p(C)|_\T),
\end{align*}
where for the last equality we use the fact that $\mathcal{B}_i$ is the partition of $\mathcal{X}$. Since $|p(C)|_\T= w_{i-1}$ by the set-up and $\P\left( 0 < {\Delta(X)} \leq c_1 |p(C
)|_\T\right) \le D_0\{c_1|p(C)|_\T\}^\alpha$ by the margin condition in Assumption \ref{assum: Margin}, for $1\le i \le M-1$,
\begin{align} \label{eq: Ui_bound}
    \E[U_i] \le (t_i-t_{i-1}) D_0\{c_1w_{i-1}\}^{1+\alpha}.
\end{align}

\paragraph{Controlling $V_i$}\label{sec: V_i}
Similarly, choose some $1 \leq i \leq M-1$ and bin $C \in \mathcal{B}_i \cap \mathcal{J}_T$. We have $C= C_A(\beta)$ for some $A\in\mathcal{A}_i$. 
We have from the definition of $r_T^b(C)$,
\begin{align}
    \E&[r_T^b(C) 1(\GC \cap \SC^c  \cap \mathcal{E})] \nonumber\\
    &= \E\Bigg[\sum_{t=1}^T (g^{(*)}(X_t) - g^{(\pi_t(X_t))}(X_t)) 1(X_t \in C) 1(C\in \mathcal{J}_t) 1(\GC \cap \SC^c \cap \mathcal{E}) \Bigg] \nonumber\\
     &= \E\Bigg[\sum_{t=t_{i-1}+1}^T (g^{(*)}(X_t) - g^{(\pi_t(X_t))}(X_t)) 1(X_t \in C) 1(C\in \mathcal{J}_t) 1(\GC \cap \SC^c \cap \mathcal{E}) \Bigg]\nonumber\\
     &\le c_1 |p(C)|_\T \E\Bigg[\sum_{t=t_{i-1}+1}^T   1(0 < {\Delta(X_t)} \leq c_1 |p(C)|_\T, X_t \in C) 1(\GC \cap \SC^c \cap \mathcal{E}) \Bigg], \label{eq: eq2_Vi}
 \end{align}
where for the second equality we use the fact that $C \notin \mathcal{J}_t$ for $t \le t_{i-1}$, since $C \in \mathcal{B}_i$ can be born only at batch $i$ and we use \eqref{eq: eq1_Vi} for the last inequality.

We note that $\GC \cap \SC^c \cap \mathcal{E}$ is independent of $\{X_t; t > t_i\}$. 
This is because $\GC = \cap_{C \in \mathcal{P}(C)}\SC$, therefore it only depends on (random) batch elimination events up to $i-1$ batch, i.e., $\GC $ only depends on $\{(X_t,Y_t); 1 \le t\le t_{i-1}\}$, and $\SC$ depends on batch elimination event at the end of batch $i$, and therefore depends on $\{(X_t,Y_t); t_{i-1}+1 \le t\le t_{i}\}$. 
Therefore,
\begin{align*}
&\E\left[\sum_{t=t_{i-1}+1}^T   1(0 < {\Delta(X_t)} \leq c_1 |p(C)|_\T, X_t \in C )  1(\GC \cap \SC^c \cap \mathcal{E}) \right]    \\
&=\sum_{t=t_{i-1}+1}^{t_i} \E\left[  1(0 < {\Delta(X_t)} \leq c_1 |p(C)|_\T, X_t \in C)  1(\GC \cap \SC^c \cap \mathcal{E}) \right] \\
&\quad +\sum_{t=t_{i}+1}^T\P\left[  0 < {\Delta(X_t)} \leq c_1 |p(C)|_\T, X_t \in C\right]  \P (\GC \cap \SC^c \cap \mathcal{E}) \\
&\le \sum_{t=t_{i-1}+1}^{t_i} \P\left[   0 < {\Delta(X_t)} \leq c_1 |p(C)|_\T, X_t \in C \right] \\
&\quad +\sum_{t=t_{i}+1}^T\P\left[  0 < {\Delta(X_t)} \leq c_1 |p(C)|_\T, X_t \in C\right]  \P (\GC \cap \SC^c \cap \mathcal{E}), 
\end{align*}
where for the last inequality we  use $ 1(\GC \cap \SC^c \cap \mathcal{E}) \le 1$ a.s.
Therefore, using this in \eqref{eq: eq2_Vi} we obtain,
\begin{align*}
      \E&[r_T^b(C) 1(\GC \cap \SC^c \cap \mathcal{E})] \\
      &\le c_1 |p(C)|_\T \{ (t_i-t_{i-1}) +(T-t_i)\P (\GC \cap \SC^c \cap \mathcal{E}) \}  \P\left[   0 < {\Delta(X)} \leq c_1 |p(C)|_\T, X\in C \right].
\end{align*}

From Lemma \ref{lem: bound_Gc_cap_SC^c}, we have that $P(\GC \cap \SC^c \cap \mathcal{E}) \leq \dfrac{3 m_{C,i}^*}{2T|C|_\T}$. 
Recalling the definition $m_{C,i}^*=\E[\sum_{t=t_{i-1}+1}^{t_i} 1\{X_t \in C\}] = (t_i - t_{i-1}) P_X(C)\le (t_i - t_{i-1})\overline{c}_X |C|_\T$ for a sufficiently large $T$, 
we have 
\begin{align*}
    (T-t_i)\P (\GC \cap \SC^c \cap \mathcal{E}) \le \frac{ (T-t_{i-1})\{3\overline{c}_X (t_i - t_{i-1}) |C|_\T\} }{2T|C|_\T} \le 3\overline{c}_X (t_i - t_{i-1}).
\end{align*}
Then, using the fact that $\mathcal{B}_i$ is the partition of $\mathcal{X}$, and Assumption \ref{assum: Margin}, we obtain:
\begin{align}
    \E[V_i] &=\sum_{C\in \mathcal{B}_i\cap \mathcal{J}_T}  \E[r_T^b(C) 1(\GC \cap \SC^c \cap \mathcal{E})]\nonumber\\
    &\le c_1 |p(C)|_\T  (t_i-t_{i-1}) (3\overline{c}_X+1) \P\left[   0 < {\Delta(X)} \leq c_1 |p(C)|_\T \right]\nonumber\\
    &\le D_0  \{c_1 w_{i-1}\}^{1+\alpha} (3\overline{c}_X +1)(t_i-t_{i-1}).\label{eq: Vi_bound}
\end{align}

\paragraph{Controlling $W_M$} \label{sec: W_M}
Finally, for $C = C_A(\beta) \in \mathcal{B}_M\cap \mathcal{J}_T$ with $A \in \mathcal{A}_M$, since $C \in \mathcal{J}_t$ only for $t>t_{M-1}$,
\begin{align*}
 \E&[r_T^b(C) 1(\GC) ]\\
  &=  \E[\sum_{t=1}^T \{g^{(*)}(X_t) - g^{(\pi_t(X_t))}(X_t)\}1(X_t \in C) 1(C \in \mathcal{J}_t)1(\GC) ]\\
  &= \E[ \sum_{t=t_{M-1}+1}^T \{g^{(*)}(X_t) - g^{(\pi_t(X_t))}(X_t)\} 1(X_t \in C) 1(\GC) ]\\
  &\le \E\Big[ \sum_{t=t_{M-1}+1}^T  c_1 |p(C)|_\T  1(0 < {\Delta(X_t)} \leq c_1 |p(C)|_\T,\,  X_t \in C) 1(\GC) \Big]\\
   &\le \sum_{t=t_{M-1}+1}^T c_1 |p(C)|_\T \P(   0 < {\Delta(X_t)} \leq c_1 |p(A)| \,, X_t \in C), 
\end{align*}
where the first inequality is due to \eqref{eq: eq1_Vi}.
In particular,
\begin{align}
 \E[W_M] &=  \sum_{C \in \mathcal{B}_M\cap \mathcal{J}_T} \E[r_T^b(C) 1(\GC)]\nonumber\\
 &\le (T-t_{M-1}) c_1|p(C)|_\T \P(0 < {\Delta(X)} \leq c_1 |p(C)|_\T) \nonumber \\
 &\le (T-t_{M-1}) D_0 \{c_1 w_{M-1}\}^{1+\alpha}. \label{eq: WM_bound}
\end{align}

\paragraph{Regret upper bound.}
Putting the results from \eqref{eq: Ui_bound}, \eqref{eq: Vi_bound} and \eqref{eq: WM_bound} together in \eqref{eq: regret_breakdown}, we get, 
\begin{align*}
    \E[R_T(\pi) 1(\mathcal{E})]
    & \le\sum_{1 \leq  i \le M-1} \{\E[U_i] +\E[V_i]\} + \E[W_M]\\
    & \le \sum_{1\le i\le M-1}   D_0 (3\overline{c}_X +2) \{c_1 w_{i-1}\}^{1+\alpha}(t_i-t_{i-1})+  D_0 \{c_1 w_{M-1}\}^{1+\alpha} (T-t_{M-1}).
\end{align*}
{
First, note that with the choice of $b_0 \asymp \{T/(K\log T)\}^{\frac{1-\gamma}{3(1-\gamma^M)}}$ and $K=O(\log T)$,
\begin{align*}
    KTw_i \asymp KTb_0^{-\frac{1-\gamma^i}{1-\gamma}}\asymp KT\left(\frac{T}{K\log T}\right)^{-\frac{1-\gamma^i}{3(1-\gamma^M)}}\lesssim T.
\end{align*}
By the choice of the bind widths and batch sizes in \eqref{eq: wi_formula} and\eqref{eq: batch_size}, we have,
\begin{align*}
t_i-t_{i-1}&\asymp Kw_i^{-3} {\log{(KT w_i)}}\lesssim Kw_i^{-3}\log(T),\\
w_i^{-3}w_{i-1}^{ (1+\alpha)} &\asymp b_0^{3(\frac{1-\gamma^i}{1-\gamma})-3\gamma(\frac{1-\gamma^{i-1}}{1-\gamma})}  = b_0^3.
\end{align*}
Therefore for $1\le i \le M-1$, we have
\begin{align*}
    (t_i -t_{i-1} ) w_{i-1}^{ (1+\alpha)} \lesssim K\log (T) w_i^{-3} w_{i-1}^{ (1+\alpha)} \lesssim K\log(T)\left(\frac{T}{K\log T}\right)^{\frac{1-\gamma}{1-\gamma^M}}=(K\log T)^{1-\beta_M}T^{\beta_M}
\end{align*}
where we recall $\beta_M:=\frac{1-\gamma}{1-\gamma^M}$.
For the last term, since
\begin{align*}
    w_{M-1}^{1+\alpha} \asymp b_0^{-3\gamma(\frac{1-\gamma^{M-1}}{1-\gamma})}\asymp\left(\frac{T}{K \log T}\right)^{-(\frac{\gamma-\gamma^M}{1-\gamma^M})},
\end{align*}
we obtain,
\begin{align*}
   (T-t_{M-1}) w_{M-1}^{(1+\alpha)}  \lesssim   T^{1-(\frac{\gamma-\gamma^M}{1-\gamma^M})}(K\log T)^{\frac{\gamma-\gamma^M}{1-\gamma^M}} =(K\log T)^{1-\beta_M}T^{\beta_M}.
\end{align*}
Therefore,
\begin{align*}
     \E[R_T(\pi) 1(\mathcal{E})] \lesssim  M (K\log T)^{1-\beta_M}T^{\beta_M}.
\end{align*}
}
On the other hand, since we have $|Y_i| \le 1$, 
\begin{align*}
    \E[R_T(\pi)1(\mathcal{E}^c)] \le 2T \P(\mathcal{E}^c) \le 2,
\end{align*}
by Lemma \ref{lem:mCA_bound}. Therefore, we prove the result of Theorem \ref{thm: regret_bound}. 
\end{proof}
\subsubsection{Proof for Lemma \ref{lem: r_t^b_decomp}} \label{sec: proof for Lemma rtbdecomp}
\begin{proof}
There exists three cases for $C \in \mathcal{B}_i$ for $i=1,\dots,M-1$.
\begin{enumerate}
\setlength\itemsep{0em}
    \item $C$ is not born at the beginning of batch $i$,
    \item $C$ is born at the beginning of batch $i$, and is not split into its children sets after the batch elimination at the end of batch $i$, and
    \item $C$ is born at the beginning of batch $i$, and is split into its children sets after the batch elimination at the end of batch $i$.
\end{enumerate}
In case 1, $C$ is never born, i.e., $C \notin \L_t$ for all $1\le t \le T$, as a set $C \in \B_i$ can be born only at batch $i$ by the set up of the algorithm. Moreover, since $C$ is not born, its child $C' \in \text{child}(C)$ will not be born. Therefore $r_T^b(C)=r_T^l(C) =r_T^b(C')= 0$, and equation \eqref{eq: r_T^b_decomp} is trivially true.
In case 2, $C \notin \mathcal{J}_t$ for $t \le t_{i-1}$ (before batch $i$) and $C \in \L_t$ for $t \ge t_{i-1}+1$ (batch $i$ and onward). Therefore, 
\begin{align*}
    r_T^b(C) &= \sum_{t=t_{i-1}+1}^{T} \{g^{(*)}(X_t) - g^{(\pi_t(X_t))}(X_t)\} 1(X_t\in C)1(C\in\mathcal{J}_t)\\
    &= \sum_{t=t_{i-1}+1}^{T} \{g^{(*)}(X_t) - g^{(\pi_t(X_t))}(X_t)\} 1(X_t\in C)1(C\in\mathcal{L}_t)= r_T^l(C).
\end{align*}
Since $\text{child}(C) \notin \mathcal{J}_t$ for all $t$ ($C$ is not split), $r_T^b(C') =0$  for any $C'\in \text{child}(C)$, and therefore equation \eqref{eq: r_T^b_decomp} holds. 
In the last case,
\begin{align*}
    r_T^b(C) &= \sum_{t=t_{i-1}+1}^{t_i} \{g^{(*)}(X_t) - g^{(\pi_t(X_t))}(X_t)\}1(X_t\in C)1(C\in\mathcal{L}_t)
    \\ &\qquad+ \sum_{t=t_{i}+1}^{T} \{g^{(*)}(X_t) - g^{(\pi_t(X_t))}(X_t)\}1(X_t\in C)1(C\in\mathcal{J}_t)\\
    &=\sum_{t=t_{i-1}+1}^{t_i}  \{g^{(*)}(X_t) - g^{(\pi_t(X_t))}(X_t)\}1(X_t\in C)1(C\in\mathcal{L}_t)
    \\ & \qquad + \sum_{t=t_{i}+1}^{T} \sum_{C'\in\text{child}(C)}\{g^{(*)}(X_t) - g^{(\pi_t(X_t))}(X_t)\}1(X_t\in C')1(C'\in\mathcal{J}_t)\\
    &=r_T^l(C)+  \sum_{C'\in\text{child}(C)}r_T^b(C'),
\end{align*}
where the second equality is due to the fact that $C = \cup_{C'\in\text{child}(C)} C'$ and children sets are disjoint, and $1(C \in \mathcal{J}_t) = 1(C'\in \mathcal{J}_t) = 1$ for $t_i+1 \le t \le T$.
Therefore, \begin{align*}
    r_T^b(C) = r_T^l(C)+  \sum_{C'\in\text{child}(C)}r_T^b(C').
\end{align*}
The equation \eqref{eq: r_T^b_decomp} is also true for $i=M$, where only the first two cases happen, and we treat $\sum_{C'\in \text{child}(C)} r_T^b(C') = \sum_{C'\in\emptyset} r_T^b(C') = 0$. 
\end{proof}
\subsection{Proof of Lemma \ref{lem: estimated_beta_bound}} \label{sec: proof_lem_estimated_beta_bound}
\begin{proof}
Let {$\hat{\beta}^{(1)},\,\dots,\hat{\beta}^{(K)}$} be the estimated index vectors. Let $n_k$ be the number of samples used for $\hat{\beta}^{(k)}$ for $k\in{[K]}$. 
{
By the setup of Algorithm \ref{algorithm: Initial_Dir_Estimation}, for a sufficiently large $t_{\rm init}$, we have,
\begin{equation*}
    \frac{t_{\rm init}}{2K} < n_k < \frac{2t_{\rm init}}{K}
\end{equation*}
and therefore by Assumption \ref{assum: index_rate}, for any $\delta \in (0, K/(2t_{\rm init}))$, the following inequality holds for all $k\in [K]$ with probability at least $1-K\delta$:
\begin{align}\label{eq: sin_bk_bound}
     \sin \angle \hat{\beta}^{(k)},\beta_0 \le  C_{\rm idx}\frac{\textrm{polylog}(2t_{\rm init}/(K\delta))}{\sqrt{t_{\rm init}/(2K)}}\le  C'_{\rm idx}\frac{\sqrt{K}\textrm{polylog}(t_{\rm init}/(K\delta))}{\sqrt{t_{\rm init}}}.
\end{align}
for another constant $C'_{\rm idx}$.
}

Note for any $u,v$ such that $\|u\|_2 = \|v\|_2=1$,
\begin{equation}\label{eq: proj_sin_theta}
    \|uu^\top - vv^\top \|_F^2 = 2-2(u^\top v)^2 = 2(\sin \angle u,v)^2,
\end{equation}
since $\cos (\angle u,v) = |u^\top v|$ by the definition of the principal angle between $u$ and $v$. 

Then, for $\hat{\mathcal{P}} = \sum_{k=1}^K \omega_k \hat{\beta}^{(k)}(\hat{\beta}^{(k)})^\top$ with $\sum_k \omega_k = 1$, { the following inequality holds with probability at least $1-K\delta:$}
\begin{align}
    \|\hat{\mathcal{P}} - \mathcal{P}_0\|_F &= \|\sum_{k=1}^{{K}} \omega_k 
    \{\hat{\beta}^{(k)}(\hat{\beta}^{(k)})^\top - \beta_0\beta_0^\top\} \|_F \nonumber\\
    &\le  \sum_{k=1}^{{K}} \omega_k \| \hat{\beta}^{(k)}(\hat{\beta}^{(k)})^\top - \beta_0\beta_0^\top \|_F \nonumber\\
    & \le { \sqrt{2}C'_{\rm idx} K^{1/2}\frac{\textrm{polylog}(t_{\rm init}/(K\delta))}{\sqrt{t_{\rm init}}}.}\label{eq: Phat_bound}
\end{align}
Then by a variant of the Davis-Kahan inequality (Theorem 2 in \citesupp{yu2015useful}) with $r=s=1$ and the bound \eqref{eq: Phat_bound}, we have,
\begin{equation*}
    \sin \angle \hat{\beta},\beta_0 = 2 \|\hat{\mathcal{P}} - \mathcal{P}_0\|_F \le 2^{3/2}{C'_{\rm idx} K^{1/2}\frac{\textrm{polylog}(t_{\rm init}/(K\delta))}{\sqrt{t_{\rm init}}}}.
\end{equation*}
Taking $\tilde{C}_{\rm idx} = 2^{3/2}C'_{\rm idx}$, we obtain the first inequality.

For the second inequality, note that for any $u,v$ such that $\|u\|_2 = \|v\|_2=1$, if $u^\top v \ge 0$, we have
\begin{equation}\label{eq: uv_ineq1}
    \|u -v\|_2^2 = 2(1-u^\top v) \le 2(1-(u^\top v)^2) = 2 (\sin\angle u,v)^2.
\end{equation}
On the other hand, if $u^\top v \le 0$, we have,
\begin{align}\label{eq: uv_ineq2}
    \|u +v\|_2^2 \le \|uu^\top - vv^\top \|_F^2 = 2(\sin \angle u,v)^2,
\end{align}
which can be obtained by replacing $v$ with $-v$ in \eqref{eq: uv_ineq1}. In particular, there exists $\hat{o} = \mathrm{sgn}(\hat{\beta}^\top \beta_0) \in \{-1,1\}$ such that
\begin{equation*}
    \| \hat{\beta}\cdot \hat{o} - \beta_0\|_2 \le \sqrt{2} \sin \angle \hat{\beta}, \beta_0 \le 2^{1/2}{\tilde{C}_{\rm idx} K^{1/2}\frac{\textrm{polylog}(t_{\rm init}/(K\delta))}{\sqrt{t_{\rm init}}}}.
\end{equation*}
\end{proof}

\subsection{Proof of Theorem \ref{thm: thm2}}\label{sec: regret_proof_thm2}
\begin{proof}
We know from \eqref{eq: regret_phase1_bd_phase2} that,
\begin{align*}
    \mathcal{R}_T(\pi) \le 2t_{\rm init} + \mathcal{R}_{T-t_{\rm init}}(\pi;\beta).
\end{align*}
Define $\mathcal{E}_\beta$ to  be the event that the inequality \eqref{eq: estimation_rate_for_beta_wi} holds for all $k \in {[K]}$, which holds with probability at least {$1- K\delta$} under Assumption \ref{assum: index_rate}, {provided that $\delta \in (0, K/(2t_{\rm init})]$}. 
{
Choose $\delta = 1/(KT).$
Since $t_{\rm init} \asymp  K^{1/3}{\rm polylog} (T)T^{2/3}$ and $K=O(\log T)$, for a sufficiently large $T$, $\delta \le K/(2t_{\rm init})$ since
\begin{equation*}
    \frac{1}{KT} \lesssim \frac{K}{K^{2/3}{\rm polylog} (T)T^{2/3}}.
\end{equation*}
Therefore, on $\mathcal{E}_\beta$, by Lemma \ref{lem: estimated_beta_bound} and the choice of $t_{\rm init}$ (ref. Equation \ref{eq: tinit_bound}), 
\begin{align*}
    \sin \angle \hat{\beta}, \beta_0 \le   \tilde{C}_{\rm idx} \frac{\sqrt{K}\textrm{polylog}(t_{\rm init}\cdot T)}{\sqrt{t_{\rm init}}}\lesssim \left( \frac{T}{K\log T}\right)^{-1/3}.
\end{align*}
}
Therefore, we have,
\begin{align*}
    \mathcal{R}_T(\pi) 
    &\le 2t_{\rm init} + \E[R_{T-t_{\rm init}}(\pi;\beta)1(\mathcal{E}_\beta) + R_{T-t_{\rm init}}(\pi;\beta)1(\mathcal{E}_\beta^c)]\\
    &\le 2t_{\rm init} + \E[R_{T-t_{\rm init}}(\pi;\beta)1(\mathcal{E}_\beta)] + (T-t_{\rm init}){K\delta}.
\end{align*}
{
Then by Theorem \ref{thm: regret_bound},
\begin{align*}
    \E[R_{T-t_{\rm init}}(\pi;\beta)1(\mathcal{E}_\beta)] \lesssim (M-1)\log\{K(T-t_{\rm init})\}^{1-\beta_M} (T-t_{\rm init})^{\beta_M}.
\end{align*}
Then,
\begin{align}
    \mathcal{R}_T(\pi) 
    &\lesssim  K^{1/3}{\rm polylog} (T)T^{2/3} + M\{K\log(T)\}^{1-\beta_M} T^{\beta_M} + TK \cdot (1/KT)\label{eq: thm2_RT}\\
    &\lesssim  {\rm polylog} (T) \max\{K^{1/3}T^{2/3} , K^{1-\beta_M}T^{\beta_M}\}\nonumber,
\end{align}
where we use the fact that the first term dominates the third term in \eqref{eq: thm2_RT}.}
\end{proof}
\subsection{Supporting Lemmas}

\begin{lemma} \label{lem: Chernoff_bound}
\textbf{Multiplicative Chernoff Bound: }
 Suppose $X_1, ..., X_n$ are independent random variables taking values in $\{0, 1\}$. Let $X$ denote their sum and let $\mu = \E[X]$ denote the sum's expected value. Then for any  $\delta > 0$,
\begin{align*}
    \P(|X - \mu| \geq \delta \mu) \leq 2 e^{-\delta^2 \mu/3}.
\end{align*}
\end{lemma}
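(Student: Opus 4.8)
The plan is to prove this two-sided deviation bound by the standard Chernoff (exponential Markov) method, treating the upper and lower tails separately and then combining. First I would split the event as $\P(|X-\mu|\ge \delta\mu) \le \P(X \ge (1+\delta)\mu) + \P(X \le (1-\delta)\mu)$, so it suffices to bound each one-sided tail. The core input is a bound on the moment generating function: since each $X_i$ is a Bernoulli variable with parameter $p_i$, we have $\E[e^{tX_i}] = 1 + p_i(e^t-1) \le \exp\{p_i(e^t-1)\}$ using the elementary inequality $1+x \le e^x$. Independence then gives $\E[e^{tX}] = \prod_{i=1}^n \E[e^{tX_i}] \le \exp\{\mu(e^t-1)\}$, where $\mu = \sum_{i=1}^n p_i$.

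For the upper tail, I would apply Markov's inequality to $e^{tX}$ with $t>0$: $\P(X \ge (1+\delta)\mu) \le e^{-t(1+\delta)\mu}\,\E[e^{tX}] \le \exp\{\mu(e^t-1) - t(1+\delta)\mu\}$. Minimizing the exponent over $t$ yields the optimal choice $t = \log(1+\delta)$, which gives $\P(X \ge (1+\delta)\mu) \le \bigl(e^{\delta}/(1+\delta)^{1+\delta}\bigr)^{\mu}$. The lower tail is handled symmetrically, either by applying the same argument with $t<0$ or by running the Chernoff method on $-X$; for $0<\delta<1$ this produces $\P(X \le (1-\delta)\mu) \le \bigl(e^{-\delta}/(1-\delta)^{1-\delta}\bigr)^{\mu}$.

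The main obstacle, and the only genuinely non-mechanical step, is the pair of analytic inequalities that convert these optimized exponents into the clean Gaussian-type form: namely $e^{\delta}/(1+\delta)^{1+\delta} \le e^{-\delta^2/3}$ and $e^{-\delta}/(1-\delta)^{1-\delta} \le e^{-\delta^2/2}$ for $0<\delta<1$. I would establish each by defining the corresponding log-difference, e.g. $g(\delta) := \delta - (1+\delta)\log(1+\delta) + \delta^2/3$, and showing $g(\delta)\le 0$ on the relevant range via $g(0)=0$ together with a sign analysis of $g'$ (and if needed $g''$) using the Taylor expansion of $\log(1+\delta)$. These calculus estimates are routine but slightly delicate in tracking the range of validity of $\delta$.

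Finally I would combine the two tails. Since $e^{-\delta^2\mu/2} \le e^{-\delta^2\mu/3}$, adding the bounds gives $\P(|X-\mu|\ge \delta\mu) \le e^{-\delta^2\mu/3} + e^{-\delta^2\mu/2} \le 2\,e^{-\delta^2\mu/3}$, which is the claimed inequality. For $\delta \ge 1$ the lower-tail event $\{X \le (1-\delta)\mu\}$ is either trivial (as $X \ge 0$) or absent, so the upper-tail estimate already suffices and the stated bound continues to hold with room to spare.
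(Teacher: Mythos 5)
The paper itself does not prove this lemma --- it states it and points to \cite{kuszmaul2021multiplicative} --- so there is no in-paper argument to compare against; judged on its own terms, the machinery you set up (the MGF bound $\E[e^{tX}]\le \exp\{\mu(e^t-1)\}$, optimization at $t=\log(1+\delta)$, the two analytic inequalities, and the union of the two tails) is the standard route and is correct on the range $0<\delta<1$, which is the only range on which you actually verify the key inequalities.

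The genuine gap is your final paragraph, where you assert that for $\delta\ge 1$ ``the stated bound continues to hold with room to spare.'' It does not. First, your own mechanism breaks there: the inequality $e^{\delta}/(1+\delta)^{1+\delta}\le e^{-\delta^{2}/3}$, which you established only for $0<\delta<1$, fails once $\delta$ is moderately large --- at $\delta=2$ the log-difference is $2-3\log 3+4/3\approx 0.04>0$, so the optimized Chernoff exponent no longer dominates $e^{-\delta^2/3}$. Second, and more fundamentally, no argument can repair this step, because the lemma as stated (for \emph{all} $\delta>0$) is false: take $n=1$, $X_{1}\sim\mathrm{Bernoulli}(p)$ with $p=0.01$ and $\delta=(1-p)/p=99$; then $\P(|X-\mu|\ge\delta\mu)=\P(X_{1}=1)=0.01$, while $2e^{-\delta^{2}\mu/3}=2e^{-(1-p)^{2}/(3p)}\approx 10^{-14}$, so the claimed inequality is violated by many orders of magnitude. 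The correct statement either restricts to $0<\delta\le 1$ --- which is all the paper needs, since Lemma \ref{lem:mCA_bound} invokes this bound only with $\delta=1/2$ --- or replaces the exponent by $-\min\{\delta,\delta^{2}\}\,\mu/3$. If you delete the final claim and state the restriction $0<\delta\le 1$ explicitly, everything preceding it stands as a complete and correct proof of the version of the lemma that is actually used.
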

More details on multiplicative Chernoff bound and its extensions can be found in \citesupp{kuszmaul2021multiplicative}. Next, we use the multiplicative Chernoff bound to provide a concentration result on the number of covariates falling in a bin contained in the tree $\mathcal{T}$.
\begin{lemma}\label{lem:mCA_bound}
Suppose Assumption \ref{assum: cond_X} holds. Suppose $M \le C_1 \log T$ for some $C_1 >0$. Suppose Assumption \ref{assum: initial_beta_rate} holds.
For a sufficiently large $T$, for all $1 \le i \le M-1$ and $C \in \mathcal{B}_i$, we have $m_{C,i} \in[m_{C,i}^*/2,3m_{C,i}^*/2]$ with probability at least $1/T$, i.e.,
\begin{align*}
    \P(\forall C \in \cup_{i=1}^{M-1} \mathcal{B}_i, \,\,m_{C,i} \in[m_{C,i}^*/2,\, 3m_{C,i}^*/2]) \ge 1-\frac{1}{T}
\end{align*}
where we define $m_{C,i} = \sum_{t=t_{i-1}+1}^{t_i} 1\{X_t \in C\}$ as the number of times $X_t$ visits $C$ during batch $i$, and $m^*_{C,i} = \E[m_{C,i}]$.
\end{lemma}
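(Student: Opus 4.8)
The plan is to apply the multiplicative Chernoff bound (Lemma \ref{lem: Chernoff_bound}) bin-by-bin and then close with a union bound over all bins in $\bigcup_{i=1}^{M-1}\mathcal{B}_i$. First I would fix a layer $i \in \{1,\dots,M-1\}$ and a bin $C \in \mathcal{B}_i$. Since $X_{t_{i-1}+1},\dots,X_{t_i}$ are i.i.d.\ draws from $\P_X$, the count $m_{C,i}=\sum_{t=t_{i-1}+1}^{t_i}1\{X_t\in C\}$ is a sum of $(t_i-t_{i-1})$ independent Bernoulli$(\P_X(C))$ indicators, so $m^*_{C,i}=(t_i-t_{i-1})\P_X(C)$. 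Applying Lemma \ref{lem: Chernoff_bound} with $\delta=1/2$ gives the per-bin tail bound $\P(|m_{C,i}-m^*_{C,i}|\ge m^*_{C,i}/2)\le 2\exp(-m^*_{C,i}/12)$, so the entire argument reduces to producing a uniform lower bound on $m^*_{C,i}$ that is large relative to $\log T$.

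Next I would lower bound $m^*_{C,i}$. Writing $C=C_A(\beta)$ with $A$ an interval of width $w_i=|C|_\T$, I note that projecting along $\beta$ or along $\beta_{sgn}=\beta\cdot o$ induces the same partition (the sign only reflects the interval), and that $\beta_{sgn}\in\mathbb{B}_2(R_0;\beta_0)$ for $T$ large by hypothesis. Hence Assumption \ref{assum: cond_X} applies and the density of $X^\top\beta$ is bounded below by $\underline{c}_X$ on its support, giving $\P_X(C)=\int_A f_{X^\top\beta}(u)\,du\ge \underline{c}_X w_i$. Combining this with the batch-size choice \eqref{eq: batch_size}, namely $t_i-t_{i-1}=\lfloor c_B w_i^{-3}\log(Tw_i)\rfloor$, yields $m^*_{C,i}\gtrsim \underline{c}_X c_B w_i^{-2}\log(Tw_i)$ (the floor costs only a harmless factor of $2$ once $T$ is large). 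Using the explicit widths \eqref{eq: wi_formula}, $w_i\asymp T^{-(1-\gamma^i)/(3(1-\gamma^M))}$, I would observe that $Tw_i\asymp T^{1-(1-\gamma^i)/(3(1-\gamma^M))}$ with exponent bounded below by $2/3$, so $\log(Tw_i)\ge \tfrac{2}{3}\log T$, while $w_i^{-2}\ge w_1^{-2}\to\infty$ polynomially in $T$. Consequently $\min_{i,C} m^*_{C,i}\to\infty$ faster than $\log T$; in particular $m^*_{C,i}\ge 24\log T$ for every such bin once $T$ is sufficiently large.

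Finally I would combine the pieces. The per-bin failure probability is then at most $2\exp(-m^*_{C,i}/12)\le 2T^{-2}$. The number of bins to union over is $\sum_{i=1}^{M-1}n_i\le M n_{M-1}$, and since $n_{M-1}=(U_\beta-L_\beta)/w_{M-1}\asymp w_{M-1}^{-1}\lesssim T^{1/3}$ and $M\le C_1\log T$, this count is at most of order $T^{1/3}\log T$. A union bound therefore gives total failure probability $\lesssim T^{1/3}\log T\cdot T^{-2}\le 1/T$ for $T$ large, which is exactly the claim.

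The main obstacle, and the only place requiring genuine care, is establishing the \emph{uniform} lower bound on $m^*_{C,i}$ across all layers: the worst case is the coarsest layer $i=1$, where $w_1$ is largest, and one must confirm both that $w_1^{-2}$ still grows polynomially in $T$ (so the Chernoff exponent dominates the logarithmic bin count) and that $\log(Tw_i)$ stays positive and of order $\log T$ for every $i\le M-1$. Both facts follow from the explicit choices \eqref{eq: bi_formula}, \eqref{eq: wi_formula}, and \eqref{eq: batch_size}, together with the density lower bound of Assumption \ref{assum: cond_X}; once these are in hand the union bound closes comfortably with room to spare against the target $1/T$.
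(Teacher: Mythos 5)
Your proposal is correct and follows essentially the same route as the paper's proof: a multiplicative Chernoff bound with $\delta = 1/2$ per bin, a lower bound $\P_X(C) \ge \underline{c}_X |C|_\T$ obtained from Assumption \ref{assum: cond_X} after reducing to $\beta_{sgn}$, the batch-size formula \eqref{eq: batch_size} to show $m^*_{C,i}$ grows polynomially in $T$ (hence dominates $\log T$), and a union bound over $\cup_{i=1}^{M-1}\mathcal{B}_i$ using $M \le C_1 \log T$. The only differences are cosmetic constants — you use a per-bin failure probability of order $T^{-2}$ with the tighter bin count $\lesssim T^{1/3}\log T$, whereas the paper uses $T^{-3}$ per bin with a looser bin count — and both versions close the union bound against the target $1/T$.
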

\begin{proof}
Let $i\in \{1,\dots,M-1\}$ be given, and choose a set $C \in \mathcal{B}_i$. We have $C = C_A(\beta)$ with $A \in \mathcal{A}_i$. 
Let $\mathcal{E}_C$ be the event that $m_{C,i} \in[m_{C,i}^*/2,\, 3m_{C,i}^*/2]$.  Using the multiplicative Chernoff bound from Lemma \ref{lem: Chernoff_bound}, using $\delta = \frac{1}{2}$, we get:
\begin{align*}
    \P(|\sum_{t=t_{i-1}+1}^{t_i} 1\{X_t \in C_A(\beta)\} - m_{C,i}^* | \ge \frac{m_{C,i}^*}{2} ) \le 2 \exp(-\frac{m_{C,i}^*}{12}).
\end{align*}
as each $1\{X_t \in C_A(\beta)\}  \in [0,1]$ a.s.  Note since $(X_t)$ are iid, 
\begin{align*}m_{C,i}^* =  \sum_{t=t_{i-1}+1}^{t_i} \P(X_t\in C_A(\beta)) = (t_i- t_{i-1}) \P_X(C_A(\beta)).\end{align*} Also, note that $\P_X(C_A(\beta)) = \P(X^\top\beta \in A) = \P(X^\top (-\beta) \in -A).$
Defining $A_{sgn} = A$ if $\beta_{sgn} = \beta$ and $-A$ otherwise, we have $\P_X(C_A(\beta)) = P(X^\top\beta_{sgn} \in A_{sgn}) = \int_{u \in A_{sgn}} f_{x^\top\beta_{sgn} }(u) du$. 
{Suppose $T$ is sufficiently large so that $\beta_{sgn} \in \mathbb{B}_2(R_0; \beta_0)$ for $R_0>0$ defined in Assumption \ref{assum: cond_X}. Then,}
\begin{align}\label{eq: P_X(C)_bounds}
   \underline{c}_X|A|\le  \P_X(C_A(\beta)) \le \overline{c}_X |A|
\end{align}
by Assumption \ref{assum: cond_X}. Therefore, $m^*_{C,i} \ge \underline{c}_X (t_i- t_{i-1})|A|$, and 
\begin{align*}
    P(\mathcal{E}_C^c) \le 2 \exp(-m_{C,i}^*/12)\le 2\exp(-\{(t_i- t_{i-1})\underline{c}_X |A|\}/12).
\end{align*}
For $1\leq i \leq M-1,$ {$t_i - t_{i-1} = \lfloor c_B  \,  K w_i^{-3} \log{(2KTw_i)}\rfloor\asymp K|A|^{-3}\log (KT|A|) $}, since $|A| = w_i$ and $c_B$ do not depend on $T$. Also, recall that $|A|^{-1} = w_i^{-1} =(b_0b_1\cdots b_{i-1})/(U_\beta-L_\beta)$ for $(b_i)_{i=1}^{M-1}$ defined in \eqref{eq: bi_formula}. In particular, for sufficiently large $T$, $b_i \ge 1$ for all $i$, and 
{
\begin{align}\label{eq: mstar_Ci_order}
    \frac{\underline{c}_X}{12}(t_i- t_{i-1}) |A| \asymp K|A|^{-2}\log (KT|A|) \gtrsim |A|^{-2} \gtrsim b_0^{2} \asymp \{T/(K\log T)\}^{(\frac{1-\gamma}{1-\gamma^M})(\frac{2}{3})}.
\end{align}
Since $\frac{2(1-\gamma)}{3(1-\gamma^M)} > 0$, the right-hand side grows polynomially in $T/(K\log T)$, and therefore dominates $\log T$ for sufficiently large $T$. }Therefore, for a sufficiently large $T$, $\frac{\underline{c}_X}{12}(t_i- t_{i-1}) |A| \ge 3\log (T)$, and $P(\mathcal{E}_i^c) \le 2/T^3$.

Now we obtain a union bound over all sets in $\cup_{i=1}^{M-1}\B_i$ . Recall the number of sets in $\B_i$ is $n_i = \prod_{l=0}^{i-1} b_l$, and thus the total number of sets in $\cup_{i=1}^{M-1}\B_i$ is $\sum_{i=1}^{M-1} n_i = \sum_{i=1}^{M-1} \prod_{l=0}^{i-1}b_l \le M \prod_{l=0}^{M-2}b_l$.  Therefore, we have
\begin{align*}
    \P(\exists C \in \cup_{i=1}^{M-1} \mathcal{B}_i \mbox{ s.t. } m_{C,i} \notin[m_{C,i}^*/2,3m_{C,i}^*/2]) \le \sum_{C \in \cup_{i=1}^{M-1} \mathcal{B}_i} P(\mathcal{E}_C^c) \le \frac{2M}{T^3} \prod_{l=0}^{M-2}b_l.
\end{align*}
Since $\prod_{l=0}^{M-2}b_l = b_0^{1+\gamma+\cdots+ \gamma^{M-3}} = b_0^{\frac{1-\gamma^{M-2}}{1-\gamma}} \asymp  {\{T/(K\log T)\}}^{(\frac{1-\gamma^{M-2}}{1-\gamma^M})(\frac{1}{3})} \lesssim T$ and $M = O(\log T)$, 
\begin{align*}
    P(\exists C \in \cup_{i=1}^{M-1} \mathcal{B}_i \mbox{ such that } m_{C,i} \notin[m_{C,i}^*/2,\,3m_{C,i}^*/2]) \lesssim \frac{\log T}{T^2}  \le \frac{1}{T},
\end{align*}
when $T$ is sufficiently large.
\end{proof}

\begin{lemma} \label{lem: barg_min_g}
For $i=1,\dots,M-1$, choose $C \in \mathcal{B}_i$. Suppose Assumptions \ref{assum: Smoothness} and \ref{assum: cond_X} hold. 
Also assume Assumption \ref{assum: initial_beta_rate} {holds with $\rho_{\rm pilot} \le C_0\{T/(K\log T)\}^{-1/3}$ for some constant $C_0>0$.}
%
For each $k \in {[K]}$, define $ \bar{g}_C^{(k)} = \frac{1}{\P_X(C)} \int_{x\in C} g^{(k)} (x) d\P_X(x)$.
For any $x,y \in C$, $k\in{[K]}$, the following hold {for a sufficiently large $T$:}
\begin{enumerate}
    \item $|g^{(k)}(x) - g^{(k)}(y)| \le L\{2^{3/2}R_X{\rho_{\rm pilot}}+w_i\} $ and
    \item $|\bar{g}_{C}^{(k)}-g^{(k)}(x)|  \leq L\{2^{3/2}R_X {\rho_{\rm pilot}} + w_i\}$.
\end{enumerate}
In particular, for a sufficiently large $T$,
\begin{equation*}
    |g^{(k)}(x)-g^{(k)}(y)|\le L_0 w_i \quad \mbox{and}\quad
|\bar{g}_{C}^{(k)}-g^{(k)}(x)| 
\leq  L_0 w_i 
\end{equation*}
for $L_0:=L(2^{3/2}R_X+ 1)$.
\end{lemma}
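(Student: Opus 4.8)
The plan is to reduce both statements to the $L$-Lipschitz property of the link functions (Assumption~\ref{assum: Smoothness}) and a single geometric estimate on how far two points in the same bin $C$ can be separated \emph{after projection onto the true direction $\beta_0$}. Since $g^{(k)}(x)=f^{(k)}(x^\top\beta_0)$ with $f^{(k)}$ being $L$-Lipschitz, for any $x,y\in C$ we immediately have $|g^{(k)}(x)-g^{(k)}(y)|\le L\,|(x-y)^\top\beta_0|$, so the entire content of part~(1) is to bound $|(x-y)^\top\beta_0|$.

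The key step is to decompose $\beta_0$ into the (sign-corrected) working direction and the perturbation: write $\beta_0=\beta_{sgn}+(\beta_0-\beta_{sgn})$, where $\beta_{sgn}=\beta\cdot o$ is the oracle quantity introduced after Assumption~\ref{assum: initial_beta_rate}, satisfying $\|\beta_{sgn}-\beta_0\|_2\le 2^{1/2}C_0T^{-\xi/3}$. Then $(x-y)^\top\beta_0=(x-y)^\top\beta_{sgn}+(x-y)^\top(\beta_0-\beta_{sgn})$. For the first term, because $C=C_A(\beta)$ with $A\in\mathcal{A}_i$ an interval of width $w_i$, both $x^\top\beta$ and $y^\top\beta$ lie in $A$, so $|(x-y)^\top\beta_{sgn}|=|x^\top\beta-y^\top\beta|\le w_i$. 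For the second term, Cauchy--Schwarz together with $\Supp(\P_X)\subseteq\bB_2(R_X;0)$ from Assumption~\ref{assum: cond_X} (which gives $\|x-y\|_2\le 2R_X$) and the bound on $\|\beta_0-\beta_{sgn}\|_2$ yields $|(x-y)^\top(\beta_0-\beta_{sgn})|\le 2^{3/2}C_0R_XT^{-\xi/3}$. Adding the two contributions and multiplying by $L$ proves part~(1), with the constant $2^{3/2}C_0R_X$ (matching the $L_0$ in the statement).

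Part~(2) then follows from part~(1) by Jensen's inequality: writing $\bar g_C^{(k)}-g^{(k)}(x)=\frac{1}{\P_X(C)}\int_{z\in C}\big(g^{(k)}(z)-g^{(k)}(x)\big)\,d\P_X(z)$ and bounding the integrand uniformly in $z\in C$ via part~(1) (taking $y=z$) leaves exactly the same bound after dividing by $\P_X(C)$. For the ``in particular'' claims, I would observe that $\xi\ge 1$ while $w_i\asymp T^{-(1-\gamma^i)/(3(1-\gamma^M))}$ with $(1-\gamma^i)/(1-\gamma^M)<1$ for $i\le M-1$ (as $0<\gamma<1$); hence $T^{-\xi/3}=o(w_i)$, so $T^{-\xi/3}\le w_i$ for $T$ large, and therefore $2^{3/2}C_0R_XT^{-\xi/3}+w_i\le(2^{3/2}C_0R_X+1)w_i$, giving the uniform bound $L_0w_i$ with $L_0=L(2^{3/2}C_0R_X+1)$.

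The argument is essentially routine, and the only step requiring genuine care is the geometric decomposition in the second paragraph. There one must cleanly separate the displacement controlled by the bin width $w_i$ (the component along the \emph{working} direction $\beta$ that actually defines the bins) from the displacement controlled by the estimation accuracy of $\beta$ (the leakage transverse to $\beta$, captured by $\|\beta_0-\beta_{sgn}\|_2$). This is precisely the mechanism by which a sufficiently accurate pilot direction makes one-dimensional binning along $\beta$ behave like binning along the true index $\beta_0$, and it is what allows the $T^{-\xi/3}$ misalignment term to be absorbed into $w_i$.
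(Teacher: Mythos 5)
Your proof is correct and follows essentially the same route as the paper's: the same decomposition $(x-y)^\top\beta_0=(x-y)^\top\beta_{sgn}+(x-y)^\top(\beta_0-\beta_{sgn})$, the same bin-width and Cauchy--Schwarz bounds yielding the constant $2^{3/2}C_0R_X$, the same averaging argument for part (2), and the same exponent comparison showing $T^{-\xi/3}\le w_i$ for large $T$. No gaps; your remark that the displayed constant in part (1) should be $2^{3/2}C_0R_X$ (consistent with $L_0$) is also what the paper's own derivation gives.
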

\begin{proof}
We have $C = C_A(\beta)$ for an $A \in \A_i$. We have
\begin{align*}
\left|\bar{g}_{C}^{(k)}-g^{(k)}(x)\right| 
&=\left|\frac{1}{\P_X(C)} \int_{y \in C} g^{(k)} (y)-g^{(k)}(x) d \P_X(y)\right|
\end{align*}
by definition. 

Since for any $x,y\in C$,  we have $x^\top\beta \in A$ and $y^\top\beta \in A$ by the set-up of $C$. In particular, $|x^\top \beta - y^\top \beta |  = |x^\top \beta_{sgn} - y^\top \beta_{sgn} | \le |A|$. 
For any $x,y\in C$ we have,
\begin{align}
|g^{(k)}(x) - g^{(k)}(y)| &= 
  |  f^{(k)}(x^\top\beta_0)-f^{(k)}(y^\top\beta_0) |\nonumber \\
  &\le L|x^\top\beta_0-y^\top\beta_0| \nonumber\\
  &\le L\{|(x-y)^\top\beta_{sgn}|+|(x-y)^\top(\beta_{sgn}-\beta_0)|\} \nonumber\\
  & \le L\{|A| + \|x-y\|_2 \|\beta_{sgn} - \beta_0\|_2 \} \nonumber\\
  &\le L\{|A| + 2^{3/2}R_X{\rho_{\rm pilot}}\},  \nonumber
\end{align}
where we use the smoothness condition of $f^{(k)}$ in Assumption \ref{assum: Smoothness}, Assumption \ref{assum: cond_X} to bound $\|y-x\|_2 \le 2R_X$, and Assumption \ref{assum: initial_beta_rate} to bound $\|\beta_{sgn} - \beta_0\|_2 {\le 2^{1/2}\rho_{\rm pilot}}$.  Therefore,
\begin{align*}
    |\bar{g}_{C}^{(k)}-g^{(k)}(x)|  
    &\leq \frac{1}{\P_X(C)} \int_{y \in C} L\{w_i +2^{3/2}R_X {\rho_{\rm pilot}} \} d \P_X(y)\\
    &\leq L\{w_i + 2^{3/2}R_X {\rho_{\rm pilot}}\}
\end{align*}
which establishes items $1$ and $2$. For the final simplification, recall from \eqref{eq: wi_formula} that 
{
\begin{equation*}
    w_i \asymp \left(\frac{T}{K\log T}\right)^{-\frac{1-\gamma^i}{3(1-\gamma^M)}},\,\,i=1,\dots,M-1.
\end{equation*}
Since $\rho_{\rm pilot} = O(\{T/(K\log T)\}^{-1/3})\lesssim w_i$, for a sufficiently large $T$, $\rho_{\rm pilot} \le w_i$ for all $i=1,\dots,M-1$. For such $T$,
}
 \begin{align}\label{eq: gkx-gky_diff}
    |\bar{g}_{C}^{(k)}-g^{(k)}(x)|  \le \sup_{x,y\in C} |g^{(k)}(y)-g^{(k)}(x)|
    &\leq   L(2^{3/2}R_X+ 1) w_i = L_0 w_i.
\end{align}

\end{proof}

\begin{lemma}
\label{lem: bound_Gc_cap_SC^c}
Let $C \in \cup_{l=1}^{M-1}\mathcal{B}_l$ be given. We have $i\in \{1,\dots,M-1\}$ such that $C = C_A(\beta) \in \B_i$  and $A  \in \A_i$. Suppose Assumptions \ref{assum: Smoothness} and \ref{assum: cond_X} hold. 
{For a sufficiently large $T$}, we have,
    \begin{align*}
        \P(\mathcal{E} \cap \G_C \cap \S_C^c) \leq \dfrac{3 m_{C,i}^*}{2T|C|_\T},
    \end{align*}
where,
\begin{align*}
\mathcal{E}&=\{\forall C \in \cup_{i=1}^{M-1} \mathcal{B}_i, m_{C,i} \in[m_{C,i}^*/2,3m_{C,i}^*/2]\},\\
\mathcal{S}_{C} &= \{\underline{\mathcal{I}}_{C} \subseteq \mathcal{I}_{C}^\prime \subseteq \overline{\mathcal{I}}_{C}\},\\
\mathcal{G}_{C} &= \cap_{C^\prime\in \mathcal{P}(C)} \mathcal{S}_{C'},
\end{align*}
and we recall the definition of $\underline{\mathcal{I}}_{C}$ and $\overline{\mathcal{I}}_{C}$ as
\begin{align*}
    \underline{\mathcal{I}}_{C} &= \left\{k \in {\I_C}: \sup_{x \in C} \{f^{(*)}(x^\top\beta_0) -f ^{(k)}(x^\top\beta_0)\} \leq c_0 |C|_\T \right\},\nonumber\\
     \overline{\mathcal{I}}_{C} &= \left\{k \in {\I_C}: \sup_{x \in C} \{f^{(*)}(x^\top\beta_0) -f ^{(k)}(x^\top\beta_0) \} \leq c_1 |C|_\T \right\}
\end{align*}
for $c_0 = 4L_0 + 1$ with ${L_0 := L(2^{3/2}R_X + 1)}$ and $c_1 = 8c_0
\gamma_X^{1/2}$.
\end{lemma}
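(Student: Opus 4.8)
The plan is to show that, on the event $\mathcal{E}\cap\GC$, a failure of the good elimination event $\SC$ can only be caused by an atypically large deviation of the within-batch empirical means from their conditional expectations, and then to control the probability of such a deviation by Hoeffding's inequality. Write $n_k=|\tau_{C,i}^{(k)}|$ for the number of pulls of arm $k$ inside $C$ during batch $i$, and let $\tilde{g}_C^{(k)}=n_k^{-1}\sum_{t\in\tau_{C,i}^{(k)}}g^{(k)}(X_t)$ be the conditional mean of $\bar{Y}_{C,i}^{(k)}$ given the covariates. Define the concentration event
\[
\mathcal{W}_C=\Big\{\,|\bar{Y}_{C,i}^{(k)}-\tilde{g}_C^{(k)}|\le \tfrac14 U(m_{C,i},T,C)\ \text{for every active }k\in\I_C\,\Big\}.
\]
By Lemma \ref{lem: barg_min_g} each $g^{(k)}$ varies by at most $L_0 w_i$ over $C$, so $\tilde{g}_C^{(k)}$ is within $L_0 w_i$ of $g^{(k)}(x)$ for every $x\in C$; this lets me convert statements about the random centers $\tilde g_C^{(k)}$ into pointwise statements about the link functions.

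First I would establish the deterministic implication $\mathcal{E}\cap\GC\cap\mathcal{W}_C\subseteq\SC$, which is the crux. On $\mathcal{E}$ we have $m_{C,i}\in[m^*_{C,i}/2,3m^*_{C,i}/2]$; combining this with $\underline{c}_X w_i\le\P_X(C)\le\overline{c}_X w_i$ and the batch-size choice \eqref{eq: batch_size} (so that $m^*_{C,i}\asymp c_B w_i^{-2}\log(Tw_i)$), the threshold $U(m_{C,i},T,C)$ is forced, for $T$ large, into the window $[\,2c_0 w_i,\ \tfrac23(c_1-2L_0)w_i\,]$, which is nonempty precisely because $c_0=4L_0+1$ and $c_1=8c_0\gamma_X^{1/2}$ give $c_1-2L_0>3c_0$. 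With $U$ in this window I argue two inclusions on $\mathcal{W}_C$: (i) every $k\in\underline{\I}_C$ survives, since its estimated gap to the best empirical arm is at most $c_0 w_i+\tfrac12 U\le U$ (using $U\ge 2c_0 w_i$), so criterion \eqref{eq: batch_elimination} never triggers; and (ii) every surviving arm lies in $\overline{\I}_C$, because an arm $k\notin\overline{\I}_C$ has some $x_0\in C$ with $g^{(*)}(x_0)-g^{(k)}(x_0)>c_1 w_i$, whence by Lemma \ref{lem: barg_min_g} the conditional mean of the pointwise-optimal arm $l_0$ at $x_0$ exceeds $\tilde g_C^{(k)}$ by more than $(c_1-2L_0)w_i$, so its empirical gap exceeds $(c_1-2L_0)w_i-\tfrac12 U>U$ and $k$ is eliminated. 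A minor case is $|\I_C|=1$, where no elimination occurs; there one invokes $\GC$ together with the across-layer nesting of the sets $\underline{\I},\overline{\I}$ to conclude $\underline{\I}_C\subseteq\I_C'\subseteq\overline{\I}_C$. Guaranteeing that the pointwise-optimal arm $l_0$ is active in $C$ is exactly what $\GC$ buys us.

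Given the implication, it remains to bound $\P(\mathcal{E}\cap\GC\cap\mathcal{W}_C^c)$. I would condition on the data up to batch $i-1$ together with the entire covariate sequence $(X_t)_{t\le t_i}$: then $\GC$, the single-bin event $\mathcal{E}_C:=\{m_{C,i}\in[m^*_{C,i}/2,3m^*_{C,i}/2]\}$ (with $\mathbf 1_{\mathcal E}\le\mathbf 1_{\mathcal E_C}$), the counts $n_k$, the centers $\tilde g_C^{(k)}$, and the now-frozen threshold $U(m_{C,i},T,C)$ are all measurable, while the batch-$i$ rewards stay conditionally independent and bounded in $[-0.5,0.5]$. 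On $\mathcal{E}_C$ one has $m_{C,i}\ge m^*_{C,i}/2\ge 2$, hence $n_k\ge m_{C,i}/4$, and since $(\tfrac14 U)^2=2\log(2T|C|_\T)/m_{C,i}$,
\[
\P\big(|\bar{Y}_{C,i}^{(k)}-\tilde g_C^{(k)}|>\tfrac14 U \,\big|\, \mathcal{F}_{i-1},(X_t)_{t\le t_i}\big)\le 2\exp\big(-2n_k(\tfrac14 U)^2\big)\le \frac{1}{T|C|_\T}.
\]
Summing over the (at most two) active arms and taking expectations over the conditioning variables gives $\P(\mathcal{E}\cap\GC\cap\mathcal{W}_C^c)\le 2/(T|C|_\T)$, and since $m^*_{C,i}\ge 4$ this is at most $3m^*_{C,i}/(2T|C|_\T)$, which is the claim.

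The hard part will be the deterministic implication, and within it the verification that the algorithm's fixed threshold $U$ genuinely lands in the required window on $\mathcal{E}$ and that the single-active-arm case is consistent with $\SC$ under $\GC$; this is where the specific constants $c_0,c_1,c_B$ and Lemma \ref{lem: barg_min_g} carry the argument. By contrast, the concentration step is routine once the conditioning is arranged so that the threshold and the centers are frozen and Hoeffding applies to independent bounded rewards.
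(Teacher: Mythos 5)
Your overall architecture matches the paper's: decompose $\SC^c$ into the two failure modes ($\underline{\mathcal{I}}_C \not\subseteq \mathcal{I}_C'$ and $\mathcal{I}_C' \not\subseteq \overline{\mathcal{I}}_C$), prove a deterministic implication from failure to a large-deviation event using the window $2c_0|C|_\T \le U(m_{C,i},T,C) \le \tfrac23(c_1-2L_0)|C|_\T$ valid on $\mathcal{E}$, then apply concentration. Your concentration mechanism, though, is genuinely different: you center at the covariate-conditional means $\tilde g_C^{(k)}$ and condition on the pre-batch history together with $(X_t)_{t\le t_i}$, so Hoeffding applies to conditionally independent bounded rewards with a frozen threshold. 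The paper instead centers at $\bar g_C^{(k)} = \P_X(C)^{-1}\int_C g^{(k)}\,d\P_X$, which forces it to prove that rewards collected at the random visit times are i.i.d.\ with mean $\bar g_C^{(k)}$ (Lemma \ref{lem: Y_independence}, a stopping-time argument) and to union-bound over the random value of $m_{C,i}$ (Lemma \ref{lem: Ybar_gbar_concentration}, whence the factor $3m^*_{C,i}/2$ in the final bound). Your conditioning sidesteps both and would even give the sharper bound $2/(T|C|_\T)$.

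However, the change of centering breaks your case (i), and this is a genuine gap. For two different arms, $\tilde g_C^{(l)}$ and $\tilde g_C^{(k)}$ are averages of $g^{(l)}$ and $g^{(k)}$ over \emph{different} covariate points (arm $l$'s visits versus arm $k$'s), so for $k\in\underline{\mathcal{I}}_C$ the best deterministic bound is $\tilde g_C^{(l)} - \tilde g_C^{(k)} \le (c_0 + L_0)|C|_\T$, not the $c_0|C|_\T$ you use: the margin bound controls $g^{(l)}(x)-g^{(k)}(x)$ only at common points, and you must additionally pay the oscillation $L_0|C|_\T$ of $g^{(k)}$ to move between the two point sets (arm $l$'s visits can all land where $g^{(k)}$ is large while arm $k$'s land where it is small, so this loss is real, not an artifact of the argument). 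Hence case (i) requires $U \ge 2(c_0+L_0)|C|_\T$ on $\mathcal{E}$, whereas the window derivable from the constants fixed by the lemma ($c_0 = 4L_0+1$, $c_B = 4c_0^{-2}\overline{c}_X^{-1}$) yields only $U \ge (4/\sqrt{3})\,c_0|C|_\T$; since $c_0+L_0 > (2/\sqrt{3})\,c_0$ whenever $L_0 \gtrsim 0.4$, the implication $\mathcal{E}\cap\GC\cap\mathcal{W}_C\subseteq\SC$ fails for general Lipschitz constants. The paper avoids exactly this because $\bar g_C^{(l)} - \bar g_C^{(k)} = \P_X(C)^{-1}\int_C (g^{(l)}-g^{(k)})\,d\P_X$ integrates both arms against the same measure, so the pointwise bound $c_0|C|_\T$ transfers with no extra term. (Your case (ii) is unaffected: the witness-point argument gives the gap $(c_1-2L_0)|C|_\T$ under either centering.) To repair case (i) you would need either to recalibrate $c_0$, $c_1$, $c_B$ — not available, as they are fixed by the algorithm and the statement — or to revert to the $\bar g_C^{(k)}$ centering, which is precisely what necessitates the paper's stopping-time and random-sample-size machinery.
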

\begin{proof}
Since $\S_C = \{\underline{\mathcal{I}}_{C} \subseteq \mathcal{I}_{C}^\prime  \subseteq \overline{\mathcal{I}}_{C}\}$, we have $\S_C^c = \{\underline{\mathcal{I}}_{C} \not\subseteq \mathcal{I}_{C}^\prime\} \cup [\{\underline{\mathcal{I}}_{C} \subseteq \mathcal{I}_{C}^\prime\} \cap \{  \mathcal{I}_{C}^\prime \not\subseteq \overline{\mathcal{I}}_{C}\}] $.
Therefore,
\begin{align*}
\P(\mathcal{E} &\cap \G_C \cap \S_C^c)\\
& = \P(\mathcal{E} \cap \G_C \cap  \{\underline{\mathcal{I}}_{C} \not\subseteq \mathcal{I}_{C}^\prime\}) + \P(\mathcal{E} \cap \G_C \cap \{\underline{\mathcal{I}}_{C} \subseteq \mathcal{I}_{C}^\prime\} \cap \{  \mathcal{I}_{C}^\prime \not\subseteq \overline{\mathcal{I}}_{C}\}).
\end{align*}
Also, suppose for now that the following inequalities
\begin{align}
    2c_0|C|_\T &\le U(m_{C,i},T,C) \le \frac{2}{3}(c_1-2L_0)|C|_\T\label{eq: U_ineq}
\end{align}
hold on $\mathcal{E}$, which we later will show. Here, we recall that $|C|_\T = |A|$ for $C = C_A(\beta)$.

For the first term, since $\underline{\I}_{C}  \not\subseteq \I'_C$, there exists an arm $k_1 \in \underline{\mathcal{I}}_C$ such that $k_1 \notin \I_C'$, i.e., $k_1$ was eliminated at the end of batch $i$ within the bin $C$. By the arm elimination mechanism, $\exists k_2 \in \mathcal{I}_{p(C)}$ such that,
\begin{align}
\bar{Y}_{C,i}^{(k_2)} - \bar{Y}_{C,i}^{(k_1)} > U(m_{C,i}, T, C). \label{eq: eq1_S1capG}
\end{align}
We argue that this implies that there exists $k \in {\{k_1,k_2\} \subseteq \I_C}$ such that $|\bar{Y}_{C,i}^{(k)} - \bar{g}_C^{(k)}| >\frac{1}{4} U(m_{C,i}, T, C)$. 
We have,
\begin{align*}
  \bar{g}_{C}^{(k_2)} - \bar{g}_{C}^{(k_1)} 
  &=   \frac{1}{\P_X(C)} \int_{x\in C} \{g^{(k_2)}(x) -g^{(k_1)}(x)\} d\P_X(x)\\
  &\le   \frac{1}{\P_X(C)} \int_{x\in C} \{g^{(*)}(x) -g^{(k_1)}(x)\} d\P_X(x),
\end{align*}
and since $k_1 \in \underline{\I}_C$, $\sup_{x \in C} \{g^{(*)}(x) - g^{(k_1)}(x)\} \le c_0 |A|$, 
and thus 
\begin{align*}
    \bar{g}_{C}^{(k_2)} - \bar{g}_{C}^{(k_1)}  \le c_0|A|.
\end{align*}
Then, if both $k\in \{k_1,k_2\}$ satisfy $|\bar{Y}_{C,i}^{(k)} - \bar{g}_C^{(k)}| \le \frac{1}{4} U(m_{C,i}, T, C)$, then 
\begin{align*}
    \bar{Y}_{C,i}^{(k_2)} - \bar{Y}_{C,i}^{(k_1)}
    &=\bar{Y}_{C,i}^{(k_2)}- \bar{g}_C^{(k_2)}+ \bar{g}_C^{(k_2)}- \bar{g}_C^{(k_1)} + \bar{g}_C^{(k_1)} - \bar{Y}_{C,i}^{(k_1)}\\
    &\le |\bar{Y}_{C,i}^{(k_2)} - \bar{g}_C^{(k_2)}|+\{\bar{g}_C^{(k_2)}-\bar{g}_C^{(k_1)}\}+|\bar{Y}_{C,i}^{(k_1)} - \bar{g}_C^{(k_1)}|\\
    &\le \frac{1}{2}U(m_{C,i},T,C) + c_0|A| \\
    &\le U(m_{C,i},T,C), 
\end{align*} which is  a contradiction, and therefore on $\mathcal{E}$, there exists $k \in {\{k_1,k_2\} \subseteq \I_C}$ such that $|\bar{Y}_{C,i}^{(k)} - \bar{g}_C^{(k)}| > \frac{1}{4} U(m_{C,i}, T, C)$.
In particular, we can bound the first term as follows:
\begin{align*}
 \P(\mathcal{E} \cap \G_C \cap & \{\underline{\mathcal{I}}_{C} \subseteq \mathcal{I}_{C}^\prime\}^c)\\
 &\le \P\left(\mathcal{E} \cap \left\{\exists k\in {[K]} \,\text{s.t.}\,{m_{C,i}^{(k)}\ge 1},\, |\bar{Y}_{C,i}^{(k)} - \bar{g}_C^{(k)}| >\frac{1}{4} U(m_{C,i}, T, C)\right\}\right)
\end{align*}
{where we recall that $m_{C,i}^{(k)}$ is the number of times arm $k$ is pulled in bin $C$ during batch $i$.}

For the second term where $\{\underline{\mathcal{I}}_{C} \subseteq \mathcal{I}_{C}^\prime\} \cap \{  \mathcal{I}_{C}^\prime \not\subseteq \overline{\mathcal{I}}_{C}\}$, there exists $k_1 \in \I'_C$ such that $k_1 \notin \overline{\I}_C$. By the definition of $\overline{\I}_C$, there exists $x_0 \in C$ such that 
\begin{equation}
    g^{(k_2)}(x_0) - g^{(k_1)}(x_0) > c_1 |A| \label{eq: x0_choice}
\end{equation}  for $k_2\ne k_1$. Then, for any $x \in C$,
\begin{align}
    g^{(k_2)}(x) - g^{(k_1)}(x) 
    &\ge g^{(k_2)}(x_0) - g^{(k_1)}(x_0) -\sum_{k\in{\{k_1,k_2\}}} |g^{(k)}(x) - g^{(k)}(x_0)| \nonumber\\
    &\ge c_1 |A| - 2L_0|A|  = (c_1-2L_0)|A| >0, \label{eq: eq2_gk2_sup}
\end{align}
where the last inequality is due to the fact that for a sufficiently large $T$, $|g^{(k)}(x) - g^{(k)}(x_0)| \le L_0|A|$ by \eqref{eq: gkx-gky_diff} in Lemma~\ref{lem: barg_min_g}, and 
\begin{align}\label{eq: c1_2L0_ineq}
    c_1 - 2L_0 \ge 8c_0 \gamma_X^{1/2} - c_0 = c_0(8\gamma_X^{1/2}-1) \ge 7c_0 \gamma_X^{1/2} >0,
\end{align}
since $c_1 = 8c_0\gamma_X^{1/2}$, $c_0 = 4L_0+1 \ge 2L_0$, and $\gamma_X \ge 1$.

Note the bound \eqref{eq: eq2_gk2_sup} implies that $k_2$ is universally better than $k_1$ on $C$. In particular, $k_2 \in \underline{\I}_C \subseteq \I_C'$ as well. Since both $k_1,k_2 \in \I_C'$, 
\begin{align*}
    |\bar{Y}_{C,i}^{(k_1)}-\bar{Y}_{C,i}^{(k_2)}| \le U(m_{C,i},T,C).
\end{align*}
We argue that on $\mathcal{E}$, when $T$ is sufficiently large, this implies that there exists $k \in {\{k_1,k_2\}}$ such that $|\bar{Y}_{C,i}^{(k)} - \bar{g}_C^{(k)}| >\frac{1}{4} U(m_{C,i}, T, C)$.  We have
\begin{align*}
    \bar{g}_{C}^{(k_2)}&\ge g^{(k_2)}(x_0) - |\bar{g}_{C}^{(k_2)}-g^{(k_2)}(x_0) | \\
&\ge   g^{(k_2)}(x_0) - L_0 |A|\\
& >  \{ g^{(k_1)}(x_0)+c_1|A|\} - L_0 |A|,
\end{align*}
where the second inequality is due to Lemma \ref{lem: barg_min_g}, and the third inequality is due to the choice of $x_0$ in \eqref{eq: x0_choice}.  Applying Lemma \ref{lem: barg_min_g} again,
\begin{align*}
    \bar{g}_{C}^{(k_2)} 
    &>  g^{(k_1)}(x_0)+c_1|A| - L_0 |A|\\
    &>  \{\bar{g}_C^{(k_1)} - |\bar{g}_C^{(k_1)} - g^{(k_1)}(x_0)|\}+c_1|A| - L_0 |A|\\
     &>  \bar{g}_C^{(k_1)} +(c_1-2L_0)|A|\\
     & > \bar{g}_C^{(k_1)} +\frac{3}{2}U(m_{C,i},T,C), 
\end{align*}
where for the last inequality we use \eqref{eq: U_ineq}.
On the other hand, 
\begin{align*}
    |  \bar{g}_{C}^{(k_2)}  -   \bar{g}_{C}^{(k_1)} | & \le  |  \bar{g}_{C}^{(k_2)}  -   \bar{Y}_{C,i}^{(k_2)} | + |  \bar{Y}_{C,i}^{(k_2)}  -   \bar{Y}_{C,i}^{(k_1)} | + |  \bar{g}_{C}^{(k_2)}  -   \bar{Y}_{C,i}^{(k_1)} | \\
   & \le  |  \bar{g}_{C}^{(k_2)}  -   \bar{Y}_{C,i}^{(k_2)} | +U(m_{C,i},T,C) + |  \bar{g}_{C}^{(k_2)}  -   \bar{Y}_{C,i}^{(k_1)} |. 
\end{align*}
Therefore if both $k\in \{k_1,k_2\}$ satisfy $|\bar{Y}_{C,i}^{(k)} - \bar{g}_C^{(k)}| \le \frac{1}{4} U(m_{C,i}, T, C)$, then $|\bar{g}_{C}^{(k_2)} - \bar{g}_{C}^{(k_1)}|\le \frac{3}{2}U(m_{C,i},T,C)$, which is a contradiction. Therefore, 
\begin{align*}
 \P(\mathcal{E} \cap \G_C \cap & \{\underline{\mathcal{I}}_{C} \subseteq \mathcal{I}_{C}^\prime\} \cap \{  \mathcal{I}_{C}^\prime \not\subseteq \overline{\mathcal{I}}_{C}\}) \\
 &\le \P(\mathcal{E} \cap \{\exists k\in {[K]} \, \text{s.t.} \,{m_{C,i}^{(k)}\ge 1},\,|\bar{Y}_{C,i}^{(k)} - \bar{g}_C^{(k)}| >\frac{1}{4} U(m_{C,i}, T, C)\}).   
\end{align*}
Combining two inequalities and by Lemma \ref{lem: Ybar_gbar_concentration}, we have
\begin{align*}
\P(\mathcal{E} \cap \G_C \cap \S_C^c) & \le 2\P(\mathcal{E} \cap \{\exists k\in {[K]} \, \text{s.t.} \,\,{m_{C,i}^{(k)}\ge 1},\, |\bar{Y}_{C,i}^{(k)} - \bar{g}_C^{(k)}| >\frac{1}{4} U(m_{C,i}, T, C)\})   \\
& \leq \frac{3m_{C,i}^*}{2T|A|}.
\end{align*}

It remains to show \eqref{eq: U_ineq} on $\mathcal{E}$. Recall 
{
\begin{equation*}
    U(m,T,C)= 4\sqrt{\frac{2\log (2KT|C|_\T}{\lfloor m/K\rfloor \vee 1}}.
\end{equation*}
}
First we show that 
\begin{align}\label{eq: lem_bound_gc1}
    &c_0|A| \le \frac{1}{2}U(\frac{3}{2}m_{C,i}^*,T,C)\quad \text{and}\quad
     \frac{3}{2}U(\frac{1}{2}m_{C,i}^*,T,C)\le (c_1 - 2L_0)|A|.
\end{align}
Recall for $1\leq i \leq M-1$, $m^*_{C,i} = (t_i - t_{i-1})\P_X(C)$.
{Suppose $T$ is sufficiently large so that $\beta_{sgn} \in \mathbb{B}_2(R_0; \beta_0)$ for $R_0>0$ defined in Assumption \ref{assum: cond_X}. Then we have
$\underline{c}_X|A|\le \P_X(C) \le \overline{c}_X|A|$ (ref. Equation \eqref{eq: P_X(C)_bounds}) under the stated assumptions. 
Moreover, we have $t_i - t_{i-1}   = \lfloor c_B  \,  K w_i^{-3} \log{(2KT w_i)}\rfloor$ and $c_B = 4/(c_0^2\overline{c}_X) = 4(4L_0+1)^{-2}(\overline{c}_X)^{-1}$  in \eqref{eq: batch_size}. 
Therefore, we have
\begin{align*}
    \frac{1}{2}U(\frac{3}{2}m_{C,i}^*,T,C)  \ge 2\sqrt{\frac{2\log (2KT|A|)}{\{ 1.5 \bar{c}_Xc_B  \,   |A|^{-2} \log{(2KT |A|)}\}\vee 1 }}.
\end{align*}
For a sufficiently large $T$,
\begin{align}\label{eq:lem:wi-2order}
     1.5 \bar{c}_Xc_B  \,   |A|^{-2} \log{(2KT |A|)} \asymp (\frac{T}{K\log T})^{\frac{2(1-\gamma^i)}{3(1-\gamma^M)}}\log (KT|A|)\ge 1,
\end{align}
therefore,
\begin{align*}
    \frac{1}{2}U(\frac{3}{2}m_{C,i}^*,T,C)  \ge 2\sqrt{\frac{2\log (2KT|A|)}{ 1.5 \bar{c}_Xc_B  \,   |A|^{-2} \log{(2KT |A|)}}} \ge 2|A|\sqrt{\frac{4c_0^2\bar{c}_X} {3\bar{c}_X}}\ge c_0|A|.
\end{align*}
On the other hand,
\begin{align*}
    \frac{3}{2}U(\frac{1}{2}m_{C,i}^*,T,C) 
    &= 6\sqrt{\frac{2\log(2KT|A|)}{\lfloor (1/2)  \lfloor c_B K|A|^{-3}\log(2TK|A|)\rfloor\P_X(C)/K\rfloor \vee 1}}. 
\end{align*}
To upper-bound RHS,
\begin{align*}
    \lfloor c_B K|A|^{-3}\log(2TK|A|)\rfloor &\ge   c_BK |A|^{-3}\log(2TK|A|) - 0.5 \\
    &\ge (1-\delta) c_B K|A|^{-3}\log(2TK|A|)
\end{align*}
for sufficiently large $T$, for any given $\delta>0$, since $K|A|^{-3}\log(2TK|A|)$ grows with $T$ by \eqref{eq:lem:wi-2order}.
In particular, taking $\delta = 1/4$ and using $\P_X(C) \ge \underline{c}_X|A|$,
\begin{align*}
    \lfloor 0.5\lfloor c_B K|A|^{-3}\log(2TK|A|)\rfloor\P_X(C)/K\rfloor \vee 1 
    &\ge \lfloor (3/8)\underline{c}_X c_B|A|^{-2} \log (2TK|A|)\rfloor \vee 1\\
    &\ge (3/8)\underline{c}_X c_B|A|^{-2} \log (2TK|A|)
\end{align*}
for a sufficiently large $T$. Therefore, 
\begin{align*}
    \frac{3}{2}U(\frac{1}{2}m_{C,i}^*,T,C) 
    \le  6\sqrt{\frac{2\log(2KT|A|)}{(3/8)\underline{c}_X c_B|A|^{-2} \log (2TK|A|)}} \le \frac{12}{\sqrt{3}}|A|\sqrt{\frac{\overline{c}_Xc_0^2}{\underline{c}_X}}\le 7c_0 \sqrt{\gamma}_X|A| \le (c_1-2L_0)|A|,
\end{align*}
where for the last inequality we use \eqref{eq: c1_2L0_ineq}.
}

Finally, on $\mathcal{E}$, we have that $\frac{1}{2}m_{C,i}^* \le m_{C,i} \leq \frac{3}{2}m_{C,i}^*$, therefore, 
\begin{align}\label{eq: lem_bound_gc2}
    U(1.5 m^*_{C,i},T,C) \le U(m_{C,i},T,C)  \le U(0.5 m^*_{C,i},T,C).
\end{align}
By combining \eqref{eq: lem_bound_gc1} and \eqref{eq: lem_bound_gc2}, we obtain \eqref{eq: U_ineq}.
  \end{proof}

\begin{lemma} \label{lem: Y_independence}
Let $i\in \{1,\dots,M\}$ be given, and fix $C\in \B_i$. Let $\tau_{C,i}(s)$ be the sth time at which the sequence $X_t$ is in $C$ during $[t_{i},t_{i+1})$. Fix $k \in {[K]}$. Assume $|Y_t^{(k)}|\le 1$ almost surely for any $t,k$. Consider $\{Y^{(k)}_{\tau_{C,i}(s)}; s=1,\dots,N\}$ for some $N<\infty$. Then $\{Y^{(k)}_{\tau_{C,i}(s)}; s=1,\dots,N\}$ are independent random variables with expectation $\bar{g}_{C}^{(k)}$, where
\begin{align*}
\bar{g}_{C}^{(k)}:=\frac{1}{\P(X\in C)} \int_{x \in C} g^{(k)}(x) d \P_X(x)=\frac{1}{\P(X\in C)} \int_{x \in C} f^{(k)}(x^\top\beta_0) d \P_X(x).
\end{align*}
\end{lemma}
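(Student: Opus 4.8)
The plan is to reduce the claim to the elementary fact that selecting, from an i.i.d.\ sequence, those terms that land in a fixed measurable set produces an i.i.d.\ subsequence distributed according to the conditional law, and then to attach the independent noise. First I would record that, by the model \eqref{eq: model}--\eqref{def:sim},
\[
Y^{(k)}_{\tau_{C,i}(s)} = f^{(k)}\bigl(X_{\tau_{C,i}(s)}^\top \beta_0\bigr) + \epsilon_{\tau_{C,i}(s)},
\]
and that the partition $\mathcal{B}_i$ is determined solely by the fixed direction $\beta$ and the split factors, so $C \in \mathcal{B}_i$ is a \emph{deterministic} set and the visiting indices $\tau_{C,i}(1),\dots,\tau_{C,i}(N)$ are measurable functions of the covariate sequence $(X_t)$ alone (well defined on the event $\{m_{C,i}\ge N\}$, on which I argue).

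The key step is to show that the retained covariates $\{X_{\tau_{C,i}(s)}\}_{s=1}^N$ are i.i.d.\ with law $\P_X(\cdot \mid X \in C)$. For bounded measurable $h_1,\dots,h_N$ I would expand $\E\bigl[\prod_{s=1}^N h_s(X_{\tau_{C,i}(s)})\bigr]$ by summing over all admissible increasing index tuples $t_1<\cdots<t_N$ among the time indices of batch $i$; on each such tuple the event $\{\tau_{C,i}(s)=t_s,\ s=1,\dots,N\}$ is the intersection of the independent events $\{X_{t_s}\in C\}$ and $\{X_r\notin C\}$ for the intervening indices $r$. Because the $X_t$ are i.i.d., each summand factorizes across coordinates, and carrying out the sum yields $\prod_{s=1}^N \E[h_s(X)\mid X\in C]$, which is exactly the statement that the retained subsequence is i.i.d.\ with the conditional law; the same decomposition shows it is independent of the discarded covariates.

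Finally I would incorporate the noise: since $\{\epsilon_t\}$ is an independent, mean-zero sequence independent of $(X_t)$, and each $\tau_{C,i}(s)$ is $\sigma\bigl((X_t)\bigr)$-measurable, conditioning on the whole covariate sequence leaves $\{\epsilon_{\tau_{C,i}(s)}\}_{s=1}^N$ a deterministic subselection of independent mean-zero variables, hence independent of one another and of $\{X_{\tau_{C,i}(s)}\}$. Combining this with the previous step shows that $\{Y^{(k)}_{\tau_{C,i}(s)}\}_{s=1}^N$ are mutually independent, and
\[
\E\bigl[Y^{(k)}_{\tau_{C,i}(s)}\bigr] = \E\bigl[f^{(k)}(X^\top\beta_0)\mid X\in C\bigr] = \frac{1}{\P(X\in C)}\int_{x\in C} f^{(k)}(x^\top\beta_0)\, d\P_X(x) = \bar{g}_{C}^{(k)}.
\]

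I expect the main obstacle to be the rigorous bookkeeping in the decomposition over the random visiting positions---in particular, justifying the factorization jointly for the selected covariates and the selected noises, and handling the fact that the number of visits $m_{C,i}$ is itself random (which is why the statement fixes $N$ and one argues on $\{m_{C,i}\ge N\}$). The boundedness assumption $|Y_t^{(k)}|\le 1$ enters only to guarantee integrability, so that all the expectations above are well defined.
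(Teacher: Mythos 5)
Your proof is correct, and its combinatorial core---expanding over admissible tuples of visit times and factorizing via the i.i.d.\ structure---is the same engine that drives the paper's proof; what differs is the organization. The paper works directly with the pairs $(X_t,Y_t^{(k)})$: it decomposes the events $\{\tau_{C,i}(s)=t\}$, sums the resulting negative-binomial series explicitly (e.g.\ $\sum_{t\ge s}\binom{t-1}{s-1}\P(X_1\in C^c)^{t-s}\P(X_1\in C)^{s-1}=\P(X_1\in C)^{-1}$), and verifies the product formula for $\P\bigl(Y^{(k)}_{\tau_{C,i}(i_1)}\in B_1,\dots,Y^{(k)}_{\tau_{C,i}(i_m)}\in B_m\bigr)$ by hand; the only distributional input is that the pairs are i.i.d.\ across $t$ with $\E[Y_t^{(k)}\mid X_t]=g^{(k)}(X_t)$. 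You instead do the combinatorics only at the level of the covariates (the classical fact that subselecting an i.i.d.\ sequence by membership in a fixed set yields an i.i.d.\ sample from the conditional law) and then attach the rewards by conditioning on the whole covariate sequence, under which the selected noises form a deterministic subselection of independent variables. This is more modular and spares you redoing the series computation for the joint law of the $Y$'s, but it reads the model slightly more strongly: you need $\{\epsilon_t\}$ independent of $(X_t)_t$ with a time-invariant (conditional) law, so that the conditional law of $Y^{(k)}_{\tau_{C,i}(s)}$ given the covariates depends only on $X_{\tau_{C,i}(s)}$ and not on the random index itself; the paper's assumption ``independent mean-zero $\epsilon_t$'' leaves this implicit, though its own independence step likewise implicitly uses identically distributed pairs, so you are on equal footing. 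Two minor differences in bookkeeping: (i) your restriction to $\{m_{C,i}\ge N\}$ is sound (that event is measurable with respect to the membership pattern $(1\{X_t\in C\})_t$, which is independent of the covariate values at the visit positions), whereas the paper instead treats $\tau_{C,i}(s)$ as almost surely finite stopping times over the continuing i.i.d.\ sequence and sums over all $t$ up to infinity---the form actually invoked later when Hoeffding's inequality is applied unconditionally in Lemma~\ref{lem: Ybar_gbar_concentration}; (ii) boundedness of $Y^{(k)}_t$ enters the paper's proof to justify the Fubini interchange, matching your integrability remark.
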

\begin{proof}

Recall that $\tau_{C,i}(s) = \inf\{n\ge \tau_{C,i}(s-1)+1; X_n \in C\}$ represents the time of the $s$th visit to the set $C$ from $t_{i-1}$, for $s=1,2,\dots$ and $\tau_{C,i}(0) = t_{i-1}$. Without loss of generality, assume $i=1$; otherwise we can redefine the sequence $X_{t_{i-1}+1},X_{t_{i-1}+2},\dots$ as $X_1,X_2,\dots$. Also, let $\tau_C(s) = \tau_{C,i}(s)$ for notational simplicity.

We note that for any $s$, $\tau_C(s)$ is a stopping time with respect to filtration $\mathcal{F}^X_t = \sigma(X_1,\dots,X_t)$, as for any $t \in \mathbb{N}$, $\{\tau_C(s) > t\} = \{\sum_{n=1}^t 1\{X_n \in C\} < s\}$ and therefore $\{\tau_C(s) > t\}$ is $\mathcal{F}_t^X$-measurable.

First, we compute $\E[Y^{(k)}_{\tau_C(s)}]$. First note that $1 = \sum_{t=s}^\infty 1\{\tau_C(s) = t\}$ almost surely and 
\begin{align*}
    &\{\tau_C(s) = t\} \\
    &= \bigcup_{\substack{(i_1,\dots,i_{s-1}) \subseteq\{1,\dots,t-1\}\\(j_1,\dots,j_{t-s}) \subseteq\{1,\dots,t-1\}\setminus (i_1,\dots,i_{s-1})}} \{X_{i_1} \in C,\dots,X_{i_{s-1}} \in C, X_{j_1}\in C^c,\dots,  X_{j_{n-s}} \in C^c\} \bigcap \{X_t \in C\}
\end{align*}
as $\{\tau_C(s) = t\}$ is the event where $X_n$ visits $C$ for $s-1$ times during $n=1,\dots,t-1$ and $X_t \in C$. For future reference, we define for $a< b$, and $s\in \{0, \dots,b-a\}$,
\begin{align*}
&\mathcal{E}_C(a,b,s)\\
&= \bigcup_{\substack{(i_1,\dots,i_{s}) \subseteq \{a+1,\dots,b\}\\(j_1,\dots,j_{b-a-s}) \subseteq \{a+1,\dots,b\}\setminus (i_1,\dots,i_{s})}} \{X_{i_1} \in C,\dots,X_{i_{s-1}} \in C, X_{j_1}\in C^c,\dots,X_{j_{b-a-s}} \in C^c\} 
\end{align*}
to be the event that during $n=a+1,\dots,b$, $X_n \in C$ for $s$ times.  With this notation,
\begin{align}
    \{\tau_C(s) = t\} = \mathcal{E}_C(0,t-1,s-1) \cap \{X_t \in C\}.
\end{align}
Since $(X_t)_{t\ge 1}$ are independent and identically distributed, we have,
\begin{align*}
&\P(\mathcal{E}_C(a,b,s)) =\binom{b-a}{s} \P(X_1 \in C^c)^{(b-a)-s} \P(X_1 \in C)^{s} 
\end{align*}
Therefore, we have, 
\begin{align*}
    \E[Y^{(k)}_{\tau_C(s)}]&= \E[\sum_{t=s}^\infty Y^{(k)}_{t} 1\{\tau_C(s) = t\}]\\
    &=\sum_{t=s}^\infty \E[ Y^{(k)}_{t} 1\{\tau_C(s) = t\}]\\
    &=\sum_{t=s}^\infty \E[ Y^{(k)}_{t} 1_{\mathcal{E}_C(0,t-1,s-1)}1\{X_t \in C\}]\\
    &=\sum_{t=s}^\infty \binom{t-1}{s-1} \P(X_1 \in C^c)^{t-s} \P(X_1 \in C)^{s-1} \E[ Y^{(k)}_{t} 1\{X_t \in C\} ]
\end{align*}
where for the second line we use the Fubini's theorem and the fact that $|Y_t^{(k)}|$ is bounded almost surely, and for the third line we use the independence between $(X_1,\dots,X_{t-1})$ and $(X_t, Y_t)$.  Since $\sum_{t=s}^\infty \binom{t-1}{s-1} \P(X_1 \in C^c)^{t-s} \P(X_1 \in C)^{s-1}  = \P(X_1 \in C)^{-1}$, we have
\begin{align*}
    \E[Y^{(k)}_{\tau_C(s)}]&= \frac{\E[ Y^{(k)}_{1} 1\{X_1 \in C\} ]}{\P(X_1\in C)} = \frac{1}{\P_X(C)} \int_{x \in C} g^{(k)}(x) d\P_X(x) =  \bar{g}_{C}^{(k)}
\end{align*}
where we note that $\E[ Y^{(k)}_{1} 1\{X_1 \in C\} ] = \E_{X_1}[\E_{\epsilon|X_1} [Y^{(k)}_{1}|X_1] 1\{X_1 \in C\} ]=\E_{X_1}[g^{(k)}(X_1) 1\{X_1 \in C\} ]$.

Now we show the independence of $\{Y^{(k)}_{\tau_C(s)}; s=1,\dots,N\}$. Fix $m\le N$.
Let $(i_1,\dots,i_m) \subseteq \{1,\dots,N\}$ be given such that $i_1<i_2<\dots<i_m$, as well as $B_1,\dots,B_m \in \mathscr{B}_{\mathbb{R}}$. It is sufficient to show $\P(Y^{(k)}_{\tau_C(i_1)} \in B_1, \dots,Y^{(k)}_{\tau_C(i_m)}\in B_m) = \prod_{j=1}^m \P(Y^{(k)}_{\tau_C(i_j)} \in B_j) $. 
\begin{align*}
    &\P(Y^{(k)}_{\tau_C(i_1)} \in B_1, \dots,Y^{(k)}_{\tau_C(i_m)}\in B_m)\\
    &=\sum_{n_1,n_2,\dots,n_m} \P(Y^{(k)}_{n_1} \in B_1, \dots,Y^{(k)}_{n_m}\in B_m, \tau_C(i_1)=n_1,\dots,\tau_C(i_m)=n_m)
    \end{align*}
Recall $\{\tau_C(i_1)=n_1,\dots,\tau_C(i_m)=n_m\}$ is the event that 
the time point for the $i_1$th visit = $n_1$, time point for the $i_2$th visit = $n_2$,$\dots$, and the time point for the $i_m$th visit = $n_m$.
Note that there are some restrictions in the possible values of $(n_1,\dots,n_m)$. For example, the earliest time $X_t$ can visit $C$ for $i_1$ times is $i_1$, when $X_t\in C$ for $1\le t \le i_1$, so $n_1 \ge i_1$. When $\tau_C(i_1) = n_1$, the earliest time that $X_t$ can visit $C$ for $i_2$ times is $n_1 + (i_2-i_1)$, so $n_2$ has to be at least $n_1+(i_2-i_1)$.  With this consideration, we have,
\begin{align*}
    &\P(Y^{(k)}_{\tau_C(i_1)} \in B_1, \dots,Y^{(k)}_{\tau_C(i_m)}\in B_m)\\
    &=\sum_{n_1,n_2,\dots,n_m} \P(Y^{(k)}_{n_1} \in B_1, \dots,Y^{(k)}_{n_m}\in B_m, \tau_C(i_1)=n_1,\dots,\tau_C(i_m)=n_m)\\
     &=\sum_{n_1=i_1}^\infty\sum_{n_2 = n_1+(i_2-i_1)}^\infty \cdots \sum_{n_m = n_{m-1}+(i_m-i_{m-1})}^\infty\\ 
     & \qquad \qquad \E(1\{\mathcal{E}_C(0,n_1-1,i_1-1) \cap \{X_{n_1} \in C,Y^{(k)}_{n_1} \in B_1\}  \cdots\\
     &\qquad \qquad\qquad \cap \mathcal{E}_C(n_{m-1},n_m-1,i_m-i_{m-1}-1)\cap \{X_{n_m} \in C, Y^{(k)}_{n_m} \in B_m\}\})\\
      &=\sum_{n_1=i_1}^\infty\sum_{n_2 = n_1+(i_2-i_1)}^\infty \cdots \sum_{n_m = n_{m-1}+(i_m-i_{m-1})}^\infty \prod_{j=1}^{m}\\
      & \qquad \qquad \P(\mathcal{E}_C(n_{j-1},\,n_j-1,\,i_j-i_{j-1} -1))\P(X_{n_j} \in C,Y^{(k)}_{n_j} \in B_j) 
\end{align*}
where we define $n_0 = 0, i_0=0$ , and use independence for the last equation. Since
\begin{align*}
   \P(\mathcal{E}_C(n_{j-1},\,n_j-1,i_j-i_{j-1} -1)) = \binom{n_{j}-n_{j-1}-1}{i_j-i_{j-1} -1} (1-p)^{(n_{j}-n_{j-1}) -(i_j-i_{j-1}) } p^{i_j-i_{j-1}-1},
\end{align*}
for $p = \P(X\in C)$, we have,
\begin{align}
   & \sum_{n_1=1}^\infty\sum_{n_2 = n_1+(i_2-i_1)}^\infty \cdots \sum_{n_m = n_{m-1}+(i_m-i_{m-1})}^\infty \prod_{j=1}^{m} \P(\mathcal{E}_C(n_{j-1},n_j-1,i_j-i_{j-1} -1))\P(X_{n_j} \in C,Y^{(k)}_{n_j} \in B_j) \nonumber\\
   &= \prod_{j=1}^{m} \lbrace \sum_{n_j=n_{j-1}+(i_j - i_{j-1})}^\infty \P(\mathcal{E}_C(n_{j-1},n_j-1,i_j-i_{j-1} -1))\P(X_{1} \in C,Y^{(k)}_{1} \in B_j)  \rbrace \nonumber \\
   &=\prod_{j=1}^{m}  \frac{\P(X_{1} \in C,Y^{(k)}_{1} \in B_j)}{\P(X_1 \in C)} \label{lem:eq_indep_1}
\end{align}
where for the last equality we use the fact that for any $j \in \{1,\dots,m\}$,
\begin{align}
&\sum_{n_j=n_{j-1}+(i_j - i_{j-1})}^\infty  \P(\mathcal{E}_C(n_{j-1},n_j-1,i_j-i_{j-1} -1))\nonumber\\
    &=\sum_{n_j=n_{j-1}+(i_j - i_{j-1})}^\infty \binom{n_{j}-n_{j-1}-1}{i_j-i_{j-1} -1} (1-p)^{(n_{j}-n_{j-1}) -(i_j-i_{j-1}) } p^{(i_j-i_{j-1})-1}\nonumber\\
   &=\sum_{k=i_j - i_{j-1}}^\infty \binom{k-1}{(i_j-i_{j-1}) -1} (1-p)^{k -(i_j-i_{j-1}) } p^{(i_j-i_{j-1})-1} = \frac{1}{p}.\label{lem:eq_indep_nb}
\end{align}
Here for the last equality, we use the following identity $\sum_{k=r}^\infty \binom{k-1}{r-1} p^k (1-p)^{n-r}=1$ with $r = i_j - i_{j-1}$.

On the other hand, for any $j\in \{1,\dots,m\}$, 
\begin{align}
    \P(Y^{(k)}_{\tau_C(i_j)} \in B_j) &= \sum_{n=i_j}^\infty \E[1\{Y^{(k)}_{n} \in B_j, \tau_C(i_j) = n\}] \nonumber\\
    &=\sum_{n=i_j}^\infty \E[1\{Y^{(k)}_{n} \in B_j, X_n \in C\}1\{\mathcal{E}_C(0,n-1, i_j-1)\}] \nonumber \\
    &=\P(Y^{(k)}_{1} \in B_j, X_1 \in C)\sum_{n=i_j}^\infty  \P(\mathcal{E}_C(0,n-1, i_j-1)) \nonumber\\
    &=\frac{\P(Y^{(k)}_{1} \in B_j, X_1 \in C)}{\P(X_1 \in C)}\label{lem:eq_indep_2}
\end{align}
where we use \eqref{lem:eq_indep_nb} with $j=1$ for the last equality. 

Therefore $\P(Y^{(k)}_{\tau_C(i_1)} \in B_1, \dots,Y^{(k)}_{\tau_C(i_m)}\in B_m) = \prod_{j=1}^m \P(Y^{(k)}_{\tau_C(i_j)} \in B_j) $ by \eqref{lem:eq_indep_1} and \eqref{lem:eq_indep_2} and the proof is complete.
\end{proof}

\begin{lemma} \label{lem: Ybar_gbar_concentration}
Fix $i \in \{1,\dots,M-1\}$ and $C \in \mathcal{B}_i$. 
Assume $|Y_t^{(k)}|\le 1$ almost surely for any $t,k$. 
Define
{
\begin{equation*}
    U(m,T,C)= 4\sqrt{\frac{2\log (2KT|C|_\T)}{\lfloor m/K\rfloor \vee 1}}.
\end{equation*}
}
We have 
\begin{align*}
    \P\left(\mathcal{E} \cap \left\lbrace \exists k \in {[K]}; \,\,{m_{C,i}^{(k)}\ge 1},\, |\bar{Y}^{(k)}_{C,i} -  \bar{g}^{(k)}_{C_A}| \geq \dfrac{1}{4} U(m_{C,i}, T, C)\right\rbrace \right) \leq \frac{3 m_{C,i}^*}{2T|C|_\T}.
\end{align*}
where $\mathcal{E}=\{\forall C \in \cup_{i=1}^{M-1} \mathcal{B}_i, m_{C,i} \in[m_{C,i}^*/2,3m_{C,i}^*/2]\}$ and
for $\bar{Y}_{C,i}^{(k)}$ defined in \eqref{eq: reward_def}.
\end{lemma}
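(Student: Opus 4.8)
The plan is to reduce to a single arm by a union bound and then handle the two sources of randomness that complicate the statement -- the random number of samples entering $\bar{Y}^{(k)}_{C,i}$ and the random threshold $U(m_{C,i},T,C)$ -- by conditioning on the configuration of visits to $C$. First I would write
$\P\bigl(\mathcal{E}\cap\{\exists k:\,|\bar{Y}^{(k)}_{C,i}-\bar{g}^{(k)}_{C}|\ge \tfrac14 U(m_{C,i},T,C)\}\bigr)\le \sum_{k=1}^2 \P\bigl(\mathcal{E}\cap\{|\bar{Y}^{(k)}_{C,i}-\bar{g}^{(k)}_{C}|\ge \tfrac14 U(m_{C,i},T,C)\}\bigr)$,
so it suffices to bound each arm's term by $\tfrac{3m_{C,i}^*}{4T|C|_\T}$.

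The central obstacle is that $\bar{Y}^{(k)}_{C,i}$ in \eqref{eq: reward_def} averages over the \emph{random} number $N_k:=|\tau^{(k)}_{C,i}|$ of arm-$k$ pulls, while the threshold depends on the random total count $m_{C,i}$; moreover both are determined by which covariates land in $C$, which also influences the rewards. The key enabling fact is Lemma \ref{lem: Y_independence}: the rewards collected from arm $k$ at successive visits to $C$ are independent, bounded, and each has mean exactly $\bar{g}^{(k)}_{C}$. The same conditioning argument used in its proof shows that the reward \emph{values} are independent of the visit indicators $\{\mathbf{1}\{X_t\in C\}:\,t_{i-1}<t\le t_i\}$, which are precisely what determine $m_{C,i}$ and $N_k$. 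I would therefore condition on this configuration: the events $\mathcal{E}$ (see \eqref{def: E_nsamples_event}), $m_{C,i}$, and $N_k$ are all measurable with respect to it, so that
$\P(\mathcal{E}\cap\{\cdots\})=\E\bigl[\mathbf{1}_{\mathcal{E}}\,\P(|\bar{Y}^{(k)}_{C,i}-\bar{g}^{(k)}_{C}|\ge \tfrac14 U(m_{C,i},T,C)\mid \text{config})\bigr]$,
and under the conditioning the arm-$k$ rewards are i.i.d.\ with mean $\bar{g}^{(k)}_{C}$ while $N_k$ and $m_{C,i}$ are fixed numbers.

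Next I would apply a Hoeffding-type inequality for $N_k$ bounded independent variables to the inner conditional probability (equivalently, after an affine shift to $[0,1]$, the multiplicative Chernoff bound of Lemma \ref{lem: Chernoff_bound}), giving $\P(\cdot\mid\text{config})\le 2\exp\!\bigl(-c\,N_k\,(\tfrac14 U(m_{C,i},T,C))^2\bigr)$ with a constant $c$ fixed by the reward range. Since $(\tfrac14 U(m,T,C))^2=2\log(2T|C|_\T)/m$, the exponent is $-2c\,N_k\log(2T|C|_\T)/m_{C,i}$. On $\mathcal{E}$ we have $m_{C,i}\ge m_{C,i}^*/2\ge 2$, and the cyclic pulling rule gives $N_k\ge \lfloor m_{C,i}/2\rfloor\ge (m_{C,i}-1)/2$, so the ratio $N_k/m_{C,i}$ is bounded below by a constant (approaching $\tfrac12$ as $m_{C,i}$ grows). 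Hence the exponent is at most $-(\text{const})\log(2T|C|_\T)$, yielding polynomial decay $(2T|C|_\T)^{-p}$; taking the outer expectation over configurations restricted to $\mathcal{E}$ only multiplies this by a factor $\le 1$.

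Finally, summing over the two arms and using $m_{C,i}^*\ge 4$ together with $2T|C|_\T\ge 1$ to absorb constants, the $(2T|C|_\T)^{-p}$ bound is dominated by $\tfrac{3m_{C,i}^*}{2T|C|_\T}$, which is the claim. The main difficulty is the conditioning step: establishing that the reward values feeding $\bar{Y}^{(k)}_{C,i}$ are independent of the random sample count $N_k$ and of the random threshold, which is exactly what Lemma \ref{lem: Y_independence} and its marginalization argument deliver. A secondary point requiring care is matching the constant in the exponent to reach decay of the required order: here the built-in factor-two slack between $N_k$ and $m_{C,i}$ in the definition \eqref{def: U} of $U$, together with the reward bound from Assumption \ref{assum: cond_X}, is what leaves enough room so that $p$ is large enough for the resulting estimate to be controlled by $\tfrac{3m_{C,i}^*}{2T|C|_\T}$.
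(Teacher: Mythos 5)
Your proposal is correct in substance but takes a genuinely different route from the paper. The paper handles the random sample size by peeling over its possible values: on $\mathcal{E}$ (with $m_{C,i}^*\ge 4$) one has $2\le m_{C,i}\le \tfrac32 m_{C,i}^*$, so after the union bound over arms the probability is bounded by $\sum_{k}\sum_{n=2}^{\lfloor 1.5 m_{C,i}^*\rfloor}\P\bigl(m_{C,i}=n,\,|\bar{Y}^{(k)}_{C,i}-\bar{g}^{(k)}_{C}|\ge \tfrac14 U(n,T,C)\bigr)$; for each fixed $n$ the paper introduces the fixed-length average $\tilde{Y}^{(k)}_n$ over the first $n$ visits (equal to $\bar{Y}^{(k)}_{C,i}$ on $\{m_{C,i}=n\}$), drops the event $\{m_{C,i}=n\}$, and applies \emph{unconditional} Hoeffding via Lemma \ref{lem: Y_independence}; the union over $n$ is exactly what produces the factor $\tfrac32 m_{C,i}^*$ in the stated bound. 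You instead condition on the visit configuration, under which $m_{C,i}$ and $N_k$ are constants and the arm-$k$ rewards are i.i.d.\ with mean $\bar{g}^{(k)}_{C}$, apply Hoeffding conditionally, and integrate out. This avoids the union over $n$ entirely and in fact yields the sharper bound $4/(2T|C|_\T)$, with no $m_{C,i}^*$ factor; the claimed bound then follows since $m_{C,i}^*\ge 4$. Your accounting of constants is also essentially the paper's: with rewards in an interval of length $1$ (Assumption \ref{assum: cond_X}) and $N_k/m_{C,i}\ge 1/4$ for $m_{C,i}\ge 2$, the exponent is $-2N_k\cdot\tfrac{1}{16}U^2\le -\log(2T|C|_\T)$, i.e.\ exactly the polynomial order $p=1$ needed — the same tightrope the paper walks (note that under the lemma's literal hypothesis $|Y^{(k)}_t|\le 1$, rather than the $|Y^{(k)}_t|\le 0.5$ of Assumption \ref{assum: cond_X}, the exponent degrades by a factor of $4$ and neither your argument nor the paper's closes; this slack issue is shared).

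Two points would need care in a full write-up. First, $\mathcal{E}$ as defined in \eqref{def: E_nsamples_event} is \emph{not} measurable with respect to the visit indicators of the single bin $C$: it constrains the counts of every bin in $\cup_{i=1}^{M-1}\mathcal{B}_i$, so the identity $\P(\mathcal{E}\cap\{\cdots\})=\E[1(\mathcal{E})\,\P(\cdots\mid \mathrm{config})]$ you wrote is not justified as stated. The fix is one line: bound $1(\mathcal{E})\le 1\{m_{C,i}\in[m_{C,i}^*/2,\,3m_{C,i}^*/2]\}$, which \emph{is} configuration-measurable, before conditioning (conditioning on anything finer, e.g.\ the full covariate sequence, would instead destroy the property that the conditional means equal $\bar{g}^{(k)}_{C}$). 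Second, what you need is slightly stronger than Lemma \ref{lem: Y_independence} as stated: the lemma gives unconditional independence of $\{Y^{(k)}_{\tau_{C,i}(s)}\}_s$ with mean $\bar{g}^{(k)}_{C}$, whereas you need that this sequence is additionally independent of the visit-indicator pattern (equivalently, i.i.d.\ with mean $\bar{g}^{(k)}_{C}$ conditionally on the pattern). This joint statement is true for i.i.d.\ $(X_t,\epsilon_t)$ and is proved by the same decomposition over visit configurations used in the lemma's proof, but it should be stated and proved rather than cited.
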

\begin{proof}
We have 
\begin{align*}
    &\P\left(\mathcal{E} \cap \left\lbrace \exists k \in {[K]} \,\text{s.t.} \,\,{m_{C,i}^{(k)}\ge 1},\, |\bar{Y}^{(k)}_{C,i} -  \bar{g}^{(k)}_{C_A}| \geq \dfrac{1}{4} U(m_{C,i}, T, C)\right\rbrace \right) \\
    &\leq \P\left(\frac{1}{2}m_{C,i}^*\le m_{C,i} \leq \frac{3}{2} m_{C,i}^*, \,\exists k \in {[K]} \,\text{s.t.}\,\,{m_{C,i}^{(k)}\ge 1},\, |\bar{Y}^{(k)}_{C,i} -  \bar{g}^{(k)}_{C_A}| \geq \dfrac{1}{4} U(m_{C_A,i}, T, C)\right) \\
    &\leq \sum_{k=1}^{{K}} \P\left(\frac{1}{2}m^{*}_{C,i} \le m_{C,i} \leq \frac{3}{2} m_{C,i}^* , \,{m_{C,i}^{(k)}\ge 1},\,|\bar{Y}^{(k)}_{C,i} -  \bar{g}^{(k)}_{C_A}| \geq \dfrac{1}{4} U(m_{C,i}, T, C)\right) \\
    &\leq \sum_{k=1}^{{K}} \sum_{n=1}^{\lfloor 1.5m_{C,i}^* \rfloor} \P\left(m_{c,i}=n, \,{m_{C,i}^{(k)}\ge 1},\,|\bar{Y}^{(k)}_{C,i} -  \bar{g}^{(k)}_{C_A}| \geq \dfrac{1}{4} U(m_{C,i}, T, C)\right).
\end{align*}
 
For any $n>0$,  $\{Y_{\tau_{C,i}(s)}^{(k)}; 1\le s \le n\}$ consists of bounded independent random variables with mean $\bar{g}_C^{(k)}$ by Lemma \ref{lem: Y_independence}. 

{Note that on $\{m_{C,i} = n,\, m_{C,i}^{(k)} \ge 1\}$, conditioned on the history $\mathcal{H}_{t_{i-1}},$ we have $ \bar{Y}_{C,i}^{(k)}$ is the average of $m_{C,i}^{(k)} \ge \lfloor n/K\rfloor\vee 1$ bounded independent terms with mean $\bar{g}_C^{(k)}$. Therefore, by Hoeffding's inequality, conditioned on $\mathcal{H}_{t_{i-1}}$:
\begin{align*}
    &\P\left(m_{C,i}=n,\,m_{C,i}^{(k)}\ge 1,\, |\bar{Y}^{(k)}_{C,i} -  \bar{g}^{(k)}_{C}| 
    \geq \sqrt{\frac{2\log(2KT|C|_\T)}{1 \vee \lfloor n/K \rfloor }}\,\mid \mathcal{H}_{t_{i-1}} \right) \\
    &\le  \P\left(m_{C,i}=n,\,\,m_{C,i}^{(k)}\ge 1,\, |\bar{Y}^{(k)}_{C,i} -  \bar{g}^{(k)}_{C}| 
    \geq \sqrt{\frac{2\log(2KT|C|_\T)}{m_{C,i}^{(k)}}},\mid \mathcal{H}_{t_{i-1}}  \right) \\
    &\le 2\exp\left(-\frac{m_{C,i}^{(k)}}{2}\cdot \frac{2\log(2KT|C|_\T)}{m_{C,i}^{(k)}}\right) = \frac{1}{KT|C|_\T}.
\end{align*}
}
Then, by using the union bound,
\begin{align*}
    \P&\left(\mathcal{E} \cap \left\lbrace \exists k \in {[K]} \,\text{s.t.} \,\,{m_{C,i}^{(k)}\ge 1},\, |\bar{Y}^{(k)}_{C,i} -  \bar{g}^{(k)}_{C}| \geq \dfrac{1}{4} U(m_{C,i}, T, C)\right\rbrace \right)\\
    &\le {K}\cdot \lfloor 1.5 m_{C,i}^*\rfloor \frac{1}{{K}T|C|_\T}  \le \frac{3m^*_{C,i}}{{2}T|C|_\T}.
\end{align*}
\end{proof}
\section{Example of single index vector estimation using SADE}\label{sec: SADEappendix}

In this section, we present an example of constructing the initial vector $\hat{\beta}$ which satisfies Assumption \ref{assum: index_rate}.
We propose using the Sliced Average Derivative Estimator (SADE) introduced by \citesupp{babichev2018slic_supp}, which combines the Average Derivative Estimator \citesupp{newey1993efficiency} and Sliced Inverse Regression \citesupp{li1991sliced}. This approach offers provable improvements over non-sliced versions and provides non-asymptotic bounds for estimating a matrix whose column space lies within the effective dimension reduction (e.d.r) space. Using this bound and the Davis-Kahan inequality, we will derive a non-asymptotic bound for the initial vector that satisfies Assumption \ref{assum: index_rate}.

\paragraph{SADE algorithm}

We briefly describe the SADE algorithm and the non-asymptotic bound for the matrix whose column space belongs to the e.d.r of the model  by \citesupp{babichev2018slic_supp}. Consider for now a dataset with iid observations $(X_i, Y_i)_{i=1}^n$. \citesupp{babichev2018slic_supp} makes the following assumptions on the model and the distribution of $X$:
 \begin{enumerate}
     \item (A1) For all $x \in \mathbb{R}^d$, we have $f(x)=g\left(w^{\top} x\right)$ for a certain matrix $w \in \mathbb{R}^{d \times k}$ and a function $g: \mathbb{R}^k \rightarrow \mathbb{R}$. Moreover, $Y=f(X)+\varepsilon$ with $\varepsilon$ independent of $X$ with zero mean and finite variance.
     \item (A2) The distribution of $X$ has a strictly positive density $p(x)$ which is differentiable with respect to the Lebesgue measure, and such that $p(x) \to 0$ when $\|x\| \to \infty$. 
 \end{enumerate}
Note that when $k=1$ in (A1), the model corresponds to the single-index model.

Let $\mathcal{S}_1(x)$ be the negative derivative of the log density of $\P_X$, i.e., $\mathcal{S}_1(x)=-\nabla \log p(x)=\frac{-1}{p(x)} \nabla p(x)$ where $p(x)$ is the density function of $\P_X$ with respect to Lebesgue measure, which is assumed to be known. For example, if $X$ is normally distributed with mean vector $\mu$ and covariance matrix $\Sigma$, then $\mathcal{S}_1(x) = \Sigma^{-1}(x - \mu)$.

From Lemma 2 in \citesupp{babichev2018slic_supp}, under (A1)--(A2),  $\mathbb{E}(\mathcal{S}_1(X)|Y=y)$ belongs to the e.d.r space $\textrm{span}(w_1,\dots,w_k)$ for almost every (a.e.) $y$.
Then $\mathcal{V}_{1, \mathrm{cov}}=\mathbb{E} [\mathbb{E} (\mathcal{S}_1(X) | Y ) \mathbb{E}(\mathcal{S}_1(X) | Y  )^\top  ]  ={\rm Cov} [\mathbb{E} (\mathcal{S}_1(x) | Y  )]$ will be at most a rank-$k$ matrix whose eigenvectors corresponding to non-zero eigenvalues belong to $\textrm{span}(w_1,\dots,w_k)$. 
The process to estimate $\mathcal{V}_{1, \mathrm{cov}}$ given a data $(x_i,y_i)_{i=1}^n$ is summarized in Algorithm \ref{algorithm: SADE}.

\begin{algorithm}[ht]
\caption{SADE Algorithm to estimate $\beta_0$ for i.i.d. dataset}
\label{algorithm: SADE}
\begin{algorithmic}[1]
\State \textbf{Input:} Data $(x_i, y_i)_{i=1}^n$, score function $\mathcal{S}_1$, number of slices $H$
\State \textbf{Output:} $\beta =$ the scaled eigenvector corresponding to the largest eigenvalue of $\hat{\mathcal{V}}_{1, \mathrm{cov}}$
\State Slice $[0,1]$ into $H$ slices $I_1, \dots, I_H$
\State Let $\hat{p}_h$ be the empirical proportion of $y_i$ that fall in the slice $I_h$:
\[
\hat{p}_h = \frac{\sum_{i=1}^n 1\{y_i \in I_h\}}{n}
\]
\State Estimate $(\mathcal{S}_1)_h = \mathbb{E}[\mathcal{S}_1(x) \mid y \in I_h]$ by:
\[
(\hat{\mathcal{S}}_1)_h = \frac{1}{\sum_{i=1}^n 1\{y_i \in I_h\}} \sum_{i=1}^n 1\{y_i \in I_h\} \mathcal{S}_1(x_i)
\]
\State Estimate $\mathrm{Cov}(\mathcal{S}_1(x) \mid y \in I_h)$ by:
\[
(\hat{\mathcal{S}}_1)_{\mathrm{cov}, h} = \frac{1}{n \hat{p}_h - 1} \sum_{i=1}^n 1\{y_i \in I_h\} (\mathcal{S}_1(x_i) - (\hat{\mathcal{S}}_1)_h) (\mathcal{S}_1(x_i) - (\hat{\mathcal{S}}_1)_h)^\top
\]
\State Compute:
\[
\hat{\mathcal{V}}_{1, \mathrm{cov}} = \frac{1}{n} \sum_{i=1}^n \mathcal{S}_1(x_i) \mathcal{S}_1(x_i)^\top - \sum_{h=1}^H \hat{p}_h \cdot (\hat{\mathcal{S}}_1)_{\mathrm{cov}, h}
\]
\State Let $u$ be the eigenvector corresponding to the largest eigenvalue of $\hat{\mathcal{V}}_{1, \mathrm{cov}}$. 
\State If $u_1<0$, let $u \leftarrow  -u$.
\State \textbf{Return:} $\beta = u / \|u\|_2$
\end{algorithmic}
\end{algorithm}

\citesupp{babichev2018slic_supp} derive a non-asymptotic bound on $\|\mathcal{V}_{1, \mathrm{cov}}-\hat{\mathcal{V}}_{1, \mathrm{cov}}\|_{*}$, where $\|\cdot\|_*$ denotes the nuclear norm, under the additional assumptions (L1)--(L4) listed below.
\begin{enumerate}
    \item[(L1)] The function $m: \mathbb{R} \rightarrow \mathbb{R}^d$ such that $\mathbb{E}(\mathcal{S}_1(X) \mid Y=y)=m(y)$ is $L$-Lipschitz continuous.
    \item[(L2)] The random variable $Y \in \mathbb{R}$ is sub-Gaussian, i.e., such that \\
    $\mathbb{E} e^{t(Y-E y)} \le$ $e^{\tau_y^2 t^2 / 2}$, for some $\tau_y>0$.
\item[(L3)] The random variables $\mathcal{S}_{1j}(X) \in \mathbb{R}$ are sub-Gaussian, i.e., such that \\
$\mathbb{E} e^{t \mathcal{S}_{1j}(X)} \le$ $e^{\tau_{\ell}^2 t^2 / 2}$ for each component $j \in\{1, \ldots, d\}$, for some $\tau_{\ell}>0$.
\item[(L4)] The random variables $\eta_j=\mathcal{S}_{1j}(X)-m_j(Y) \in \mathbb{R}$ are sub-Gaussian, i.e., such that $\mathbb{E} e^{t \eta_j} \le e^{\tau_\eta^2 t^2 / 2}$ for each component $j \in\{1, \ldots, d\}$, for some $\tau_\eta>0$.
\end{enumerate}

Under (A1)--(A2) and (L1)--(L4), \citesupp{babichev2018slic_supp} proves the following bound in Theorem 1:   for any $\delta<\frac{1}{n}$, with probability not less than $1-\delta$:

\begin{align}\label{eq: sade_bound}
	\left\|\hat{\mathcal{V}}_{1, \operatorname{cov}}-\mathcal{V}_{1, \operatorname{cov}}\right\|_* & \le \frac{d \sqrt{d}\left(195 \tau_\eta^2+2 \tau_{\ell}^2\right)}{\sqrt{n}} \sqrt{\log \frac{24 d^2}{\delta}} \nonumber \\
+ & \frac{8 L^2 \tau_y^2+16 \tau_\eta \tau_y L \sqrt{d}+\left(157 \tau_\eta^2+2 \tau_{\ell}^2\right) d \sqrt{d}}{n} \log ^2 \frac{32 d^2 n}{\delta}.
\end{align}

\paragraph{Non-asymptotic bound for the estimated initial vector}

Now, combining the non-asymptotic bound for $\mathcal{V}_{1,\operatorname{cov}}$ and Davis-Kahan Theorem, we present the non-asymptotic bound for $\hat{\beta}^{(k)}$ where $\hat{\beta}^{(k)}$ is the estimated index vector using an i.i.d sample $(X_t, Y_t^{(k)})$ of size $n_k$ from the single index model \eqref{def:sim}.

\begin{theorem} \label{thm: SADE_estimation} 
Assume the single index model \eqref{def:sim} and Assumption \ref{assum: cond_X}, along with (L1)--(L4). For sufficiently large $n_k$, {for any $\delta \in (0,1/n_k)$, the following bound holds with probability at least $1-\delta$:
\begin{align*}
  \sin \angle \hat{\beta}^{(k)}, \beta_0 \le   c(d, \tau_\eta, \tau_\ell, \lambda_1) \frac{\log^2(n_k/\delta)}{\sqrt{n_k}}.
\end{align*}
}
Here $c(d, \tau_\eta, \tau_\ell, \lambda_1)$ is a constant which depends on model parameters \\
$d, \tau_\eta, \tau_\ell, \lambda_1$ but not on the sample size $n$.
\end{theorem}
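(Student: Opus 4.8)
The plan is to transfer the non-asymptotic matrix estimation error in \eqref{eq: sade_bound} into a bound on the principal angle between $\hat\beta^{(k)}$ and $\beta_0$ by means of a Davis--Kahan $\sin\theta$ argument. First I would identify the population structure. Under the single index model \eqref{def:sim} we are in the case $k=1$ of assumption (A1), so by Lemma 2 of \cite{babichev2018slic} the vector $\E[\mathcal{S}_1(X)\mid Y=y]$ lies in the one-dimensional e.d.r.\ space $\operatorname{span}(\beta_0)$ for a.e.\ $y$. Consequently $\mathcal{V}_{1,\mathrm{cov}}=\operatorname{Cov}[\E(\mathcal{S}_1(X)\mid Y)]$ is symmetric positive semidefinite of rank at most one, its unique nonzero eigenvalue $\lambda_1>0$ having eigenvector $\pm\beta_0$. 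Thus the leading eigenvector of the population matrix is exactly $\beta_0$ up to sign, and the gap between its first and second eigenvalues is $\lambda_1-0=\lambda_1$.

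Since $\hat\beta^{(k)}$ is the normalized leading eigenvector of $\hat{\mathcal{V}}_{1,\mathrm{cov}}$, I would next apply the Davis--Kahan bound of \cite{yu2015useful} (Theorem 2) with $r=s=1$, exactly as in the proof of Lemma \ref{lem: estimated_beta_bound}. This gives
\[
\sin\angle \hat\beta^{(k)},\beta_0 \le \frac{2\,\|\hat{\mathcal{V}}_{1,\mathrm{cov}}-\mathcal{V}_{1,\mathrm{cov}}\|_{\mathrm{op}}}{\lambda_1}
\le \frac{2\,\|\hat{\mathcal{V}}_{1,\mathrm{cov}}-\mathcal{V}_{1,\mathrm{cov}}\|_{*}}{\lambda_1},
\]
where the first inequality uses that the eigengap equals $\lambda_1$ and the second uses $\|\cdot\|_{\mathrm{op}}\le\|\cdot\|_{*}$. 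The sign convention in Steps 8--9 of Algorithm \ref{algorithm: SADE} is irrelevant, as the principal angle is invariant under sign flips.

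Finally I would insert the concentration bound \eqref{eq: sade_bound} with the choice $\delta = n_k^{-\phi}$ (legitimate for $\phi\ge 1$ and large $n_k$, after possibly shrinking $\delta$ by a fixed constant so that $\delta<1/n_k$). With this choice $\log(24d^2/\delta)=O(\log n_k)$ and $\log^2(32d^2 n_k/\delta)=O(\log^2 n_k)$, so the first summand in \eqref{eq: sade_bound} is of order $\sqrt{\log n_k/n_k}$ while the second is of order $\log^2 n_k/n_k$, which is negligible relative to the first for large $n_k$. Hence, on an event of probability at least $1-n_k^{-\phi}$,
\[
\|\hat{\mathcal{V}}_{1,\mathrm{cov}}-\mathcal{V}_{1,\mathrm{cov}}\|_{*} \lesssim \sqrt{\frac{\log n_k}{n_k}},
\]
and substituting into the Davis--Kahan bound and absorbing all model-dependent factors (including $1/\lambda_1$) into a single constant $c(d,\tau_\eta,\tau_\ell,\lambda_1,\phi)$ yields the claimed rate; this in turn verifies Assumption \ref{assum: index_rate} with $\mathrm{polylog}(n_k)=\sqrt{\log n_k}$.

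The step I expect to be the main obstacle is the non-degeneracy of the population matrix, namely that $\lambda_1>0$ is bounded away from zero. The Davis--Kahan denominator is precisely $\lambda_1$, so the conclusion is vacuous unless the SADE functional genuinely carries nonzero signal along $\beta_0$; this is a structural identifiability requirement on the pair $(f^{(k)},\P_X)$ and is exactly why $\lambda_1$ enters the final constant. A secondary technical point is reconciling the regularity conditions (A2) and (L1)--(L4) of \cite{babichev2018slic}---a strictly positive, differentiable density with suitable tail and Lipschitz behavior---with the bounded-support covariate law of Assumption \ref{assum: cond_X}; I would either verify these directly for the truncated model or, as the theorem statement does, adopt (L1)--(L4) as standing hypotheses.
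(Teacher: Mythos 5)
Your proposal is correct and follows essentially the same route as the paper's proof: rank-one population structure so the eigengap is $\lambda_1$, a Davis--Kahan $\sin\theta$ bound via Theorem 2 of \cite{yu2015useful}, and the SADE concentration bound \eqref{eq: sade_bound} with $\delta = n_k^{-\phi}$, where the first term dominates for large $n_k$. The only cosmetic difference is that you pass through the operator norm ($\|\cdot\|_{\mathrm{op}}\le\|\cdot\|_{*}$) while the paper passes through the Frobenius norm, and you are slightly more careful about the strict requirement $\delta<1/n_k$ when $\phi=1$; both points are absorbed into the constant and do not change the argument.
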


\begin{proof}
Let $\hat{\mathcal{V}}_{1, \operatorname{cov}}^{(k)}$ be the estimated covariance matrix from Algorithm \ref{algorithm: SADE} using the dataset $\mathcal{D}^{(k)}_{\rm init}$ for $k=1,\dots,K$.  For $A \in \R^{d \times d}$ with singular values $\sigma_1,\dots,\sigma_d$, we have $\|A\|_*= \sum_{i=1}^d \sigma_i \le (\sum_{i=1}^d \sigma_i^2)^{1/2} (\sum_{i=1}^d 1)^{1/2}  =  d^{1/2}\|A\|_F$. Then from \eqref{eq: sade_bound}, for any $\delta < 1/n_k$, we have with probability at least $1-\delta$:
\begin{align*}
\left\|\hat{\mathcal{V}}_{1, \operatorname{cov}}^{(k)}-\mathcal{V}_{1, \operatorname{cov}}\right\|_F & \le \frac{{c_1}}{\sqrt{n_k}} \sqrt{\log \frac{{c_2}}{\delta}} +\frac{{c_3}}{n_k} \log ^2 \frac{{c_4} n_k}{\delta}
\end{align*}
{for constants $c_1,\dots,c_4>0$ depending on model parameters and $d$.}

Now, by applying a variant of Davis-Kahan inequality (ref. Theorem 2 in \citesupp{yu2015useful}) to this bound,
\begin{align*}
    \sin \angle \hat{\beta}^{(k)}, \beta_0 \le  \frac{2\|\hat{\mathcal{V}}_{1, \operatorname{cov}}^{(k)}-\mathcal{V}_{1, \operatorname{cov}}\|_{F}}{\lambda_1 - \lambda_2},
\end{align*}
where $\beta_0$ and $\hat{\beta}^{(k)}$ correspond to the first eigenvector of $\mathcal{V}_{1, \operatorname{cov}}$ and $\hat{\mathcal{V}}_{1, \operatorname{cov}}^{(k)}$ and $\lambda_1\ge \lambda_2 \ge \dots\lambda_d$ are eigenvalues of $\mathcal{V}_{1,\operatorname{cov}}$.  
Note since $k=1$, $\mathcal{V}_{1, \operatorname{cov}}$ should have only one non-zero eigenvalue, i.e., $\lambda_2 = 0$. Under condition where SADE is consistent, $\lambda_1>0$, {
and for a sufficiently large $n_k$, for any $\delta <1/n_k$, the following holds with probability at least $1-\delta$:
\begin{equation*}
    \sin \angle \hat{\beta}^{(k)}, \beta_0   \le \frac{2}{\lambda_1}\left\lbrace \frac{{c_1}}{\sqrt{n_k}} \sqrt{\log \frac{{c_2}}{\delta}} +\frac{{c_3}}{n_k} \log ^2 \frac{{c_4} n_k}{\delta} \right\rbrace\\
    \le c(\lambda_1,c_1,c_2,c_3,c_4) \frac{\log^2(n_k/\delta)}{\sqrt{n_k}}.
\end{equation*}
}
as the first term is the leading order term.

\end{proof}

\section{Addition simulation and real-data results} \label{appendix: additional_numerical}
\subsection{Additional simulation results} \label{sec: additional_simulations}
In addition to the simulation study in Section \ref{sec: 5_simulation}, we explore alternative covariate distributions beyond the truncated multivariate normal distribution. Specifically,  for $X_t \in \mathbb{R}^d$ for $t = 1, \hdots, T$, we consider: 1) $X_t \sim N(0, \Sigma_X)$, where $\Sigma_X = 5I$, where $I$ is the identity matrix, 2) $X_{ti} \sim \text{Unif}(-L,L)$ for $i = 1,\hdots,d$ and with $L = 3$. We consider Setting 2 from Section \ref{sec: 5_simulation} with $T = 10^6$. The true index vector $\beta_0$ and rewards are generated exactly as in Section \ref{sec: 5_simulation}. As before, we consider both the cases: 
\begin{itemize}
    \item When the pilot direction $\beta_0$ is available under varying degree of angular permutations $\theta$, i.e., we perturb $\beta_0$ by an angle $\theta$ ranging from $\{0.01, \hdots, \pi/2\}$ use the resulting perturbed direction in Algorithm \ref{algorithm: SIRBatchedBinning}.
    \item When the pilot direction is unknown and we use the initial $t_0 = T^{2/3}$ data to estimate using SADE algorithm \citesupp{babichev2018slic_supp} described in Algorithm \ref{algorithm: SADE} for each arm and then using Algorithm \ref{algorithm: Initial_Dir_Estimation} to construct the average index estimator. We consider varying level of model noise $\sigma$ and compare the performance of the proposed Algorithm \ref{algorithm: SIRBatchedBinning} with the nonparametric analogue, i.e., the BaSEDB algorithm of \citesupp{jiang2025batched_supp}. 
\end{itemize}
\begin{figure}[htbp]
    \centering
    \begin{tabular}{cc}
    (a) & (b)\\
         \includegraphics[width = 0.45\linewidth]{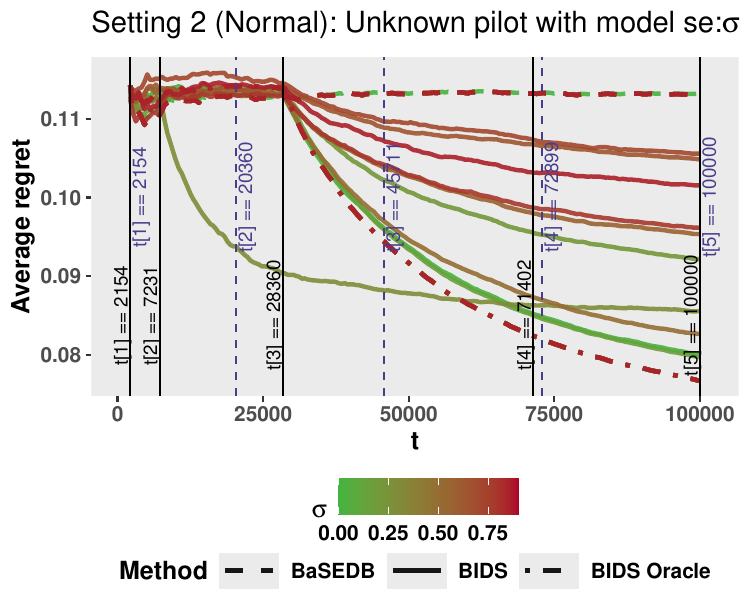}    & \includegraphics[width=0.45\linewidth]{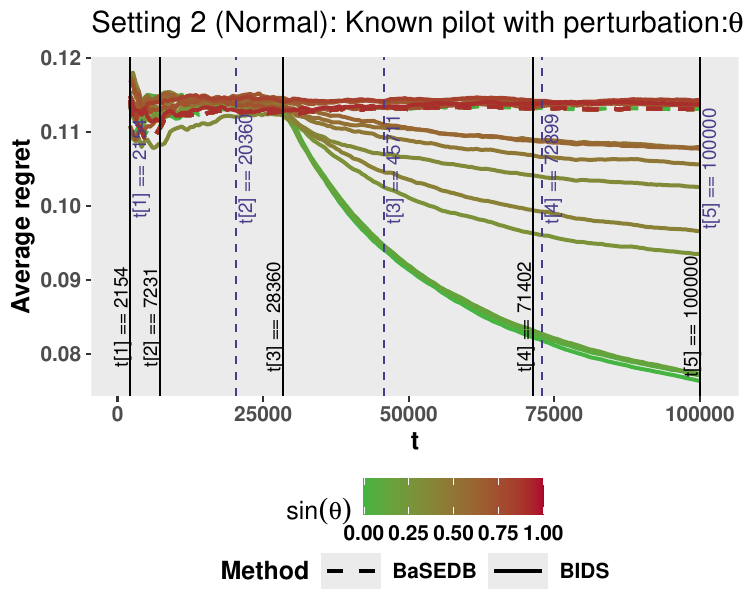} 
    \end{tabular}
\caption{Average regret ($(\mathcal{R}_t)_{t=1}^T$) with normally distributed covariates.  As the noise gets larger, the performance of the SIR batched bandit (solid) still beats the nonparametric analogue (dashed) but gets further way from the oracle (dashed-dotted).}
    \label{fig:simulation_normal}
\end{figure}
When the pilot direction is unknown and Algorithm \ref{algorithm: SADE} is employed with the initial index estimator as described in Algorithm \ref{algorithm: Initial_Dir_Estimation}, we note that for both Normal [Figure \ref{fig:simulation_normal}(a)] and Uniform covariates [Figure \ref{fig:simulation_uniform}(a)], the average regret for the proposed Algorithm \ref{algorithm: SIRBatchedBinning} decreases faster than for the nonparametric analogue (dashed lines). Nonetheless, its performance degrades as the model error grows from 0.1 to 0.8 (solid green to red lines), with the decline being more pronounced for Normally distributed covariates compared to Uniform ones.
\begin{figure}[htbp]
    \centering
    \begin{tabular}{cc}
         (a) & (b) \\
        \includegraphics[width = 0.45\linewidth]{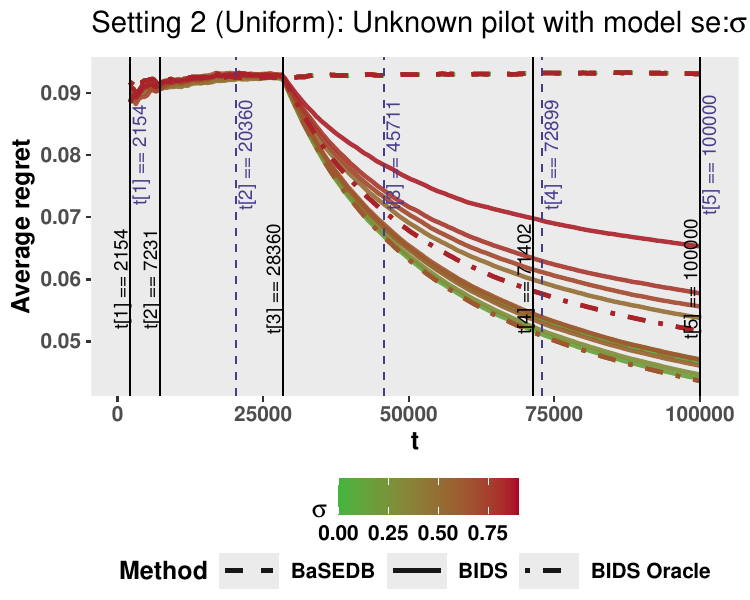} & \includegraphics[width = 0.45\linewidth]{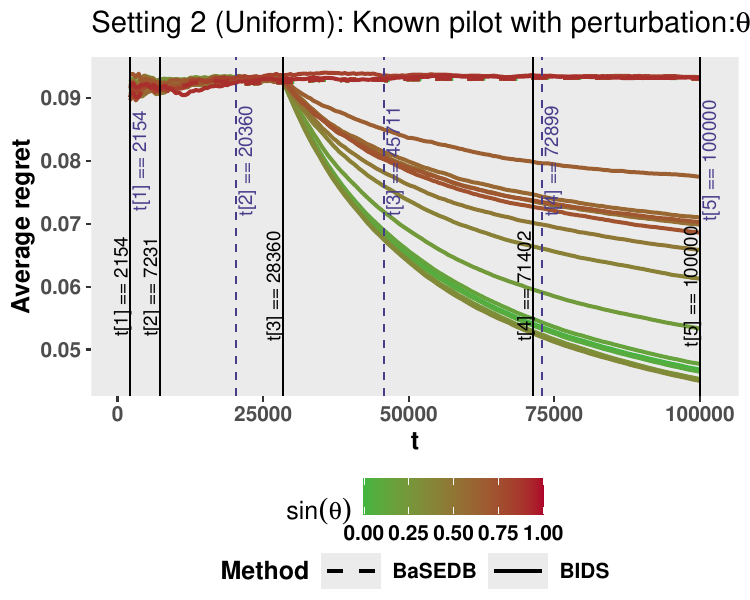}
    \end{tabular}
\caption{Average regret ($(\mathcal{R}_t)_{t=1}^T$) with uniformly distributed covariates with perturbed true direction $\beta_0$ by an angle $\theta$.  As the perturbation gets larger, the performance of the SIR batched bandit still beats the nonparametric analogue but gets further way from the oracle direction.}
    \label{fig:simulation_uniform}
\end{figure}
The average regret over 20 replications of each algorithm is shown in Figures \ref{fig:simulation_normal} and \ref{fig:simulation_uniform} for normally and uniformly distributed covariates, respectively. Note, the black solid and blue dashed vertical lines in all the four plots denote the $M = 5$ batches for BIDS and nonparametric analogue (BaSEDB), respectively, chosen according to the theory as described in Section \ref{sec: dynamic_binning}. Since the width of the BaSEDB algorithm depends on the covariate dimension $d$, we notice that the bins are much wider in the nonparametric setting as compared to the semiparametric GMABC setting. For the case where the pilot direction is available, both for Normally distributed covariates [Figure \ref{fig:simulation_normal}(b)] and Uniformly distributed covariates [Figure \ref{fig:simulation_uniform}(b)], we observe that as the perturbation, $\sin(\theta)$, increases from $0$ to $0.8$ (corresponds to $\theta \leq \pi/4$), the performance of the proposed algorithm deteriorates (solid green to solid red lines) and stops learning if the perturbation is larger, similar to the nonparametric analogue. However for $\theta \leq \pi/4$, it still outperforms the nonparametric analogue (dashed lines), where no arm elimination appears to occur. The decline in performance seems to be more pronounced for Normally distributed covariates compared to Uniform ones.

Finally, the performance of the proposed algorithm with the oracle direction (dashed-dotted lines) shows slight variation as model noise increases, but it remains consistently better than the other algorithms, as expected. This variation in the oracle's performance could be attributed to variability across different simulation runs of the decision-making process.

{\subsection{Additional simulations with multiple arms ($K>2$)}
\label{sec: simulation_multiarm}

To further investigate the behavior of the proposed BIDS algorithm in settings with more than two arms, we conduct additional experiments with $K\in\{3,5,8\}$. These experiments illustrate how the algorithm scales with the number of arms and how the batching structure adapts when more competing reward functions are present. 
We consider two reward configurations of increasing difficulty.

\paragraph{Simulation settings}
Both settings extend the two-arm design used earlier by generating rewards through a shared single-index model with projected covariate $\eta_t = X_t^\top \beta_0$.

\paragraph{Setting 1: single nonlinear arm}
One arm retains the nonlinear bump function $f(x)$ defined in \eqref{eq: fx_simulation}, while the remaining $K-1$ arms are assigned linear functions with varying slopes as shown in Figure~\ref{fig:Kgr2_rewardfuncs}:
\[
f^{(1)}(x) = f(x), \qquad 
f^{(k)}(x) = c + a_k(x-\eta_{\rm mid}), \quad k=2,\dots,K,
\]
where $\eta_{\rm mid}=(l+u)/2$ and the slopes $a_k$ are evenly spaced within a fixed range. 
This construction creates a collection of arms where the optimal arm varies across the projected covariate space while preserving the shared single-index structure.

\begin{figure}[htbp]
\centering
\begin{tabular}{ccc}
(a) & (b) & (c)\\
\includegraphics[width = 0.3\linewidth]{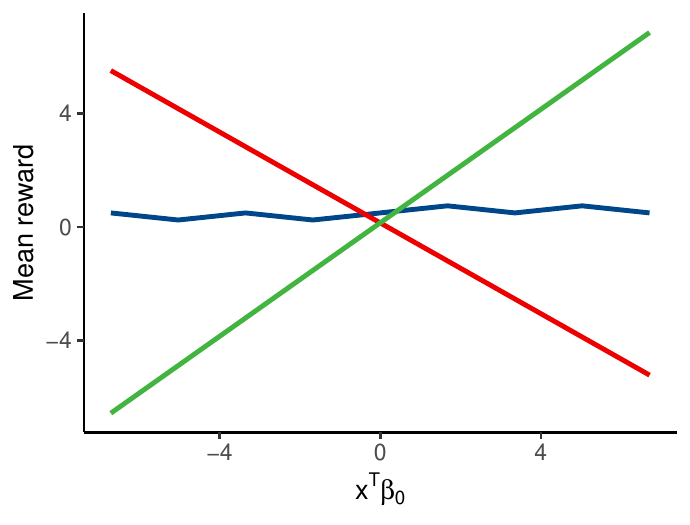} &
\includegraphics[width = 0.3\linewidth]{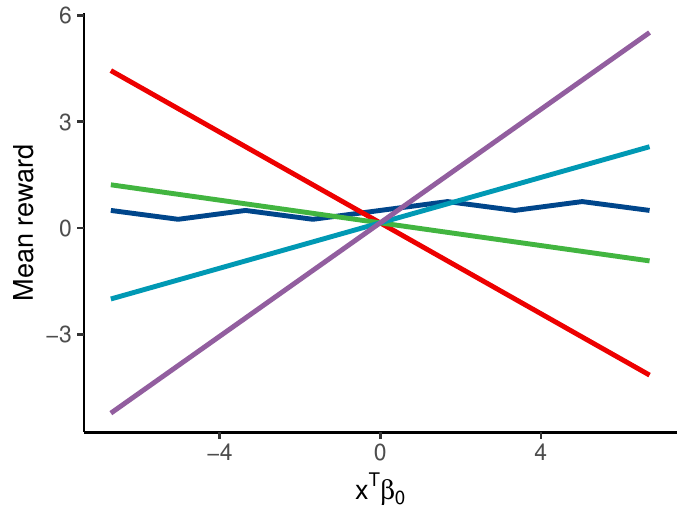} &
\includegraphics[width = 0.3\linewidth]{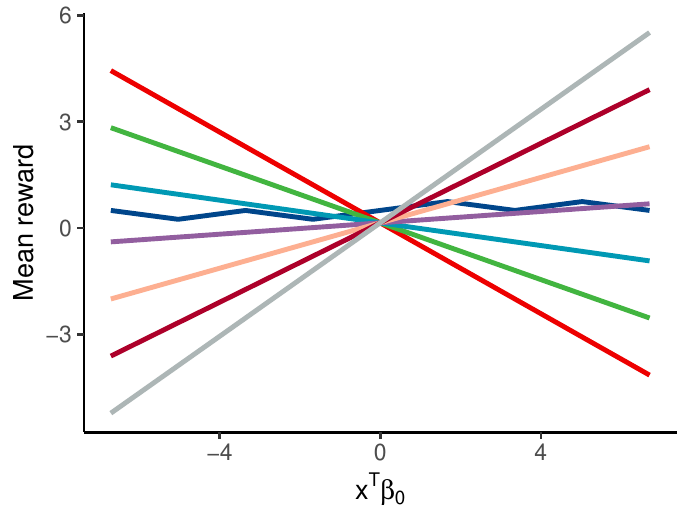}
\end{tabular}
\caption{Mean reward functions for $K=3,5,$ and $8$ arms in Setting~1.}
\label{fig:Kgr2_rewardfuncs}
\end{figure}

Rewards are generated according to
$
Y_t^{(k)} = f^{(k)}(X_t^\top \beta_0) + \epsilon_t,
\epsilon_t \sim N(0,\sigma^2),
$
with $\sigma=0.01$ and time horizon $T=10^5$.

\begin{figure}[htbp]
\centering
\begin{tabular}{ccc}
(a) & (b) & (c)\\
\includegraphics[width=0.32\linewidth]{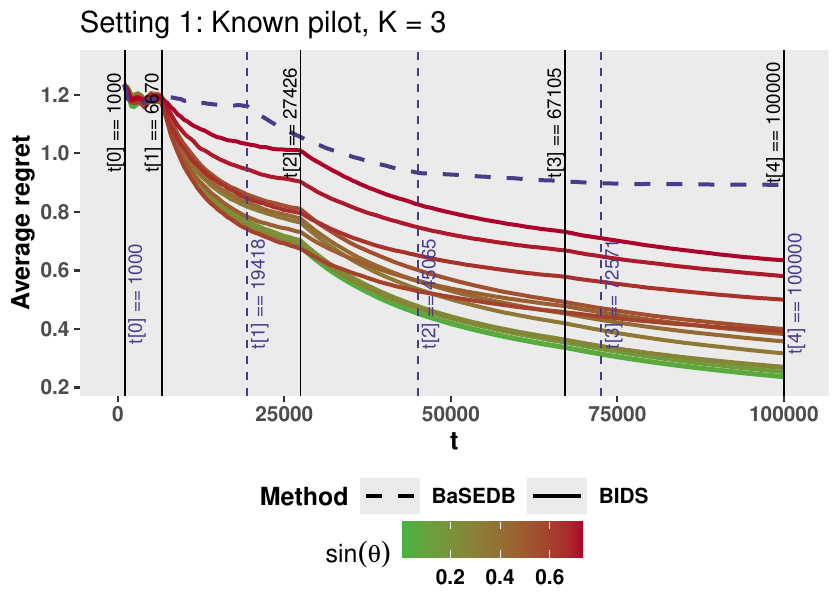} &
\includegraphics[width=0.32\linewidth]{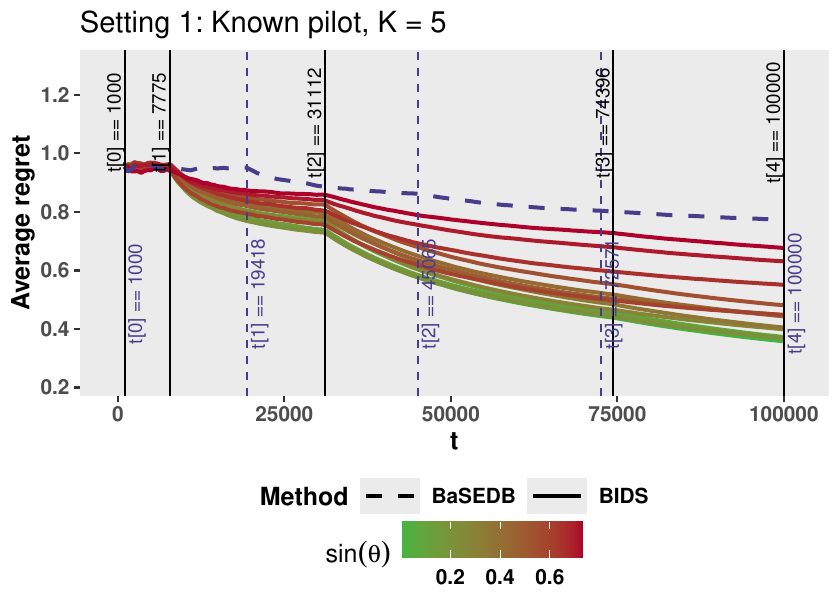} & 
\includegraphics[width=0.32\linewidth]{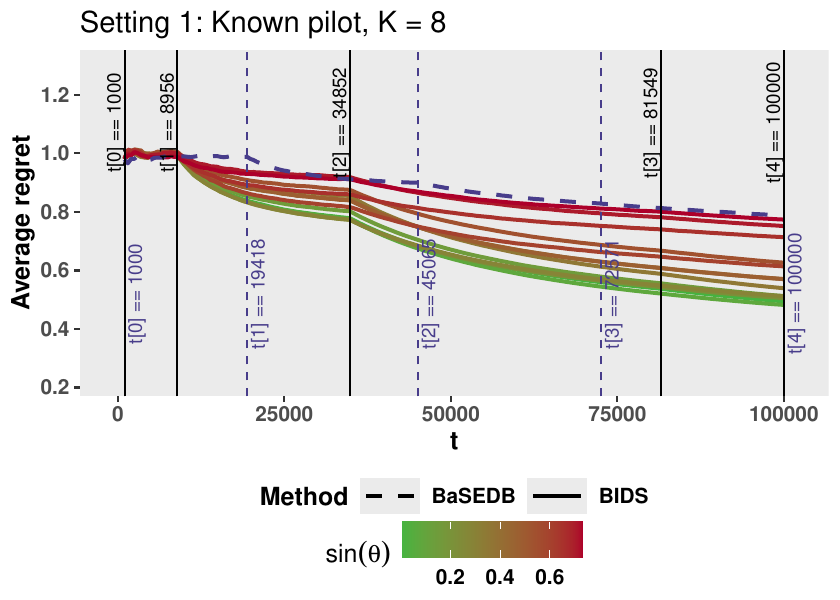}\\
\includegraphics[width=0.32\linewidth]{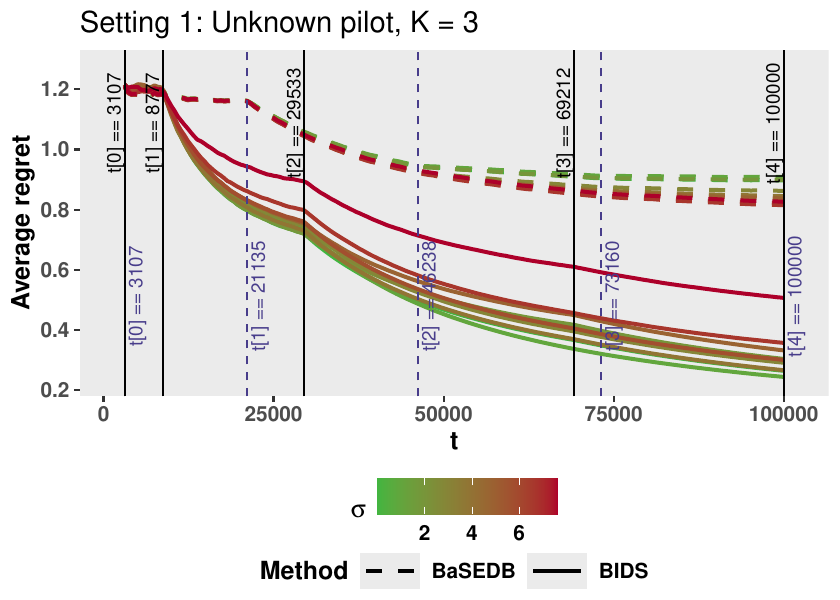} &
\includegraphics[width=0.32\linewidth]{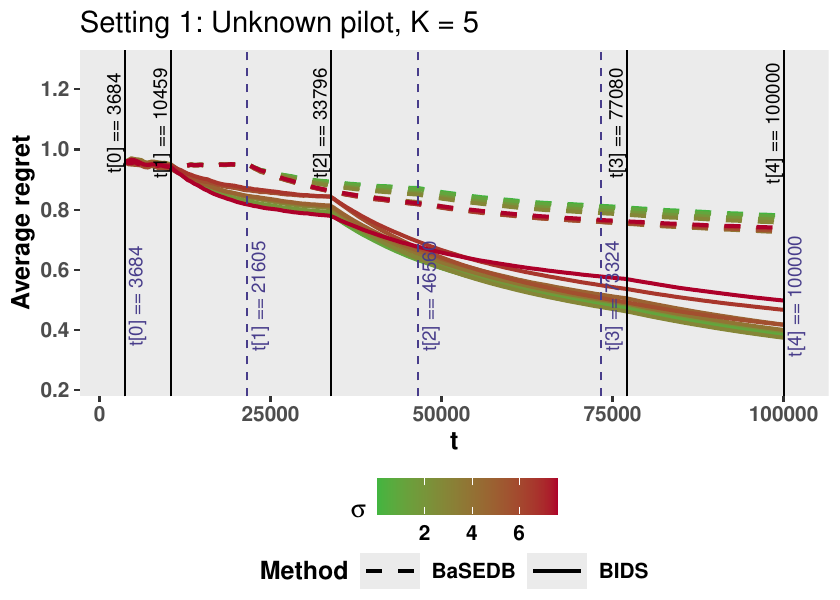} &
\includegraphics[width=0.32\linewidth]{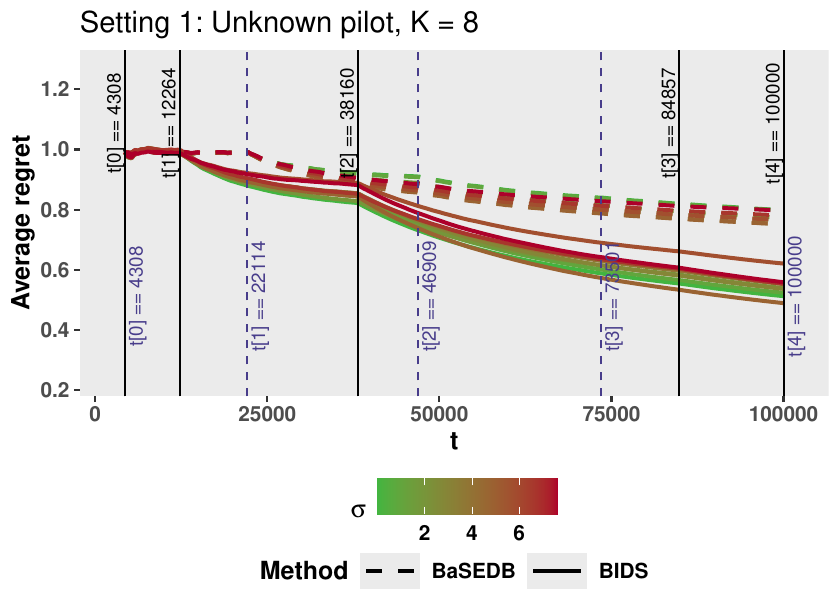}
\end{tabular}
\caption{Average regret curves for $K=3,5,8$ arms in Setting~1. 
Top row: known pilot direction with varying perturbation angles $\theta$. 
Bottom row: unknown pilot direction estimated from an initial batch under different noise levels $\sigma$. 
Solid curves correspond to the BIDS algorithm and dashed curves correspond to BaSEDB. Vertical lines denote batch endpoints.}
\label{fig:simulation_multiarm}
\end{figure}

\paragraph{Setting 2: competing nonlinear arms}
To create a more challenging configuration, we modify the design so that two arms contain nonlinear bump functions, while the remaining arms remain linear as shown in Figure~\ref{fig:Kgr2_rewardfuncs_setting2}. Specifically,
\[
f^{(1)}(x),\, f^{(2)}(x) \text{ are nonlinear bump functions,}
\qquad
f^{(k)}(x) = c + a_k(x-\eta_{\rm mid}), \quad k=3,\dots,K .
\]
This configuration introduces competition between nonlinear arms and therefore makes identification of the optimal arm more difficult.

\begin{figure}[htbp]
\centering
\begin{tabular}{ccc}
(a) & (b) & (c)\\
\includegraphics[width = 0.3\linewidth]{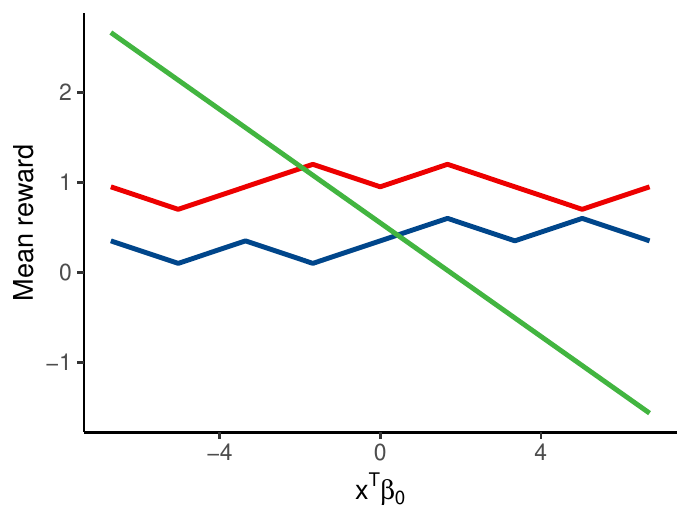} &
\includegraphics[width = 0.3\linewidth]{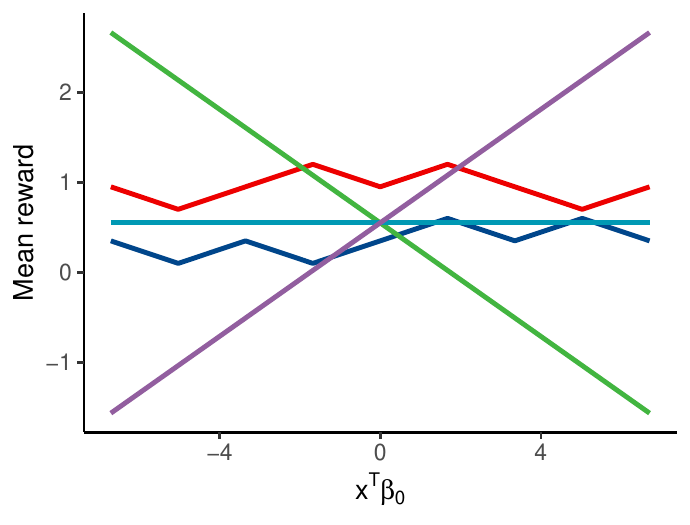} &
\includegraphics[width = 0.3\linewidth]{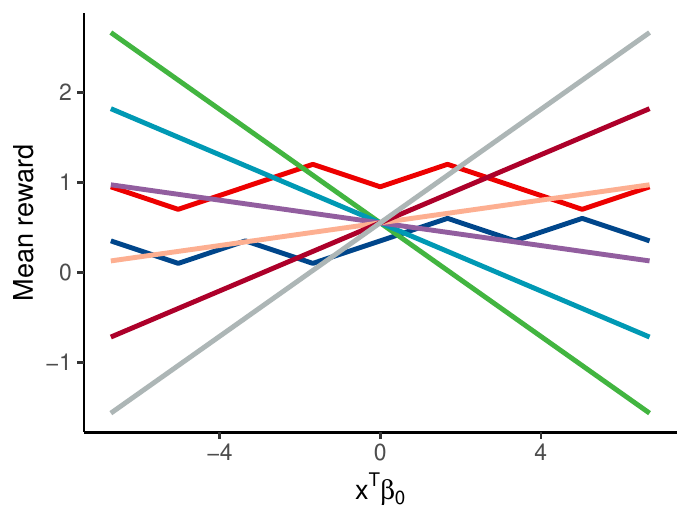}
\end{tabular}
\caption{Mean reward functions for $K=3,5,$ and $8$ arms in Setting~2 with two nonlinear bump arms.}
\label{fig:Kgr2_rewardfuncs_setting2}
\end{figure}

\begin{figure}[htbp]
\centering
\begin{tabular}{ccc}
(a) & (b) & (c)\\
\includegraphics[width=0.32\linewidth]{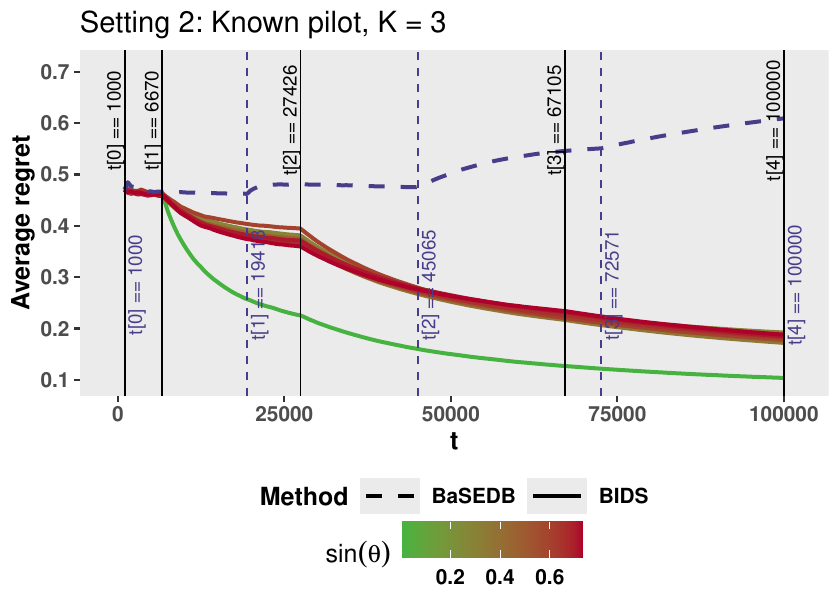} &
\includegraphics[width=0.32\linewidth]{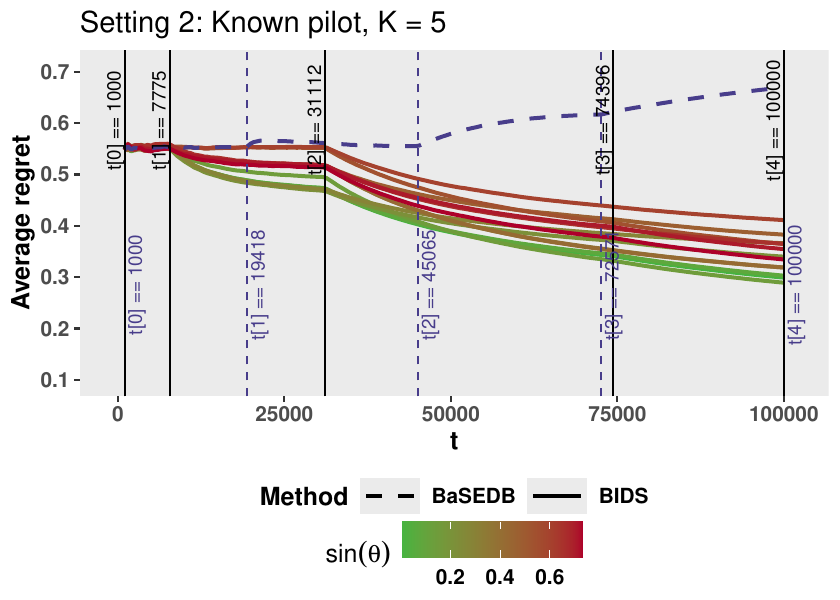} &
\includegraphics[width=0.32\linewidth]{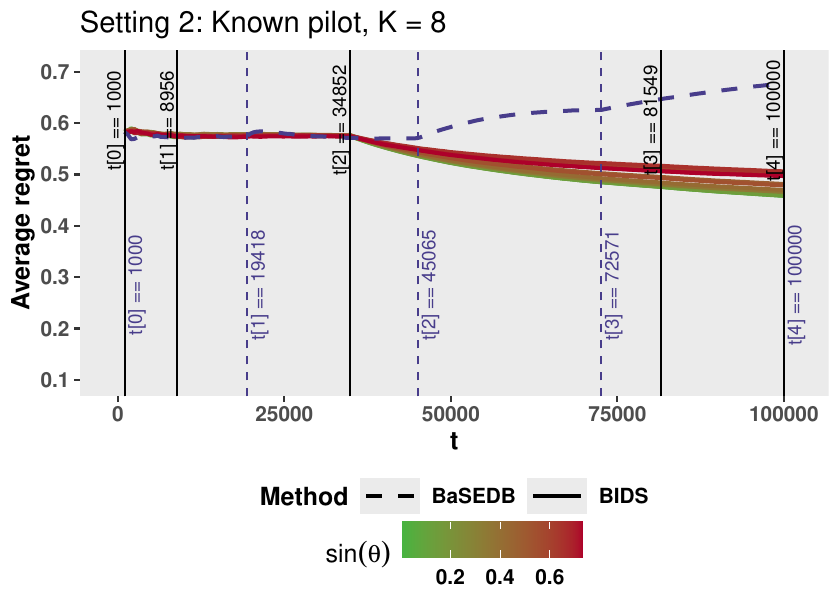}
\end{tabular}
\caption{Average regret curves for $K=3,5,8$ arms in Setting~2 with competing nonlinear arms.}
\label{fig:simulation_multiarm_setting2}
\end{figure}

\paragraph{Algorithm set-up}
The BIDS algorithm requires specifying the number of batches, the binning schedule, and the elimination thresholds. For $K>2$ arms, these quantities depend explicitly on $K$ through the theoretical scaling derived in Section~\ref{sec: regret_upper_bound}. In particular, both the batch sizes and the bin widths increase with $K$.

The nonparametric BaSEDB algorithm of \citesupp{jiang2025batched_supp} was originally proposed for the two-arm setting. In principle, one could attempt to apply the same $K$-dependent batching and binning schedules used by BIDS to BaSEDB. However, in practice this modification resulted in no arm eliminations even in the relatively favorable Setting~1. For this reason, in the experiments below we instead keep BaSEDB's original two-arm batching and binning schedule while allowing BIDS to use the $K$-dependent schedule dictated by our theory.

\paragraph{Results}
As observed in Figures~\ref{fig:simulation_multiarm} and \ref{fig:simulation_multiarm_setting2}, across both settings, the average regret curves of the BIDS algorithm lie below those of BaSEDB, indicating consistently improved performance relative to the nonparametric baseline. As expected, the regret increases as the number of arms grows, reflecting the increased difficulty of identifying the optimal arm when more alternatives are available.

For BIDS, the batch endpoints become more widely spaced for larger $K$, consistent with the theoretical scaling of the batch schedule. In the more challenging Setting~2 with competing nonlinear arms, we observe that BaSEDB fails to eliminate suboptimal arms when applied directly in the $K>2$ case, resulting in nearly flat regret curves. This behavior suggests that extending BaSEDB to multi-arm settings requires modifying its batching or elimination thresholds to account for the dependence on $K$.

{Overall, these results are in good agreement with the predicted 
$K$-dependence in Theorem~4.2; while the range $K \in \{3,5,8\}$ 
is limited, the observed trends are encouraging and consistent 
with the theoretical scaling, providing empirical support that the 
proposed single-index framework scales naturally beyond the two-arm 
case while preserving its learning behavior.}
}

\subsection{Additional real data results} \label{sec: additional_realdata}
\begin{figure}[htbp]
    \centering
\includegraphics[width = 0.7\linewidth]{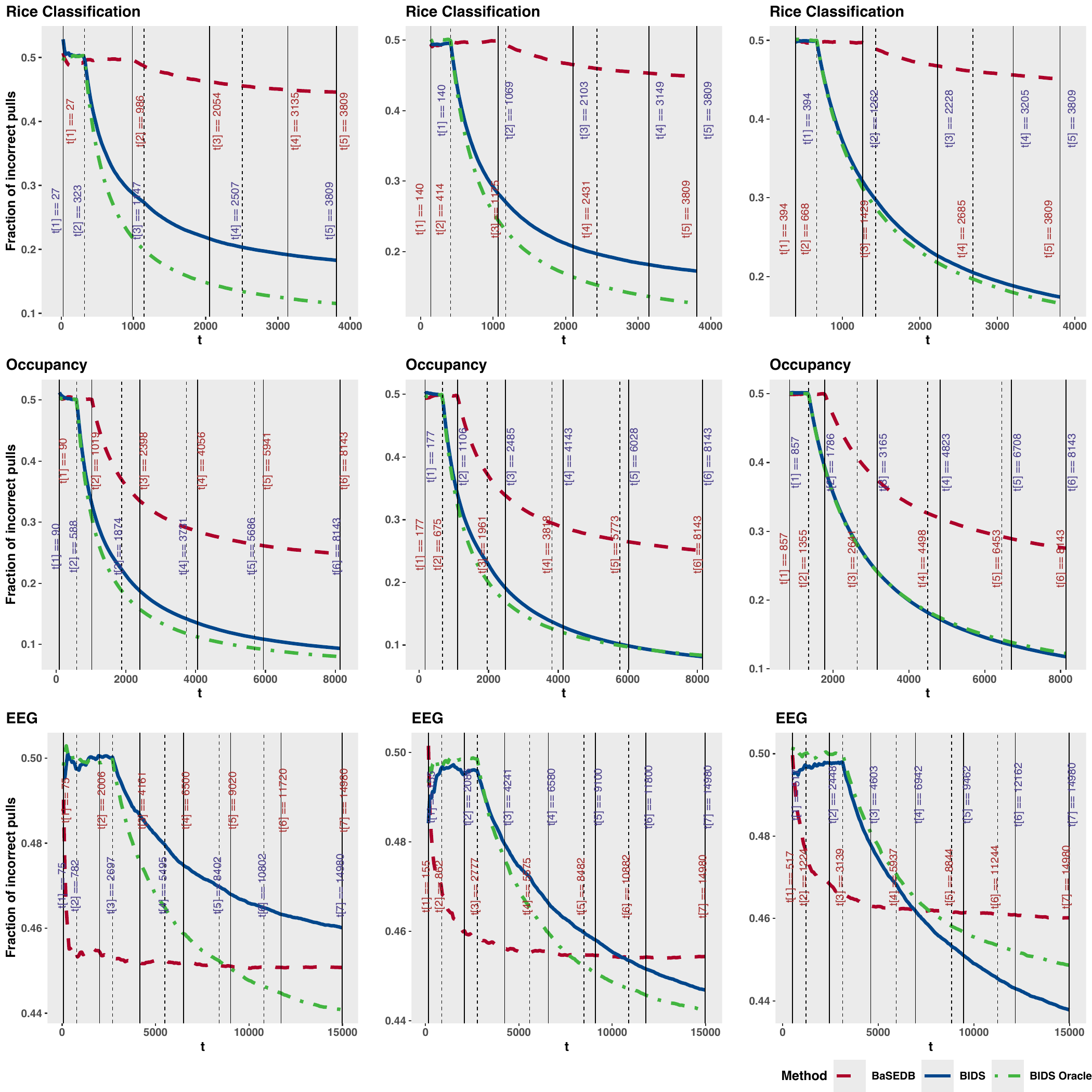}
    \caption{Comparison of expected regret of the proposed  BIDS algorithm and BaSEDB  on a) rice classification, b) occupancy detection, and c) EEG datasets with $\beta_0$ estimated in the initial phase with $t_1 = t_{\rm{init}}$ increasing as we go from left to right for the respective datasets. Vertical lines denote the batch markings for both the algorithms. Observe that the BIDS outperforms BaSEDB in all instances.}
    \label{fig:SIR_NP_ReadData_diffsamplesizes}
\end{figure}
We compare the performance of the BIDS algorithm and the BaSEDB algorithm of \citesupp{jiang2025batched_supp} when different initial batch sizes are used to estimate the direction $\beta_0$. We let $t_0 = 1$. In Figure \ref{fig:SIR_NP_ReadData_diffsamplesizes}, note that the columns denote increasing initial batch size $t_1  = t_{\rm{init}}$, as denoted by the labels on the first vertical lines in the plots. Vertical solid lines denote the batch end points for the GMABC framework as proposed in \eqref{eq: batch_size}, and the dashed lines denote the batch end points for the nonparametric batched bandits framework as suggested by \citesupp{jiang2025batched_supp}. Since the bin-widths depend on $d$ in nonparametric batched bandits, we see that the batch sizes are much larger than the corresponding GMABC setup where the bin-width does not depend on the number of covariates. \\
Similar to Section \ref{sec: 5_simulation}, we notice that BIDS outperforms BaSEDB algorithm, even though we do not know the true data generating mechanism in any of these datasets. While in the EEG dataset, for a small initial batch size ($t_{\rm{init}} = 75$), the BIDS algorithm incurred large regret in the beginning, the rate of decrease is much faster. We notice that as the initial sample size increases, the average regret for the BIDS algorithm gets closer to the oracle BIDS algorithm. In fact, the regret rate for the BIDS algorithm decreases even faster than that of the oracle BIDS algorithm. This may be because, as we incorporate more data to learn the direction, we estimate the direction for each arm separately before combining them using Algorithm \ref{algorithm: Initial_Dir_Estimation}. In contrast, the oracle direction utilizes the entire dataset to determine a single direction, which could correspond to a possibly mis-specified model.

\bibliographystylesupp{siamplain}
\bibliographysupp{refs_supp}
\fi


\end{document}